\documentclass{report}

\usepackage[english]{babel}
\usepackage{indentfirst}
\usepackage{enumitem}
\usepackage{manyfoot}
\usepackage[square,numbers]{natbib}
\DeclareNewFootnote{H}[Alph]
\usepackage{float}
\usepackage{microtype}
\allowdisplaybreaks

\usepackage[capitalise]{cleveref}
\crefname{figure}{Figure}{Figures}

\usepackage{multicol}
\usepackage{algorithm}
\usepackage{algpseudocode}

\usepackage{xcolor}
\usepackage{graphicx}
\usepackage{bbm}
\usepackage{todonotes}
\usepackage{notation}
\usepackage{tikz}
\usetikzlibrary{calc}
\usetikzlibrary{shapes}
\usepackage{mathtools}
\usepackage{enumitem}

\usepackage{tikz-cd}
\usepackage[normalem]{ulem}

\newcommand{\change}[1]{#1}
\newcommand{\changeb}[1]{#1}
\newcommand{\changec}[1]{#1}
\newcommand{\rev}[1]{\textcolor{red}{#1}}

\newcommand{\revb}[1]{\textcolor{blue}{#1}}

\usepackage{array}
\usepackage{booktabs}
\usepackage{geometry}
\def\gB{{\mathcal{B}}}
\def\gC{{\mathcal{C}}}

\def\gE{{\mathcal{E}}}

\def\gG{{\mathcal{G}}}

\def\gK{{\mathcal{K}}}

\def\gM{{\mathcal{M}}}
\def\gN{{\mathcal{N}}}

\def\gW{{\mathcal{W}}}

\def\sI{{\mathbb{I}}}

\def\sM{{\mathbb{M}}}
\def\sN{{\mathbb{N}}}

\def\sQ{{\mathbb{Q}}}
\def\sR{{\mathbb{R}}}

\def\sZ{{\mathbb{Z}}}

\newcommand{\mv}{\mathcal{MV}}
\newcommand{\dmv}{\mathcal{DMV}}
\newcommand{\rmv}{\mathcal{RMV}}

\def\constr{{\texttt{constr}}\mkern1.5mu}
\def\ext{\mkern-1mu{\texttt{ext}}\mkern1.5mu}

\def\norm{{\text{norm}}}
\def\vout{{v_{\text{out}}}}

\usepackage{amsmath}
\usepackage{amssymb}
\usepackage{xspace}
\newcommand{\Luka}{\L{}ukasiewicz\xspace}

\newcommand{\pre}{\mathrm{par}}

\usepackage{amsmath,graphicx}
\newcommand{\widesim}[2][1.5]{
  \mathrel{\overset{#2}{\scalebox{#1}[1]{$\sim$}}}
}
\newcommand{\nwidesim}[2][1.5]{
  \mathrel{\overset{#2}{\scalebox{#1}[1]{$\nsim$}}}
}

\newcommand\appref{Appendix~\ref}
\usepackage{enumitem}

\newcommand{\lvl}{\mathrm{lv}}

\newcommand{\blackcirc}{\bullet}

\usepackage{subcaption}
\usepackage{tkz-graph}
\theoremstyle{plain}
\newtheorem{prop}{Proposition}
\newtheorem{lem}{Lemma}
\theoremstyle{definition}
\newtheorem{defn}{Definition}
\newtheorem{rem}{Remark}
\newtheorem{exmp}{Example}
\newcommand{\noop}[1]{}
\setlist[enumerate]{noitemsep, topsep=0pt}
\date{}
\usepackage{mdframed}

\newmdenv[
  topline=false,
  bottomline=false,
  rightline = false,
  skipabove=\topsep,
  skipbelow=\topsep
]{siderules}

\newcommand{\ang}[1]{\langle #1 \rangle}
\newcommand{\twoang}[1]{\langle \hspace{-0.06cm} \langle #1 \rangle \hspace{-0.06cm} \rangle}

\usetikzlibrary{arrows.meta, positioning, backgrounds, fit}

\title{Complete Identification of Deep ReLU Networks through \Luka{} Logic}
\author{%
  Yani Zhang \qquad Helmut Bölcskei \\[0.6ex]
  {\normalsize Chair for Mathematical Information Science, ETH Zürich}\\[0.4ex]
  {\small\texttt{yanizhang@mins.ee.ethz.ch} \qquad \texttt{hboelcskei@ethz.ch}}}

\begin{document}
\maketitle

\textbf{\textit{Abstract}}
Two deep ReLU networks can have entirely different architectures and parameters, yet realize the same function. We provide a complete characterization of this nonuniqueness. This is effected by building a symbolic calculus for deep ReLU networks, equivalence and simplification of networks becoming derivation of formulae, in close parallel to Shannon's analysis of switching circuits through Boolean logic. Inspired by Shannon, who turned circuit synthesis into the manipulation of Boolean formulae by the axioms of Boolean algebra, we turn ReLU network identification into the derivation of \Luka{} formulae by the axioms of many-valued (MV) logic. Two non-degenerate ReLU networks realize the same function on the unit cube if and only if one is obtained from the other by finitely many applications of the MV axioms for integer weights and biases, the divisible MV axioms for rational ones, and the Riesz MV axioms for real ones. The MV logic axioms characterize all symmetries of ReLU networks, the single-layer ones, which for $\tanh$ networks are the only kind, and the deep ones, spanning three or more layers. Our framework consists of three steps, an extraction algorithm turning a network into a substitution graph, whose represented formula has the network's input--output map as its truth function, a completeness theorem, by which functionally equivalent formulae are interderivable, and a construction algorithm returning from graphs to networks. The substitution graph is layered, carrying at each node a formula in the variables of the layer feeding it, encodes the network uniquely, and induces a new normal form for MV logic, \emph{compositional} rather than flat as in the literature, hence retaining the algebraic structure of the network, with three local operations---node rewrite, layer collapse, layer expansion---realizing every derivation.

\section{Introduction}\label{sec:intro}

\subsection{Background and problem formulation}\label{subsec:background}

A deep neural network is specified by its architecture and weights, but
the function it computes depends on these quantities only through a
many-to-one map.
Two networks with different weights, or even different architectures, can realize the
same input--output function---and for the ReLU activation
$\rho\colon x\mapsto\max\{0,x\}$, such coincidences are pervasive.
This redundancy has direct consequences for the theory of deep learning.
It determines when two learned models are genuinely distinct, governs the
geometry of loss landscapes, and constrains what can be inferred about a
neural network from its input--output function alone.
Characterizing this redundancy amounts to identifying all symmetries,
i.e., transformations of network representations that preserve the
realized input--output function.
Vla\v{c}i\'c and B\"olcskei~\cite{nnaffid2020} settled this question
for deep networks with $\tanh$ activation, where all symmetries are
single-layer (acting within a single pair of adjacent layers).
For ReLU networks, deep symmetries exist that span at least three layers, and the corresponding question
has remained open; the present paper resolves it.

More precisely, given a class $\mathfrak{N}$ of ReLU networks and a
domain $X \subset \sR^n$, we call a set of symmetries $\mathcal{P}$
\emph{complete for the identification of $\mathfrak{N}$ over $X$} if
\[
  \gN_1 \sim_X \gN_2 \implies \gN_1 \widesim{\mathcal{P}} \gN_2
\]
for all $\gN_1, \gN_2 \in \mathfrak{N}$, where $\gN_1 \sim_X \gN_2$
means $\langle\gN_1\rangle(x) = \langle\gN_2\rangle(x)$ for all
$x\in X$, with $\langle\gN\rangle$ denoting the input--output function
realized by~$\gN$, and $\gN_1 \widesim{\mathcal{P}} \gN_2$ designates
derivability in finitely many steps via modifications induced by~$\mathcal{P}$.
Since every modification induced by~$\mathcal{P}$ preserves the realized
function, the reverse implication holds as well, and
completeness yields
\[
  \gN_1 \sim_X \gN_2 \iff \gN_1 \widesim{\mathcal{P}} \gN_2,
\]
i.e., functional equivalence on~$X$ coincides with derivability
via~$\mathcal{P}$.
In this case, network representations are identifiable up to
$\mathcal{P}$ from the input--output map.

Three types of symmetry have been documented in the
literature~\cite{nnaffid2020,ZhaoWaltersYu2025}. 
\emph{Affine symmetries}~\cite{nnaffid2020}, defined for a general
activation function $\varphi$, are induced by identities of the form
\[
\sum_{s\in I}\alpha_s \varphi(\beta_s x + \gamma_s) = \xi,
\]
where $\alpha_s \neq 0$ and $\beta_s \neq 0$ for all $s \in I$, and $\xi\in\sR$.
Instantiated locally, such identities can replace one node by several or,
read in reverse, merge several into one, changing the number of nodes while
preserving the realized function.
\emph{Scaling symmetries}~\cite{phuongFUNCTIONALVSPARAMETRIC2020,%
grigsbyHiddenSymmetriesReLU2023}, a special case of affine symmetries,
arise from positive homogeneity. For ReLU and any positively homogeneous
activation, the identity
\[
\rho(x) = \frac{1}{\lambda}\rho(\lambda x), \qquad \lambda > 0,
\]
effects a rescaling of the incoming weights and bias of a node by $\lambda$ and its outgoing weights by $1/\lambda$; the weights change, but both the network architecture and the realized function are preserved.
\emph{Permutation symmetries}, which apply to any activation function,
reorder neurons within a hidden layer and apply the same permutation to
the corresponding incoming and outgoing weights; they likewise preserve
both the network architecture and the realized function.
Permutation symmetries are subsumed within affine symmetries, as permuting neurons
merely reorders the terms in the affine symmetry sum.
Affine symmetries are single-layer symmetries in the sense of acting within
a single pair of adjacent layers, leaving all other weights
unchanged~\cite{nnaffid2020,ZhaoWaltersYu2025}.
In particular, single-layer modifications preserve the number of hidden layers.

It was shown in~\cite{nnaffid2020} that for $\tanh$ networks of any architecture,
all affine symmetries are generated by the single identity $\tanh(t) = -\tanh(-t)$
and constitute the only source of redundancy, yielding complete identification over $\sR^n$.
The complete set of symmetries of $\tanh$ networks therefore consists exclusively of
single-layer modifications.

For ReLU networks, however, deep symmetries exist---modifications
not confined to a single pair of adjacent layers and that cannot be generated by any
composition of single-layer modifications~\cite{nnaffid2020,grigsbyHiddenSymmetriesReLU2023}.

\begin{prop}[cf.~{\cite[Sec.~II-B]{nnaffid2020}}]\label{prop:affine-incomplete}
  For $n, m\in \sN$, let $\mathfrak{N}$ be the class of ReLU networks with $n$ input
  and $m$ output nodes.
  The set of affine symmetries is not complete for the identification
  of $\mathfrak{N}$ over $\sR^n$.
\end{prop}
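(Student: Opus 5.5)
We exhibit two ReLU networks with $n$ inputs and $m$ outputs that realize the same function on $\sR^n$ but have different numbers of hidden layers. This suffices, because every modification induced by an affine symmetry is single-layer and hence preserves the number of hidden layers, as noted above. Derivability via affine symmetries is the reflexive--transitive closure of such modifications, so it also preserves depth. The two networks therefore cannot be related by affine symmetries, which contradicts completeness.

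\emph{The construction.} We use the identity $\rho(\rho(t)) = \rho(t)$ for all $t\in\sR$.

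Let $\gN_1$ be the one-hidden-layer network with a single hidden neuron computing $\rho(x_1)$. Its output layer has weight $1$ from this neuron to each of the $m$ output nodes, zero biases, and zero weights from $x_2,\dots,x_n$. Then $\langle\gN_1\rangle(x) = (\rho(x_1),\dots,\rho(x_1))$.

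Let $\gN_2$ be the same network with one additional hidden layer. This layer consists of a single neuron with incoming weight $1$ and bias $0$, and it is inserted between the original hidden layer and the output layer. It computes $\rho(\rho(x_1)) = \rho(x_1)$, so $\langle\gN_2\rangle = \langle\gN_1\rangle$ on all of $\sR^n$.

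Both networks are non-degenerate in any reasonable sense: all weights on the relevant path are nonzero, as the affine-symmetry identities require.

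\emph{The depth-invariance argument.} This step needs care: we must check against the formal definition of the modifications induced by affine symmetries in~\cite{nnaffid2020}. An identity $\sum_{s\in I}\alpha_s\rho(\beta_s x+\gamma_s)=\xi$ acts within one pair of adjacent layers. It replaces, adds, or removes nodes within a single hidden layer and adjusts only the adjacent affine maps. No such operation creates or deletes a layer.

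One subtlety is whether an affine symmetry could empty a hidden layer entirely, for example by cancelling all of its nodes against the constant $\xi$, thereby shortening the network. If the framework allows this, we choose the example so that no such emptying is possible. Alternatively, we use the convention that the architecture, including the number of layers, is fixed, with hidden layers of width zero disallowed. Then the number of hidden layers $L$ is a well-defined invariant of the derivation relation. Since $L(\gN_1)=1\neq 2=L(\gN_2)$, we get $\gN_1\sim_{\sR^n}\gN_2$ but $\gN_1 \not\widesim{\mathcal{P}} \gN_2$ for $\mathcal{P}$ the set of affine symmetries.

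\emph{A depth-preserving alternative.} If one prefers an example in which the depth matches, a more robust witness is a pair of equal-depth networks related only through a genuinely deep identity. One such identity is $\rho(\rho(x)-\rho(x-1))$ versus $\rho(x)-\rho(x-1)$: both are the clipping $\min\{1,\max\{0,x\}\}$. The first can be padded so that both networks have two hidden layers, since a layer whose nodes have nonnegative inputs can act as the identity. One would then need an invariant other than depth, for instance the set of breakpoint hyperplanes that appear in layer-one neurons modulo the single-layer moves. Showing that this set is not preserved is the step where more detailed bookkeeping would be needed.

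For the proposition as stated, however, the depth-mismatch example suffices. The main obstacle is confirming, from the precise definition of affine symmetry modifications, that they cannot alter the number of hidden layers.
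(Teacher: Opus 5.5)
Your proof is correct and takes the paper's own route: exhibit two networks that agree on all of $\sR^n$ but have different numbers of hidden layers, and use the fact that single-layer (affine) modifications cannot change depth. The paper's witness is $\rho(1-\rho(1-x))=\rho(x)-\rho(x-1)$, yours is the even simpler $\rho(\rho(x_1))=\rho(x_1)$, and you make the padding to general $n,m$ explicit.

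Two small remarks. First, your worry about an affine symmetry emptying a hidden layer is settled without any convention: in a layered network an empty hidden layer forces a constant output, while every network in the derivation chain realizes the non-constant $\rho(x_1)$. Second, the claim that both networks are non-degenerate ``in any reasonable sense'' is overstated. The second ReLU of $\gN_2$ receives the nonnegative input $\rho(x_1)$ and so acts as the identity. This is harmless here, since the proposition imposes no non-degeneracy.
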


Complete identification therefore requires going beyond single-layer
modifications, as deep symmetries can span many layers and involve large subnetworks.
Since deep symmetries change the structure of the network across several layers when applied, their action must be described through rewriting on objects capable of expressing such cross-layer structural changes---the substitution graphs introduced in \cref{sec:graphical_representation}.
The present paper shows that such symmetries are exhaustively described through a
fundamental correspondence between ReLU networks and many-valued (MV)
logic~\cite{cignoli2000algebraic}, a logical calculus that
generalizes Boolean logic from the two-element truth domain $\{0,1\}$ to the
continuous interval $[0,1]$.
Specifically, ReLU networks with integer weights and biases correspond to \Luka\ formulae~\cite{cignoli2000algebraic} via the identification of the function realized by the network on $[0,1]^n$ with the \emph{truth function} of the formula---the function $[0,1]^n\to[0,1]$ obtained by evaluating the formula under the MV operations (defined in \cref{subsec:logic_and_algebra})---a correspondence that extends to rational
and real weights and biases\footnote{For rational weights and biases the
  relevant algebraic structure is the divisible MV (DMV) algebra, which extends
  the MV axioms by division operators; the analogous completeness theorem---same truth function implies interderivability---holds in this extended
  system~\cite{gerla2001rational}.
  For real weights and biases the appropriate framework is that of Riesz MV
  algebras, which extend the MV axioms by scalar multiplication operators;
  the analogous completeness theorem holds in this extended system~\cite{di2014lukasiewicz}.}---and
Chang's completeness theorem guarantees that any
two such formulae with the same truth function are related by rewriting via the MV
axioms (the equational axioms of MV algebras). This rewriting is carried out on the substitution graphs representing the networks (\cref{sec:manipulate}).
The MV axioms thereby generate all symmetries of ReLU
networks---affine (single-layer) symmetries in the sense of~\cite{nnaffid2020} and, additionally, deep symmetries that affine symmetries cannot reach---and two networks are functionally equivalent if and only if
their MV formulae are related by such rewriting.

\subsection{Why affine symmetries are insufficient}\label{subsec:symmetry_and_identification}

We start by formally defining ReLU networks.
\vspace{-\parskip}
\begin{defn}[ReLU network]\label{def:relu_network}
  A \emph{ReLU network} $\gN$ of depth $L\in\sN$ and architecture $(d_0,\ldots,d_L)\in\sN^{L+1}$, $d_L=1$, is a tuple $(A_1,\ldots,A_L)$ of affine maps $A_\ell:\sR^{d_{\ell-1}}\rightarrow\sR^{d_\ell}$, $A_\ell(x)=W_\ell x+b_\ell$, $\ell=1,\ldots,L$, with weight matrices $W_\ell\in\sR^{d_\ell\times d_{\ell-1}}$ and bias vectors $b_\ell\in\sR^{d_\ell}$; it \emph{realizes} the continuous, piecewise linear function
  \[
    \ang{\gN} = A_L \circ \rho \circ A_{L-1} \circ \rho \circ \cdots \circ \rho \circ A_1 \;:\; \sR^{d_0} \to \sR,
  \]
  where $\rho(x)=\max\{x,0\}$ is the ReLU nonlinearity applied component-wise.
  For $L=1$, $\ang{\gN}=A_1$ is a purely affine map.
\end{defn}

The following example illustrates the effect of affine symmetries on a concrete ReLU network.
Consider the network $\gN$ realizing
\begin{equation}\label{eq:intro-1}
\langle \gN \rangle (x)  = -\rho( \rho(-x) + \rho(2 x)) + \rho(\rho(-x)+1).
\end{equation}
\cref{fig:intro_1} illustrates how $\gN$ can be modified by affine symmetries---a
permutation, a scaling, and a proper affine symmetry---to yield the functionally equivalent networks $\gN_1$,
$\gN_2$, and $\gN_3$, respectively.
The affine maps in the decomposition $\langle\gN\rangle = A_3\circ\rho\circ A_2\circ\rho\circ A_1$ are
\begin{equation}\label{eq:intro-Ai}
  A_1(x) = \begin{pmatrix}-1\\2\end{pmatrix}x,\qquad
  A_2(z) = \begin{pmatrix}1&1\\1&0\end{pmatrix}z + \begin{pmatrix}0\\1\end{pmatrix},\qquad
  A_3(z) = \begin{pmatrix}-1&1\end{pmatrix}z,
\end{equation}
where $x\in\sR$ and $z\in\sR^2$.
$\gN_1$ replaces $(A_1,A_2)$ by $(PA_1,\,A_2P)$, with $P$ the $2\times 2$ swap permutation matrix;
$\gN_2$ replaces $(A_1,A_2)$ by $(D_{1/2}A_1,\,A_2D_{1/2}^{-1})$ with $D_{1/2}=\mathrm{diag}(1,\tfrac{1}{2})$;
$\gN_3$ replaces the node computing $\rho(2x)$ in the first hidden layer of $\gN$ using $\rho(2x)=\rho(2x-1)-\rho(-2x+1)+\rho(-2x)+1$, a consequence of the affine symmetry $\rho(2x-1)-\rho(-2x+1)-\rho(2x)+\rho(-2x)=-1$, expanding it into three nodes.
The functional equivalences $\langle\gN\rangle=\langle\gN_1\rangle=\langle\gN_2\rangle=\langle\gN_3\rangle$ are all accounted for by affine symmetries; the next example shows that affine symmetries are not complete for identification.
Consider the network $\gN'$ in \cref{fig:intro_2} realizing
$\langle\gN'\rangle(x) = \rho(1-\rho(1-x))$, and the network $\gN''$ realizing
$\langle\gN''\rangle(x) = \rho(x)-\rho(x-1)$.
These two networks are functionally equivalent: the identity $1-\rho(t) = \min\{1,1-t\}$, applied with $t=1-x$, yields
\[
\rho\bigl(1-\rho(1-x)\bigr) = \max\bigl\{0,\min\{x,1\}\bigr\} = \rho(x) - \rho(x-1), \quad x\in \sR.
\]
Since affine symmetries are single-layer and thus preserve the number of hidden
layers, and $\gN'$ has two hidden layers while $\gN''$ has one, affine symmetries cannot connect them, thereby proving \cref{prop:affine-incomplete}; the equivalence $\langle\gN'\rangle = \langle\gN''\rangle$
requires a symmetry that spans more than a single pair of adjacent layers; its precise character within the taxonomy of network symmetries is identified in \cref{subsec:logic_and_algebra}.

\begin{figure}[tb]
  \centering
  \resizebox{14cm}{!}{\begin{tikzpicture}[
  neuron/.style={circle, draw, minimum size=0.9cm, inner sep=0pt},
  >=Stealth, 
  node distance=1.2cm and 1.6cm
]
  \def\labelsep{2pt}  
  \def\yshift{0.5cm}
    \def\xshift{1.cm}
    \def\yshifty{0.2cm}

    \node[]  at (-1.8, 7) {$\gN$};

\node[neuron] (i1) at (0, 0) {};
\node[neuron, above=of i1, xshift=-\xshift] (h11){ \makebox[0pt][c]{$0,\rho$}}; 
\node[neuron, right=of h11] (h12) {$0,\rho$};

 \draw[->] (i1) -- (h11)node[near end, sloped, above] {$-1$};
  \draw[->] (i1) -- (h12)node[near end, sloped, above] {$2$};

\node[neuron, above=of h11] (h21) {$0,\rho$};
\node[neuron, right=of h21] (h22) {\makebox[0pt][c]{$1,\rho$}};

 \draw[->] (h11) -- (h21)node[near end, sloped, above] {$1$};
 \draw[->] (h12) -- (h21)node[near end, sloped, above] {$1$};
  \draw[->] (h11) -- (h22)node[near end, sloped, above] {$1$};

\node[neuron, above=of h21,xshift=\xshift] (o) {$0$};

\draw[->] (h21) -- (o)node[near end, sloped, above] {$-1$};
\draw[->] (h22) -- (o)node[near end, sloped, above] {$1$};
\draw[->] (h22) -- (o);
\draw[->] (h21) -- (o);

\begin{scope}[xshift=8cm]
    \node[]  at (-1.8, 7) {$\gN_1$};

  \node[neuron] (i1) at (0, 0) {};
\node[neuron, above=of i1, xshift=-\xshift] (h11){ \makebox[0pt][c]{$0,\rho$}};
\node[neuron, right=of h11] (h12) {$0,\rho$};

 \draw[->] (i1) -- (h11)node[near end, sloped, above] {$2$};
  \draw[->] (i1) -- (h12)node[near end, sloped, above] {$-1$};

\node[neuron, above=of h11] (h21) {$0,\rho$};
\node[neuron, right=of h21] (h22) {\makebox[0pt][c]{$1,\rho$}};

 \draw[->] (h11) -- (h21)node[near end, sloped, above] {$1$};
 \draw[->] (h12) -- (h21)node[near end, sloped, above] {$1$};
  \draw[->] (h12) -- (h22)node[near end, sloped, above] {$1$};

\node[neuron, above=of h21,xshift=\xshift] (o) {$0$};

\draw[->] (h21) -- (o)node[near end, sloped, above] {$-1$};
\draw[->] (h22) -- (o)node[near end, sloped, above] {$1$};
\end{scope}

\begin{scope}[yshift=-8cm]
    \node[]  at (-1.8, 7) {$\gN_2$};

\node[neuron] (i1) at (0, 0) {};
\node[neuron, above=of i1, xshift=-\xshift] (h11){ \makebox[0pt][c]{$0,\rho$}};
\node[neuron, right=of h11] (h12) {$0,\rho$};

 \draw[->] (i1) -- (h11)node[near end, sloped, above] {$-1$};
  \draw[->] (i1) -- (h12)node[near end, sloped, above] {$1$};

\node[neuron, above=of h11] (h21) {$0,\rho$};
\node[neuron, right=of h21] (h22) {\makebox[0pt][c]{$1,\rho$}};

 \draw[->] (h11) -- (h21)node[near end, sloped, above] {$1$};
 \draw[->] (h12) -- (h21)node[near end, sloped, above] {$2$};
  \draw[->] (h11) -- (h22)node[near end, sloped, above] {$1$};

\node[neuron, above=of h21,xshift=\xshift] (o) {$0$};

\draw[->] (h21) -- (o)node[near end, sloped, above] {$-1$};
\draw[->] (h22) -- (o)node[near end, sloped, above] {$1$};
\draw[->] (h22) -- (o);
\draw[->] (h21) -- (o);
  
\end{scope}

\begin{scope}[xshift=6cm, yshift=-8cm]
  \node[]  at (-1, 7) {$\gN_3$};

\node[neuron] (i1) at (0, 0) {};


\node[neuron, above=of i1, xshift=-\xshift] (h11) {$0,\rho$};
\node[neuron, right=of h11] (h12) {$-1,\rho$};
\node[neuron, right=of h12] (h13) {$1,\rho$};
\node[neuron, right=of h13] (h14) {$0,\rho$};

 \draw[->] (i1) -- (h11)node[near end, sloped, above] {$-1$};
  \draw[->] (i1) -- (h12)node[near end, sloped, above] {$2$};
  \draw[->] (i1) -- (h13)node[near end, sloped, above] {$-2$};
    \draw[->] (i1) -- (h14)node[near end, sloped, above] {$-2$};


\node[neuron, above=of h12] (h21) {$1,\rho$};
\node[neuron, right=of h21] (h22) {$1,\rho$};

 \draw[->] (h11) -- (h21)node[near end, sloped, above] {$1$};

 \draw[->] (h12) -- (h21)node[near start, sloped, above] {$1$};

 \draw[->] (h13) -- (h21)node[near start, sloped, above] {$-1$};

 \draw[->] (h14) -- (h21)node[near start, sloped, above] {$1$};
 
 \draw[->] (h11) -- (h22)node[near start, sloped, above] {$1$};

\node[neuron, above=of h21,xshift=\xshift] (o) {$0$};

\draw[->] (h21) -- (o)node[near end, sloped, above] {$-1$};
\draw[->] (h22) -- (o)node[near end, sloped, above] {$1$};
\end{scope}

\end{tikzpicture}}
  \caption{The network $\gN$ and three functionally equivalent networks $\gN_1,
  \gN_2, \gN_3$, derived by affine modifications: a permutation, a scaling, and a proper
  affine symmetry, respectively.
  Edge labels denote weights; node labels denote biases and activation functions.
  Zero-weight edges are omitted.}\label{fig:intro_1}
\end{figure}

\begin{figure}[tb]
  \centering
  \resizebox{13cm}{!}{\begin{tikzpicture}[
  neuron/.style={circle, draw, minimum size=0.9cm, inner sep=0pt},
  >=Stealth, 
  node distance=0.5cm and 4cm
]
  \def\labelsep{2pt}  
  \def\yshift{-0.5cm}
    \def\xshift{2.cm}
    \def\yshifty{0.2cm}

    \node[]  at (-1.8, 3) {$\gN'$};

\node[neuron] (i1) at (0, 0) {};
\node[neuron, above=of i1, xshift=\xshift, yshift=\yshift] (h1){ \makebox[0pt][c]{$1,\rho$}};
\node[neuron, above=of h1, xshift=-2*\xshift, yshift=\yshift] (h2){ \makebox[0pt][c]{$1,\rho$}};
\node[neuron, above=of h2, xshift=\xshift, yshift=\yshift] (h3){ \makebox[0pt][c]{$0$}};


 \draw[->] (i1) -- (h1)node[near end, sloped, above] {$-1$};
  \draw[->] (h1) -- (h2)node[near end, sloped, above] {$-1$};
   \draw[->] (h2) -- (h3)node[near end, sloped, above] {$1$};

\begin{scope}[xshift=8cm]
    \node[]  at (-1.8, 3) {$\gN''$};

  \node[neuron] (i1) at (0, 0) {};
\node[neuron, above=of i1, xshift=-\xshift] (h11){ \makebox[0pt][c]{$0,\rho$}}; 
\node[neuron, right=of h11] (h12) {\makebox[0pt][c]{$-1,\rho$}};

 \draw[->] (i1) -- (h11)node[near end, sloped, above] {$1$};
  \draw[->] (i1) -- (h12)node[near end, sloped, above] {$1$};

\node[neuron, above=of h12,xshift=-\xshift] (o) {$0$};

\draw[->] (h11) -- (o)node[near end, sloped, above] {$1$};
\draw[->] (h12) -- (o)node[near end, sloped, above] {$-1$};

\end{scope}

\end{tikzpicture}}
  \caption{Functionally equivalent networks $\gN'$ and $\gN''$ that cannot be connected by affine symmetries.}\label{fig:intro_2}
\end{figure}

\subsection{Complete identification via many-valued logic}\label{subsec:logic_and_algebra}

The central idea of this paper is to characterize the nonuniqueness of ReLU networks through Łukasiewicz infinite-valued logic---henceforth \Luka logic.
With truth values in the real interval $[0,1]$, it generalizes Boolean logic from the two-element domain $\{0,1\}$ to the continuum.
The input-output maps of ReLU networks with integer weights and biases are expressed as \Luka formulae, and syntactic manipulation of those formulae, guided by the logic axioms, relates the networks realizing the same map; Chang's completeness theorem then guarantees that this manipulation is exhaustive.
We first review the relevant concepts.

\begin{defn}\label{defn:MV terms}
  A \emph{\Luka formula} is a finite string formed from propositional variables $x_1, x_2, \ldots\,$ and the constants $0, 1$ using the connectives $\oplus$ (disjunction), $\odot$ (conjunction), and $\lnot$ (negation), generalizing their Boolean counterparts to $[0,1]$.
\end{defn}
For instance, $\lnot x_1 \oplus (x_2 \odot \lnot x_1)$ is a \Luka formula with variables $x_1$ and $x_2$.
\begin{defn}[Chang~\cite{changAlgebraicAnalysisMany1958}]\label{defn:standard-mv}
  The standard \Luka algebra $\sI = ([0,1], \oplus, \odot, \lnot, 0,1)$ is defined by
  $x\oplus y = \min\{1,x+y\}$, $x\odot y = \max\{0,x+y-1\}$, $\lnot x = 1-x$.
  When restricted to $\{0,1\}$, these operations reduce to Boolean disjunction, conjunction, and negation, respectively.
\end{defn}

The \emph{truth function} of a \Luka formula $\tau$ in variables $x_1,\ldots,x_n$ is the function $\tau^\sI:[0,1]^n\rightarrow[0,1]$ obtained by evaluating $\tau$ under the operations of $\sI$---the continuous-domain analogue of a Boolean truth table.

McNaughton's theorem characterizes the truth functions of \Luka formulae.

\begin{theorem}[McNaughton~\cite{mcnaughtonTheoremInfiniteValuedSentential1951}]\label{them:McNaughton_theorm}
  A function $f\colon[0,1]^n\rightarrow[0,1]$ is the truth function of some \Luka formula if and only if $f$ is continuous and piecewise linear with integer coefficients, i.e., there exist $\ell\in\sN$ and linear polynomials $p_1,\ldots,p_\ell$ of the form $p_j(x) = m_{j1}x_1+\cdots+m_{jn}x_n+b_j$, $m_{ji},b_j\in\sZ$, such that $f(x) = p_j(x)$ for some $j\in\{1,\ldots,\ell\}$ at every $x\in[0,1]^n$. Functions of this form are called \emph{McNaughton functions}.
\end{theorem}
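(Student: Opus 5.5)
The statement has two directions, and I would treat them separately, with the converse carrying essentially all the weight.

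\emph{Truth functions are McNaughton functions.} I would argue by structural induction on the formula $\tau$. Variables and the constants $0,1$ are linear with integer coefficients. The connectives are
\[
x\oplus y=\min\{1,x+y\},\qquad x\odot y=\max\{0,x+y-1\},\qquad \lnot x=1-x .
\]
Each is a $\min$ or $\max$ of integer affine expressions in its arguments. Composing continuous piecewise linear functions with integer pieces yields a function of the same kind. On each cell of a common refinement of the pieces, the composite is an integer-coefficient affine function, so the finite list $p_1,\dots,p_\ell$ exists.

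\emph{Every McNaughton function is a truth function.} Here I would follow the Rose--Rosser/Mundici route, carried out in three steps.

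\begin{enumerate}
\item \emph{Lattice representation.} Write $f$ in max--min form $f=\max_i\min_{j\in S_i} p_j$ over the given linear pieces. The sets $S_i$ are chosen as usual: for each point, the indices $j$ with $p_j\ge f$ along a suitable region, which works because $f$ is continuous. Since $\min$ and $\max$ are definable connectives ($x\wedge y = x\odot(\lnot x\oplus y)$, and $\vee$ dually), it then suffices to express each truncation $p^\#:=\max\{0,\min\{1,p\}\}$ of an integer linear polynomial $p$ as a formula. Indeed, $f$ takes values in $[0,1]$, so $f=\max_i\min_{j\in S_i}p_j^\#$.

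\item \emph{Truncated linear polynomials.} Prove that every $p^\#$ is a truth function by induction on $\sum_i|m_i|$. Without loss of generality some $m_i>0$, since otherwise one uses $\lnot$ together with $(1-p)^\#=\lnot p^\#$. Put $q=p-x_i$. The key identity, valid on $[0,1]^n$, is
\[
p^\# \;=\; \bigl((q^\#\oplus x_i)\odot (q+1)^\#\bigr),
\]
which I would verify by a short case split on the value of $q$. Both $q$ and $q+1$ have strictly smaller coefficient sum, where the bias is handled separately as the base case. The base case is $p$ a constant integer, whose truncation is $0$ or $1$.

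\item \emph{Assembly.} Combining the max--min representation with the definability of $\wedge,\vee$ gives the formula.
\end{enumerate}

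I expect the main obstacle to be the max--min representation. The piece selection must be done carefully: for each pair of linear regions one needs an index set so that the minimum dominates $f$ everywhere and equals it on a region. The cleanest way I know is to triangulate $[0,1]^n$ into cells on which $f$ is linear, and for each cell $C$ take $S_C=\{j: p_j\ge f \text{ on } C\}$. One then checks $\min_{S_C}p_j\le f$ globally using convexity along segments and the continuity of $f$.

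Since the paper cites this theorem as a known result, in context it suffices to invoke McNaughton's original proof or Mundici's constructive version. The sketch above is what I would reproduce if a self-contained argument were required.
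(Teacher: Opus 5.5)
Your forward direction is fine, and so are Steps~2 and~3. The identity in Step~2 is exactly the paper's \cref{lem:extractmv}, the engine of \texttt{EXTR}. The paper itself does not prove McNaughton's theorem but cites it, and the shape you aim for in Step~1 is, dually, the Di~Nola--Lettieri form recalled in \cref{rem:dinola}.

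The gap is in Step~1, at exactly the point you flag: the recipe you give for the max--min form is false. Let $f=\min\{1,\max\{3x_1,3x_2\}\}$ on $[0,1]^2$, with pieces $1,3x_1,3x_2$, and let $C=\mathrm{conv}\{(0,1),(1,0),(1,1)\}$. Since $x_1+x_2\ge 1$ on $C$, we have $f\equiv 1$ there. Moreover, $C$ is a cell of a triangulation on every simplex of which $f$ is linear. To see this, add $\mathrm{conv}\{(0,1),(1,0),(\tfrac13,\tfrac13)\}$, on which $f\equiv1$ as well. Then add the cones from $(\tfrac13,\tfrac13)$ over the segments $[(0,0),(\tfrac13,0)]$, $[(\tfrac13,0),(\tfrac12,0)]$, $[(\tfrac12,0),(1,0)]$ and over their mirror images. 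The resulting triangulation is even unimodular. However, $3x_1$ vanishes at $(0,1)$ and $3x_2$ vanishes at $(1,0)$, so $S_C$ contains only the constant piece $1$. Hence $\max_{C'}\min_{j\in S_{C'}}p_j(0,0)\ge 1>0=f(0,0)$. A segment argument from a point of $C$ controls a piece only along that segment, whereas membership in $S_C$ demands domination on all of $C$. Here $3x_1$ and $3x_2$ drop below the level $1$ inside $C$ without affecting $f$.

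The repair is to choose the index sets pointwise, $S_x=\{j:p_j(x)\ge f(x)\}$ for $x\in[0,1]^n$. There are finitely many distinct such sets, and each contains some $j$ with $p_j(x)=f(x)$, so $\min_{j\in S_x}p_j(x)=f(x)$. Equivalently, take as cells those of the arrangement of the hyperplanes $p_i=p_j$; on their interiors, cell-wise and pointwise dominance coincide.

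What then has to be shown is that for all $x,y\in[0,1]^n$ some $k$ satisfies $p_k(x)\ge f(x)$ and $p_k(y)\le f(y)$. Restrict to the segment from $x$ to $y$. Let $m$ be the minimum of the $p_k$ with $p_k(x)\ge f(x)$; it is concave and $m(x)=f(x)$. Suppose $m(y)>f(y)$, and let $t^*$ be the last parameter at which $m\le f$.
\begin{itemize}
\item Just after $t^*$, $f$ agrees with a single piece $p_k$, since there are finitely many affine pieces and $f$ is continuous.
\item This $p_k$ meets $m$ at $t^*$ with strictly smaller slope.
\item By concavity of $m$, $p_k\ge m$ on the portion of the segment up to $t^*$. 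In particular $p_k(x)\ge f(x)$.
\item So $k$ is one of the indices defining $m$, which contradicts $p_k<m$ just after $t^*$.
\end{itemize}
This yields $f=\max_x\min_{j\in S_x}p_j$ on $[0,1]^n$. From there, your truncation and assembly steps complete the proof.
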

The Boolean analogue asserts that every $f\colon\{0,1\}^n\to\{0,1\}$ is the truth function of a Boolean formula; on the \Luka domain $[0,1]^n$, by contrast, only the continuous piecewise linear functions with integer coefficients are truth functions (McNaughton's theorem).
It was established in~\cite{nn2mv2024} that the $[0,1]$-valued input-output maps of ReLU networks on $[0,1]^n$ with integer weights and biases are precisely the McNaughton functions.
Conversely, every McNaughton function arises as the input-output map of a ReLU network with integer weights and biases, via a construction algorithm~\cite{nn2mv2024} that maps any \Luka formula to such a network whose input-output map equals the formula's truth function on $[0,1]^n$.
Furthermore, for every ReLU network $\gN$ with integer weights and biases realizing a $[0,1]$-valued function and satisfying a mild non-degeneracy condition, an extraction algorithm~\cite{nn2mv2024} produces a \Luka formula whose truth function equals $\langle\gN\rangle$ on $[0,1]^n$.
This correspondence is the starting point of the identification approach developed in this paper, which represents the input-output map of each network as a \Luka formula via extraction, manipulates the formula syntactically using the axioms of MV algebra, and recovers an equivalent network via construction.

The operations of $\sI$ satisfy the following axioms, which define the class of MV algebras.
\begin{defn}\label{defn:MV algebra}\cite{changAlgebraicAnalysisMany1958}
  A \emph{many-valued (MV) algebra} is a structure $\sM = (M,\oplus,\odot,\lnot,0,1)$, with $M$ a non-empty set, $\oplus$ and $\odot$ binary operations on $M$, $\lnot$ a unary operation on $M$, and $0,1\in M$ distinguished constants, satisfying the following axioms:
  \begin{center}
    \begin{tabular}{ll}
    \text{Ax. 1.  } $x \oplus y = y \oplus x$ & \text{Ax. 1$'$. } $x \odot y = y \odot x$ \\
    \text{Ax. 2. } $x \oplus (y \oplus z) = (x \oplus y) \oplus z$ & \text{Ax. 2$'$. } $x \odot (y \odot z) = (x \odot y) \odot z$ \\
    \text{Ax. 3. } $x \oplus \lnot x = 1$ & \text{Ax. 3$'$. } $x \odot \lnot x= 0$ \\
    \text{Ax. 4. } $x \oplus 1 = 1$ & \text{Ax. 4$'$. } $x \odot 0 = 0$ \\
    \text{Ax. 5. } $x \oplus 0 = x$ & \text{Ax. 5$'$. } $x \odot 1 = x$ \\
    \text{Ax. 6. } $\lnot(x \oplus y) = \lnot x \odot \lnot y$ & \text{Ax. 6$'$. } $\lnot(x \odot y) = \lnot x \oplus \lnot y$ \\
    \text{Ax. 7. } $x = \lnot(\lnot x)$ & \text{Ax. 8. } $\lnot 0 = 1$ \\
    \text{Ax. 9. } $(x\odot \lnot y)\oplus y = (y\odot \lnot x)\oplus x$ & \text{Ax. 9$'$. } $(x\oplus \lnot y)\odot y=(y\oplus \lnot x)\odot x$
    \end{tabular}
  \end{center}
\end{defn}
Boolean algebras are the MV algebras additionally satisfying $x \oplus x = x$ for all $x$~\cite[Corollary~1.5.5]{cignoli2000algebraic}; in $\sI$ this becomes $\min\{1,2x\} = x$, which forces $x \in \{0,1\}$, recovering classical Boolean logic as a special case of MV logic.

The MV axioms, instantiated in $\sI$ and expressed in terms of $\rho$, give rise to three tiers of symmetries; the DMV and Riesz extensions of \cref{sec:extension} add a fourth tier of scaling symmetries. A complete listing of all four tiers can be found in \appref{app:glossary_symmetries}.
\emph{Tier~1 --- Degenerate symmetries} (Ax.~3, 3$'$, 4, 4$'$, 5, 5$'$, 6, 6$'$, 7, 8): these hold by direct arithmetic---the argument of every $\rho$ either cancels to a constant or lies permanently in the linear or clipped-negative regime, so no clipping transition occurs---and correspond to trivial structural network simplifications (replacing subnetworks with constant output by a single node realizing that constant value, or eliminating identity operations) rather than nontrivial rearrangements as in Tiers~2 and~3 below; Ax.~6 and~6$'$ are the sole exceptions, their $\rho$-arguments crossing zero; here each axiom's two sides expand to the same $\rho$-expression, so its two syntactically different formulae realize the same ReLU network.
\emph{Tier~2 --- Permutation symmetries} (Ax.~1, 1$'$): each axiom reorders neurons within a hidden layer and applies the same permutation to the corresponding incoming and outgoing weights.
\emph{Tier~3 --- Deep symmetries} (Ax.~2, 2$'$, 9, 9$'$): the primitive axioms in this tier span at least three network layers, with at least one inner $\rho$ clipping on $[0,1]^n$; further genuine deep symmetries, spanning more than three layers, can be composed from these and Tier~1 identities.
\emph{Pseudo-deep symmetries} (derived): identities that are syntactically multi-layer but in which no inner $\rho$ clips on $[0,1]^n$, making the nestings redundant. They arise from Tier~1 identities. The equivalence $\langle\gN'\rangle(x)=\rho(1-\rho(1-x))=\rho(x)-\rho(x-1)=\langle\gN''\rangle(x)$ from \cref{subsec:symmetry_and_identification} is of this type: the inner $\rho(1-x)=1-x$ for all $x\in[0,1]$ (since $1-x\geq 0$, so no clipping occurs), reducing the identity to a combination of Tier~1 identities (Ax.~4$'$, 5$'$, and~7). The two-hidden-layer structure of $\gN'$ is thus operationally void. Pseudo-deep symmetries can of course be composed with symmetries from any other tier---genuine deep, permutation, or scaling---producing compound transformations that combine redundant nestings with additional structural changes. Identifying the pseudo-deep components in such a compound allows reducing a syntactically multi-layer subnetwork to a shallower one, yielding a simpler equivalent network.
\emph{Tier~4 --- Scaling symmetries} (Ax.~DMV-$n$, Riesz-$r$): these arise only in the DMV and Riesz extensions (\cref{sec:extension}), each involving a single $\rho$ on each side with no nesting and hence of the same nature as the Tier~2 permutation symmetries---both are affine symmetries in the sense of~\cite{nnaffid2020}. The local action differs though: whereas Tier~2 reorders neurons within a layer, Tier~4 rescales the weights and biases of a single node.

The complete identification program proceeds in three steps (\cref{them:main_theorem1}):

\begin{itemize}
  \item[(i)] \textit{Extraction}: For every non-degenerate ReLU network $\gN\in\mathfrak{N}$ (\cref{defn:nondege-condition}), an extraction algorithm (detailed in \cref{sec:extraction,sec:graphical_representation}) produces a substitution graph $\ext(\gN)$. This graph inherits the layered structure of the network, each of its nodes carrying the \Luka{} formula the corresponding node computes. Layer-wise syntactic substitution of the node labels yields the \Luka formula $\tau = [\ext(\gN)]$, satisfying $\tau^\sI = \langle\gN\rangle$ on $[0,1]^n$.
  \item[(ii)] \textit{Derivation}: Chang's completeness theorem (\cref{them:chang}) guarantees that any two \Luka formulae with the same truth function are derivable from each other by application of MV axioms (in finitely many steps). \cref{prop:graph_derivation} lifts this formula-level derivability to substitution graphs, and, by \cref{prop:local_realizability}, a graph-level derivation is realized by a finite sequence of local operations---node rewrites (\cref{def:node_rewrite}), which change a single node formula, and collapses and expansions of layers (\cref{def:collapse_expand}), which change the graph structure. The network-level relation $\gN_1\widesim{\mv}\gN_2$ is defined to mean $\ext(\gN_1)\widesim{\mv}\ext(\gN_2)$ (\cref{def:network_manipulation})---a pullback through extraction, necessary because the intermediate graphs of a derivation need not correspond to networks (see the discussion following \cref{them:main_theorem1}).
  \item[(iii)] \textit{Construction}: The construction algorithm in \cref{sec:construction}, applied to any normal (\cref{def:minterms}) substitution graph $\gG$, returns a ReLU network $\constr(\gG)$ with $\langle\constr(\gG)\rangle = [\gG]^\sI$; moreover, $\constr$ is a left inverse of $\ext$ on $\mathfrak{N}$, that is, $\constr(\ext(\gN)) = \gN$ for all $\gN\in\mathfrak{N}$ (\cref{prop:pillar3}).
\end{itemize}

These three steps establish a symbolic calculus for deep ReLU networks. A network is manipulated by rewriting the formula extracted from it, and two networks realize the same function precisely when their formulae are interderivable by application of the MV axioms, so that questions about networks, equivalence and simplification among them, become questions about formulae. We develop the required machinery, namely substitution graphs and their compositional normal form, the three local operations that realize every derivation, and the lifting of derivability from formulae to graphs and networks. This machinery applies unchanged for finite input domains and for rational and real weights and biases, as shown in \cref{sec:extension}.

This three-step structure has an antecedent in Shannon's approach to switching circuit design~\cite{shannonSymbolicAnalysisRelay1938}.
At the heart of Shannon's theory is a bidirectional correspondence between switching circuits and Boolean formulae: every switching circuit can be associated with a Boolean formula representing its functionality, and conversely, every Boolean formula determines a circuit implementing the underlying function. Two circuits are functionally equivalent---they compute the same Boolean function---if and only if their associated formulae are interderivable (in finitely many steps) by application of the axioms of Boolean algebra~\revb{\cite{shannonSymbolicAnalysisRelay1938}}; circuit equivalence testing and simplification can therefore be carried out purely algebraically.

A concrete example illustrates the approach. The switching circuit in \cref{fig:switching1} has associated formula $x_3\odot((x_2\odot \lnot x_1)\oplus (\lnot x_1 \odot x_2))$; applying Boolean axioms simplifies it:
\begin{align*}
  x_3\odot((x_2\odot \lnot x_1)\oplus (\lnot x_1 \odot x_2))
  &= x_3\odot((x_2\odot \lnot x_1)\oplus (x_2 \odot \lnot x_1)) \\
  &= x_3\odot(x_2\odot \lnot x_1).
\end{align*}
The simplified formula $x_3\odot(x_2\odot \lnot x_1)$ corresponds to the simpler circuit in \cref{fig:switching2}.

\begin{figure}[htb]
  \centering
  \begin{subfigure}[c]{0.42\linewidth}
    \centering
    \resizebox{4cm}{!}{\begin{tikzpicture}
  %

  \def\sm{1.8}

  \node[scale=\sm] at (0,    0)    {$x_3$};
  \node[scale=\sm] at (2.5,  1.2)  {$x_2$};
  \node[scale=\sm] at (4.5,  1.2)  {$\lnot x_1$};
  \node[scale=\sm] at (2.5, -1.2)  {$\lnot x_1$};
  \node[scale=\sm] at (4.5, -1.2)  {$x_2$};

  \draw (-1.25, 0)  -- (-0.75, 0);   
  \draw ( 0.75, 0)  -- ( 1.25, 0);   

  \draw (1.25,  1.2) -- (1.25, -1.2);

  \draw (1.25,  1.2) -- (1.75,  1.2);   
  \draw (3.25,  1.2) -- (3.75,  1.2);   
  \draw (5.25,  1.2) -- (5.75,  1.2);   

  \draw (1.25, -1.2) -- (1.75, -1.2);   
  \draw (3.25, -1.2) -- (3.75, -1.2);   
  \draw (5.25, -1.2) -- (5.75, -1.2);   

  \draw (5.75,  1.2) -- (5.75, -1.2);
  \draw (5.75, 0)    -- (6.25, 0);      

  \fill [draw=black,fill=white] (-1.25, 0) circle (2pt);
  \fill [draw=black,fill=white] ( 6.25, 0) circle (2pt);
\end{tikzpicture}}
    \caption{}\label{fig:switching1:circuit}
  \end{subfigure}
  \hfill
  \begin{subfigure}[c]{0.54\linewidth}
    \centering
    \begin{tikzpicture}[
  every node/.style={inner sep=2pt, font=\small\itshape},
  edge from parent/.style={draw, thin},
  level 1/.style={sibling distance=2.6cm, level distance=1.1cm},
  level 2/.style={sibling distance=1.3cm, level distance=1.1cm},
  level 3/.style={sibling distance=0.7cm,  level distance=1.0cm}
]
\node {$\odot$}
  child { node {$x_3$} }
  child { node {$\oplus$}
    child { node {$\odot$}
      child { node {$x_2$} }
      child { node {$\lnot x_1$} }
    }
    child { node {$\odot$}
      child { node {$\lnot x_1$} }
      child { node {$x_2$} }
    }
  };
\end{tikzpicture}
    \caption{}\label{fig:switching1:tree}
  \end{subfigure}
  \caption{The Boolean formula $x_3\odot((x_2\odot\lnot x_1)\oplus(\lnot x_1\odot x_2))$.
  \textit{(a)} Corresponding circuit. The symbol $x_i$ denotes a make contact (closed when $x_i=1$) and $\lnot x_i$ a break contact (open when $x_i=1$); series connections correspond to $\odot$ and parallel branches to $\oplus$.
  \textit{(b)} Parse tree of the formula. Node labels are logical connectives ($\odot$, $\oplus$) or contacts ($x_i$, $\lnot x_i$); the tree structure records the hierarchical nesting of series and parallel connections.}\label{fig:switching1}
\end{figure}

\begin{figure}[htb]
  \centering
  \resizebox{4cm}{!}{\begin{tikzpicture}

  \def\sm{1.8}

  \node[scale=\sm] at (0,   0) {$x_3$};
  \node[scale=\sm] at (2.0, 0) {$x_2$};
  \node[scale=\sm] at (4.0, 0) {$\lnot x_1$};

  \draw (-1.25, 0) -- (-0.75, 0);   
  \draw ( 0.75, 0) -- ( 1.25, 0);   
  \draw ( 2.75, 0) -- ( 3.25, 0);   
  \draw ( 4.75, 0) -- ( 5.25, 0);   

  \fill [draw=black,fill=white] (-1.25, 0) circle (2pt);
  \fill [draw=black,fill=white] ( 5.25, 0) circle (2pt);
\end{tikzpicture}}
  \caption{The circuit corresponding to the simplified formula $x_3\odot(x_2\odot\lnot x_1)$, equivalent to \cref{fig:switching1:circuit}; notation as in \cref{fig:switching1}.}\label{fig:switching2}
\end{figure}

For Shannon's series-parallel circuits, steps~(i) and~(iii) present no difficulty (Shannon shows that non-series-parallel circuits such as the bridge circuit can be converted to equivalent series-parallel form~\cite{shannonSymbolicAnalysisRelay1938}, possibly at an increase in circuit size): the parse tree of a Boolean formula---the rooted tree recording the formula's recursive assembly from its parts---carries the same structural information as the series-parallel circuit diagram (\cref{fig:switching1}). The formula is read off the circuit by reading its series-parallel structure as a parse tree; the circuit is drawn from the formula's parse tree by interpreting each node as a series or parallel connection and each leaf as a relay contact. Construction from the parse tree recovers the circuit exactly, since the correspondence between parse trees and series-parallel circuits is bijective.

In the Boolean setting, step~(ii) is elementary: for Boolean functions in $n$ variables, the truth domain $\{0,1\}^n$ has exactly $2^n$ points. For each input $(a_1,\ldots,a_n)\in\{0,1\}^n$ at which the function equals $1$, write the conjunction $x_1^{a_1}\odot\cdots\odot x_n^{a_n}$ where $x_i^1 = x_i$ and $x_i^0 = \lnot x_i$; the disjunction $\oplus$ of these terms over all such inputs yields a canonical disjunctive normal form (DNF)~\revb{\cite{shannonSymbolicAnalysisRelay1938}}---uniquely determined by the function's truth table, with at most $2^n$ terms---and two functionally equivalent formulae share the same canonical DNF, making them interderivable by routing through it: any formula can be derived to its canonical DNF by finitely many applications of the Boolean axioms~\cite{shannonSymbolicAnalysisRelay1938}, and conversely.

\Cref{fig:diagram} illustrates the structural correspondence between Shannon's approach and the MV-logic approach to ReLU networks: extraction translates a circuit or network into a formula or substitution graph, respectively; derivation relates equivalent formulae or substitution graphs; and construction translates back.

\begin{figure}[tb]
  \centering
  \resizebox{16cm}{!}{\begin{tikzpicture}[>=Stealth, node distance=2.5cm and 3cm, baseline=(current bounding box.center),  box/.style={draw=none, rounded corners, fill opacity=0.2, text opacity=1}]
    \def\scalemath{2}
    \def\xsep{5}
    \def\ysep{4}
\node (phi) {\begin{tabular}{c}Circuit\\$C_1$\end{tabular}};
\node(tau) at ($(phi)+(\xsep,0)$){formula $\varphi_1$};
\node(phi') at ($(phi)+(0,-\ysep)$) {\begin{tabular}{c}Circuit\\$C_2$\end{tabular}};
\node(tau') at ($(phi')+(\xsep,0)$){formula $\varphi_2$};

\draw[->] (phi) -- (tau) node[midway, above] {extraction};
\draw[->] (tau) -- (tau') node[midway, right] {\begin{tabular}{c}equational \\derivation\end{tabular}};
\draw[->] (tau') -- (phi') node[midway, below] {construction};
\draw[<->] (phi') -- (phi) node[midway, left] {\begin{tabular}{c}functional \\equivalence\end{tabular}};

\coordinate (box1) at ($(phi)+(-0.4*\xsep,0.1*\ysep)$) {};
\coordinate (box2) at ($(phi')+(0.3*\xsep,-0.1*\ysep)$) {};
\coordinate (box3) at ($(tau)+(-0.2*\xsep,0.1*\ysep)$) {};
\coordinate (box4) at ($(tau')+(0.5*\xsep,-0.1*\ysep)$) {};
\node[box, fill=purple!30, fit=(box1)(box2)] (bg1) {};
\node[box, fill=blue!30, fit=(box3)(box4)] (bg1) {};

\node() at ($(phi')+(0,-0.2*\ysep)$){\textbf{CIRCUIT}};
\node() at ($(tau')+(0,-0.2*\ysep)$){\textbf{LOGIC}};

\begin{scope}[xshift=16cm]
    \node (phi) {\begin{tabular}{c}ReLU network\\$\mathcal{N}_1$\end{tabular}};
    \node(tau) at ($(phi)+(\xsep,0)$){\begin{tabular}{c}sub.\ graph\\$\ext(\gN_1)$\end{tabular}};
    \node(phi') at ($(phi)+(0,-\ysep)$) {\begin{tabular}{c}ReLU network\\$\mathcal{N}_2$\end{tabular}};
    \node(tau') at ($(phi')+(\xsep,0)$){\begin{tabular}{c}sub.\ graph\\$\ext(\gN_2)$\end{tabular}};

    \draw[->] (phi) -- (tau) node[midway, above] {extraction};
    \draw[->] (tau) -- (tau') node[midway, right] {\begin{tabular}{c}graph \\derivation\end{tabular}};
    \draw[->] (tau') -- (phi') node[midway, below] {construction};
    \draw[<->] (phi') -- (phi) node[midway, left] {\begin{tabular}{c}functional \\equivalence\end{tabular}};

    \coordinate (box1) at ($(phi)+(-0.4*\xsep,0.1*\ysep)$) {};
    \coordinate (box2) at ($(phi')+(0.3*\xsep,-0.1*\ysep)$) {};
    \coordinate (box3) at ($(tau)+(-0.2*\xsep,0.1*\ysep)$) {};
    \coordinate (box4) at ($(tau')+(0.5*\xsep,-0.1*\ysep)$) {};
    \node[box, fill=purple!30, fit=(box1)(box2)] (bg1) {};
    \node[box, fill=blue!30, fit=(box3)(box4)] (bg1) {};
    \node() at ($(phi')+(0,-0.2*\ysep)$){\textbf{NETWORK}};
    \node() at ($(tau')+(0,-0.2*\ysep)$){\textbf{LOGIC}};
\end{scope}

\end{tikzpicture}}
  \caption{Structural correspondence between Shannon's switching circuit framework and the MV-logic framework for ReLU networks.}\label{fig:diagram}
\end{figure}

The present work is the MV-logic analogue of Shannon's program, as illustrated in \cref{fig:diagram}: the three-step structure is identical, and the philosophy is the same; what changes is the difficulty of each step.

In the Boolean setting, step~(ii) relied on the finiteness of the truth domain; in the MV setting, the truth domain is the continuum $[0,1]$. Two \Luka formulae are functionally equivalent when they agree at every point of the uncountable set $[0,1]^n$, so the enumeration argument that produces the canonical DNF fails---there is no finite list of satisfying assignments to read off. Establishing that functionally equivalent \Luka formulae are interderivable (in finitely many steps) by application of MV axioms is a non-trivial algebraic fact, supplied by Chang's theorem.

Steps~(i) and~(iii) likewise require non-trivial extensions. A ReLU network is not a series-parallel circuit but a computational graph, so the direct tree-traversal approach of the Boolean case---where parse trees and series-parallel circuits are in bijection---is no longer available; extraction must instead retain the full computational-graph topology of the network, and construction must exploit that topology to recover the network architecture exactly. Substitution graphs play the role of parse trees in the MV setting: they record the computational-graph topology of the network and serve as input to the construction algorithm.

Chang's theorem is the cornerstone of step~(ii).

\begin{theorem}[Chang~\cite{changAlgebraicAnalysisMany1958, chang1959new}]\label{them:chang}
  Let $\tau_1$ and $\tau_2$ be \Luka formulae. If $\tau_1^\sI = \tau_2^\sI$, then $\tau_1\widesim{\mv}\tau_2$.
\end{theorem}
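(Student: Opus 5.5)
The plan is to follow the algebraic route through MV-chains and Chang's subdirect representation theorem. By Birkhoff's completeness theorem for equational logic, $\tau_1\widesim{\mv}\tau_2$ holds iff $\tau_1=\tau_2$ is valid in every MV algebra. This is because derivability by the MV axioms, closed under substitution, replacement of equals and transitivity, is exactly equational consequence in the variety of MV algebras. Equivalently, it suffices to show that in the free MV algebra (the Lindenbaum algebra of formulae modulo $\widesim{\mv}$) the classes of $\tau_1$ and $\tau_2$ coincide. So I would argue contrapositively: assume $\tau_1\not\widesim{\mv}\tau_2$ and produce a point of $[0,1]^n$ where the truth functions differ.

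The first step is to show that every MV algebra is a subdirect product of MV-chains, i.e.\ totally ordered MV algebras. Fix an MV algebra $A$ and elements $a\ne b$. Using the induced order $x\le y \iff \lnot x\oplus y=1$ and the distance $d(x,y)=(x\odot\lnot y)\oplus(y\odot\lnot x)$, I would show that ideals correspond to congruences. By Zorn, I would then find a prime ideal $P$ not containing $d(a,b)$. The key identity is $(x\odot\lnot y)\wedge(y\odot\lnot x)=0$, which follows from Ax.~9. It forces each quotient $A/P$ by a prime ideal to be a chain, and $a,b$ remain distinct there. Hence an equation failing in $A$ fails in some MV-chain.

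The second step is to reduce from arbitrary MV-chains to $\sI$. Via Mundici's $\Gamma$ functor (or Chang's original construction), each MV-chain is $\Gamma(G,u)$ for a totally ordered abelian group $G$ with strong unit $u$. An equation between formulae failing at some assignment in $\Gamma(G,u)$ translates into the failure of a finite system of linear inequalities and equalities with integer coefficients in $G$. The relevant facts are that terms become piecewise-linear expressions, and that which branch is active is decided by a finite set of sign conditions. Since the first-order theory of nontrivial ordered divisible abelian groups is complete, I would first pass to the divisible hull of $G$. Alternatively, I would use the elementary fact that a finite system of strict and non-strict integer linear inequalities solvable in some ordered abelian group is solvable in $\sQ$. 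This yields a rational, hence real, assignment with unit $1$ in $\Gamma(\sR,1)=\sI$ where $\tau_1^\sI\ne\tau_2^\sI$. That contradicts the hypothesis.

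The main obstacle is this last transfer from an arbitrary chain to $[0,1]$. It requires setting up the equivalence between MV-chains and unital ordered groups carefully enough that formula evaluation commutes with it. The rest of that step is a Farkas/Fourier--Motzkin-style solvability argument over $\sQ$. The prime-ideal step is comparatively routine once Ax.~9 is exploited. An alternative to the whole route is a syntactic normal-form argument via McNaughton's theorem (\cref{them:McNaughton_theorm}): show every formula is MV-derivable to a canonical Schauder-hat normal form determined by its truth function. That would mirror the Boolean DNF, but making the derivations explicit is considerably harder.
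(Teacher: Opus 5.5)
Your outline is correct. The paper does not prove \cref{them:chang} but imports it from Chang, and your sketch is in essence the standard algebraic proof (Chang's own): the Lindenbaum algebra, subdirect representation into MV-chains via prime ideals, the representation $\Gamma(G,u)$ of each chain, and transfer to $\sI$ through divisible ordered abelian groups. Your Motzkin-transposition variant is an elementary substitute for the model-theoretic last step: integer multipliers certifying infeasibility over $\sQ$ would also refute solvability in any ordered abelian group, and homogeneity of the system lets you rescale $u$ to $1$.
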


Here $\tau_1\widesim{\mv}\tau_2$ denotes derivability of $\tau_2$ from $\tau_1$ in finitely many steps (\cref{defn:manipulation}), each an application of an MV axiom. The converse implication is immediate as the application of MV axioms, by definition, leaves the truth function unchanged. For formulae extracted from ReLU networks, $\tau_1 = [\ext(\gN_1)]$ and $\tau_2 = [\ext(\gN_2)]$ with $\gN_1,\gN_2\in\mathfrak{N}$, this is, by \cref{def:graph_manipulation}, exactly $\ext(\gN_1)\widesim{\mv}\ext(\gN_2)$. How this derivability is reflected at the level of networks depends on the axioms applied in the rewrite. Each induced symmetry (\appref{app:glossary_symmetries}) maps a network directly to a functionally equivalent network. The symmetries of Tiers~2 and~4 in their entirety, together with part of Tier~1, are affine (single-layer) symmetries in the sense of~\cite{nnaffid2020}; the remaining Tier-1 symmetries, induced by identities outside the affine form of \cref{subsec:background}, are single-layer as well---their structural effect, such as the removal of a constant subnetwork, decomposes into single-layer steps (\appref{app:glossary_symmetries}). A sequence of such symmetries yields a chain of networks. The deep symmetries of Tier~3, in contrast, are neither affine nor single-layer---they rearrange structure across at least three layers and cannot be generated by compositions of single-layer modifications (\cref{subsec:background}). Single-layer symmetries also alter the network's computational graph, each application of an axiom inserting, merging, or removing nodes within a single layer. In contrast, a deep symmetry couples several layers at once, so that describing its action requires rewriting on objects capable of expressing such cross-layer structural change---the substitution graphs (\cref{sec:manipulate}). Normal substitution graphs (\cref{def:minterms}) are the graphs that the construction algorithm maps to networks (\cref{sec:construction}); rewriting involving deep symmetries, however, passes through intermediate graphs that are not normal, as will be shown in \cref{sec:manipulate}. We accordingly define the network-level relation $\gN_1\widesim{\mv}\gN_2$ as a pullback through extraction, that is, as $\ext(\gN_1)\widesim{\mv}\ext(\gN_2)$ (\cref{def:network_manipulation}). Individual derivation steps thus carry graph-level meaning only---the direct network action of an induced symmetry and the corresponding derivation step on graphs are distinct operations, and it is the latter that derivations compose; network-level statements are made at the endpoints, where the graphs are extracted from networks.

The main result of the present paper is the following.

\begin{theorem}\label{them:main_theorem1}
  For $n\in \sN$, let $\mathfrak{N}$ be the class of non-degenerate (\cref{defn:nondege-condition}) ReLU networks with integer weights and biases realizing functions $f:[0,1]^n \rightarrow [0,1]$.
  For all $\gN_1,\gN_2 \in \mathfrak{N}$,
  \[
    \gN_1 \sim_{[0,1]^n} \gN_2 \implies \gN_1 \widesim{\mv} \gN_2.
  \]
\end{theorem}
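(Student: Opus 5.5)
The proof chains the three steps (i)--(iii) described before the statement. The relation $\gN_1\widesim{\mv}\gN_2$ is, by \cref{def:network_manipulation}, the graph-level relation $\ext(\gN_1)\widesim{\mv}\ext(\gN_2)$. By \cref{def:graph_manipulation}, that relation reduces to formula-level derivability of $[\ext(\gN_1)]$ and $[\ext(\gN_2)]$. So the theorem amounts to showing that these two extracted formulae have the same truth function, and then invoking Chang's theorem (\cref{them:chang}).

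Concretely, I would proceed as follows.

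First, fix $\gN_1,\gN_2\in\mathfrak{N}$ with $\gN_1\sim_{[0,1]^n}\gN_2$. Since both networks are non-degenerate (\cref{defn:nondege-condition}) with integer weights and biases and $[0,1]$-valued on $[0,1]^n$, the extraction algorithm of \cref{sec:extraction,sec:graphical_representation} applies to each. It yields substitution graphs $\ext(\gN_i)$ whose represented formulae $\tau_i=[\ext(\gN_i)]$ satisfy $\tau_i^\sI=\langle\gN_i\rangle$ on $[0,1]^n$.

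Second, combining this with the hypothesis gives $\tau_1^\sI=\langle\gN_1\rangle|_{[0,1]^n}=\langle\gN_2\rangle|_{[0,1]^n}=\tau_2^\sI$. Both $\tau_i$ are formulae in the same variables $x_1,\dots,x_n$, so their truth functions are compared on the common domain $[0,1]^n$.

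Third, \cref{them:chang} yields $\tau_1\widesim{\mv}\tau_2$. Unwinding \cref{def:graph_manipulation} and then \cref{def:network_manipulation} gives $\ext(\gN_1)\widesim{\mv}\ext(\gN_2)$, and hence $\gN_1\widesim{\mv}\gN_2$. If one wants the derivation to be realized by local graph operations, \cref{prop:graph_derivation} and \cref{prop:local_realizability} supply this: node rewrites, layer collapses and layer expansions. \cref{prop:pillar3}, i.e.\ $\constr(\ext(\gN_i))=\gN_i$, guarantees that the endpoints of the graph derivation return exactly to the original networks. This is not logically needed for the implication, but it justifies reading the relation as one between networks.

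The main obstacle is the first step, the correctness of extraction: $[\ext(\gN)]^\sI=\langle\gN\rangle$ on $[0,1]^n$. Here intermediate neurons need not take values in $[0,1]$, so their outputs are not directly truth values. My first attempt would be induction over layers. At each hidden node, I would use non-degeneracy to bound the pre-activation range on $[0,1]^n$ by an integer interval. I would then write the clipped value as a sum of unit-interval pieces $\rho(z-k)-\rho(z-k-1)$, each of which is a \Luka formula in the previous layer's node formulae. This uses $\min\{1,\max\{0,t\}\}$ together with the representation of integer-coefficient affine combinations via $\oplus,\odot,\lnot$, as in \cite{nn2mv2024}. Once this layerwise substitution lemma is in hand, the remaining steps are formal.
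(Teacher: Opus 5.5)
Your proof is correct and follows the paper's own argument: set $\tau_i=[\ext(\gN_i)]$, use $\tau_i^\sI=\ang{\gN_i}$ on $[0,1]^n$ (which the paper establishes before the theorem via Extraction, Steps~I--III and \cref{lem:graph_truth_function}), apply Chang's theorem (packaged in the paper as \cref{prop:graph_derivation}), and unwind \cref{def:graph_manipulation,def:network_manipulation}. One small correction to your sketch of extraction correctness: the pre-activation bounds come from the interval formula \eqref{eq:compute-interval}, using only that parent outputs lie in $[0,1]$; non-degeneracy plays no role in this implication and is needed only for $\constr(\ext(\gN))=\gN$.
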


\begin{proof}
  Step~(i) (extraction): set $\tau_i = [\ext(\gN_i)]$, $i = 1, 2$. Since $\gN_1\sim_{[0,1]^n}\gN_2$, $\tau_1^\sI = \langle\gN_1\rangle = \langle\gN_2\rangle = \tau_2^\sI$. Step~(ii) (derivation): since $\tau_1^\sI = \tau_2^\sI$, \cref{prop:graph_derivation} yields $\ext(\gN_1)\widesim{\mv}\ext(\gN_2)$, which is $\gN_1\widesim{\mv}\gN_2$ by \cref{def:network_manipulation}.
\end{proof}

The reverse implication $\gN_1\widesim{\mv}\gN_2 \implies \gN_1\sim_{[0,1]^n}\gN_2$ is verified as follows. $\gN_1\widesim{\mv}\gN_2$ means $\ext(\gN_1)\widesim{\mv}\ext(\gN_2)$ (\cref{def:network_manipulation}), which by \cref{def:graph_manipulation} is $[\ext(\gN_1)]\widesim{\mv}[\ext(\gN_2)]$. \cref{defn:manipulation} then provides a finite sequence of formulae $[\ext(\gN_1)] = \varphi_0, \varphi_1, \ldots, \varphi_T = [\ext(\gN_2)]$, each obtained from its predecessor by applying an MV axiom. Each such application preserves the truth function, $\varphi_{t-1}^\sI = \varphi_t^\sI$; hence $[\ext(\gN_1)]^\sI = [\ext(\gN_2)]^\sI$. Since extraction preserves the realized function (\cref{sec:extraction}), $\ang{\gN_1} = [\ext(\gN_1)]^\sI$ and $\ang{\gN_2} = [\ext(\gN_2)]^\sI$ on $[0,1]^n$, and therefore $\ang{\gN_1} = \ang{\gN_2}$, i.e., $\gN_1\sim_{[0,1]^n}\gN_2$. Together, \cref{them:main_theorem1} and its converse give
\[
  \gN_1\sim_{[0,1]^n}\gN_2 \iff \gN_1\widesim{\mv}\gN_2.
\]

\Cref{sec:extension} carries this program to finite input domains, to rational weights and biases, and to real weights and biases, the last through the Riesz extension of MV algebra (\cref{them:main_theorem4}). Neither the integrality of the weights nor the restriction to the unit box as input domain is therefore essential. For real weights, an affine change of variables normalizes any compact box to $[0,1]^n$ and the realized function to $[0,1]$, and both normalizations are absorbed into the real weights and biases (\cref{subsec:real_case}). Complete identification thus holds for non-degenerate ReLU networks with arbitrary real weights and biases, over any compact box. The integer case is treated first because it is the sharpest---the axioms of MV logic alone suffice, no scaling symmetries arise (\appref{app:glossary_symmetries}), and the connection to Chang's completeness theorem is most direct.

The following example shows how a network can be pruned by application of MV axioms, without changing its input-output relation. Consider the network $\gN_1$ in \cref{fig:intro_5}, whose extracted formula becomes, after Tier-1 and Tier-2 rewrites, $\tau_1=(x\odot y)\oplus\big((\lnot x\odot y)\odot(x\odot\lnot y)\big)$. Applying the MV axioms, we obtain
\[
  \begin{aligned}
    \tau_1=(x\odot y)\oplus\big((\lnot x\odot y)\odot(x\odot\lnot y)\big)
    &\overset{\text{\upshape Ax.\,1}',2'}{=} (x\odot y)\oplus\big((x\odot\lnot x)\odot(y\odot\lnot y)\big)\\
    &\overset{\text{\upshape Ax.\,3}'}{=} (x\odot y)\oplus(0\odot 0)
    \;\overset{\text{\upshape Ax.\,4}'}{=}\; (x\odot y)\oplus 0
    \;\overset{\text{\upshape Ax.\,5}}{=}\; x\odot y.
  \end{aligned}
\]
Pruning the dead branch leaves the single-node network $\gN_2$ of \cref{fig:intro_5}; the derivation rewrites $\tau_1$ into $x\odot y$ by application of MV axioms, establishing $\gN_1\widesim{\mv}\gN_2$.

\begin{figure}[tb]
  \centering
  \resizebox{13cm}{!}{\begin{tikzpicture}[
  neuron/.style={circle, draw, minimum size=0.9cm, inner sep=0pt},
  dead/.style={circle, draw, dashed, minimum size=0.9cm, inner sep=0pt},
  >=Stealth,
  node distance=1.2cm and 1.6cm
]
  \def\labelsep{3pt}

  \node[] at (-1.8, 6.6) {$\gN_1$};

  \node[neuron] (i1) at (0,0) {};
  \node[neuron] (i2) at (3.0,0) {};
  \node at (i1.south) [below=\labelsep] {$x$};
  \node at (i2.south) [below=\labelsep] {$y$};

  \node[dead]   (p) at (-0.8,2.3) {$0,\rho$};
  \node[dead]   (q) at (1.5,2.3)  {$0,\rho$};
  \node[neuron] (g) at (3.8,2.3)  {$-1,\rho$};

  \draw[->,dashed] (i1) -- (p) node[near end, sloped, above] {$-1$};
  \draw[->,dashed] (i2) -- (p) node[pos=0.25, sloped, above] {$1$};
  \draw[->,dashed] (i1) -- (q) node[pos=0.3, sloped, above] {$1$};
  \draw[->,dashed] (i2) -- (q) node[near end, sloped, above] {$-1$};
  \draw[->] (i1) -- (g) node[pos=0.25, sloped, above] {$1$};
  \draw[->] (i2) -- (g) node[near end, sloped, above] {$1$};

  \node[dead]   (d)  at (0.4,4.6) {$-1,\rho$};
  \node[neuron] (g2) at (3.8,4.6) {$0,\rho$};
  \node[align=center] at (0.4,5.55) {\scriptsize $\equiv 0$};

  \draw[->,dashed] (p) -- (d) node[near end, sloped, above] {$1$};
  \draw[->,dashed] (q) -- (d) node[near end, sloped, above] {$1$};
  \draw[->] (g) -- (g2) node[near end, sloped, above] {$1$};

  \node[neuron] (o) at (2.1,6.8) {$0$};
  \draw[->,dashed] (d)  -- (o) node[near end, sloped, above] {$1$};
  \draw[->] (g2) -- (o) node[near end, sloped, above] {$1$};

  \draw[->, thick] (5.6,3.2) -- (7.8,3.2)
        node[midway, above] {$\widesim{\mv}$}
        node[midway, below] {Ax.~1$'$,\,2$'$,\,3$'$,\,4$'$,\,5};

  \begin{scope}[xshift=9.6cm]
    \node[] at (-1.8, 6.6) {$\gN_2$};

    \node[neuron] (j1) at (0,0) {};
    \node[neuron] (j2) at (3.0,0) {};
    \node at (j1.south) [below=\labelsep] {$x$};
    \node at (j2.south) [below=\labelsep] {$y$};

    \node[neuron] (gg) at (1.5,3.2) {$-1,\rho$};
    \draw[->] (j1) -- (gg) node[near end, sloped, above] {$1$};
    \draw[->] (j2) -- (gg) node[near end, sloped, above] {$1$};

    \node[neuron] (p2) at (1.5,6.0) {$0$};
    \draw[->] (gg) -- (p2) node[near end, sloped, above] {$1$};
  \end{scope}

\end{tikzpicture}}
  \caption{The network $\gN_1$ realizes the truth function of $\tau_1=(x\odot y)\oplus\big((\lnot x\odot y)\odot(x\odot\lnot y)\big)$ on $[0,1]^2$. Here $\rho(x+y-1)=x\odot y$, $\rho(-x+y)=\lnot x\odot y$, and $\rho(x-y)=x\odot\lnot y$, while the output node combines its inputs by the disjunction~$\oplus$. The dashed branch computes $(\lnot x\odot y)\odot(x\odot\lnot y)\equiv 0$; the MV derivation rewrites $\tau_1$ to $x\odot y$, establishing $\gN_1\widesim{\mv}\gN_2$, and pruning the dashed branch realizes the result as the single-node network $\gN_2$.}\label{fig:intro_5}
\end{figure}

Rewriting a structure-carrying object (here a substitution graph) by the axioms of an algebra (the MV algebra), under a semantics that forgets the structure (the truth function of the represented formula), and proving a completeness theorem that matches derivability to equality of what the objects denote (the functions the networks realize), has analogues in other fields. One is shared-subterm graphs in the implementation theory of functional languages, where graph reduction is sound with respect to term rewriting in general and complete for suitably regular rewrite systems~\cite{barendregtTermGraphRewriting1987}; another is string diagrams in categorical algebra, the morphisms of PROPs---symmetric strict monoidal categories whose objects are the natural numbers---for which a complete equational characterization of equality of the denoted linear relations is known~\cite{bonchiInteractingHopfAlgebras2017}. Two features distinguish the development here from~\cite{barendregtTermGraphRewriting1987} and~\cite{bonchiInteractingHopfAlgebras2017}. The objects being rewritten encode neural networks, and normality singles out exactly those graphs that read back as networks, so that the admissible objects form a proper subclass of those the calculus manipulates, a subclass a derivation may have to leave and re-enter (\cref{exmp:nonnormal}); in~\cite{barendregtTermGraphRewriting1987} and~\cite{bonchiInteractingHopfAlgebras2017} every object is admissible, so that a derivation cannot leave that subclass. A third feature sets the development apart, the normal form the graphs are required to be in being \emph{compositional} (\cref{sec:graphical_representation}), its building blocks substituted into one another along the layered structure of the network rather than combined by a fixed pattern of connectives (\cref{rem:dinola}); no normal form of this kind appears in the \Luka{} literature. Moreover, to the best of our knowledge, the present paper is the first to develop a systematic symbolic calculus for deep ReLU networks.

\paragraph*{Paper organization.}
\cref{sec:extraction} reviews the extraction algorithm of~\cite{nn2mv2024}, presented here as a node-by-node reading of the network's computational graph (step~(i)).
\cref{sec:graphical_representation} introduces \change{the compositional normal form}---a graphical representation of the network's formula that preserves the layered structure rather than collapsing to a string as in~\cite{nn2mv2024} (step~(i)).
In~\cref{sec:manipulate}, we perform MV derivation on substitution graphs. A derivation starts from and returns to graphs in normal form, the extracted graphs of the two networks; it proceeds by local graph operations---node rewrites and layer collapses and expansions---through intermediate graphs that need not be normal (step~(ii)).
\cref{sec:construction} develops the construction algorithm, which converts the substitution graph back to a ReLU network, closing the identification loop (step~(iii)).
\cref{sec:extension} extends all results to finite domains, rational weights, and real weights.

\section{Extracting formulae from ReLU networks}\label{sec:extraction}

This section establishes step~(i) of the proof of \cref{them:main_theorem1} by first reviewing the extraction algorithm of~\cite{nn2mv2024}, which consists of three steps. Extraction, Step~I converts the ReLU network, or \emph{$\rho$-network}, to a clipped-ReLU network, or \emph{$\sigma$-network}, with $\sigma(x)=\max\{\min\{x,1\},0\}$. Step~II associates an MV formula with each $\sigma$-node, and Step~III composes the node formulae by substitution. The result of this composition is a single \Luka string. As discussed in \cref{subsec:logic_and_algebra}, the identification program we establish needs the network's graph structure to be retained. We therefore introduce a representation that retains it, formed by the layered graph of the $\sigma$-network with each of its nodes labeled by the \Luka formula assigned to it by Step~II, whose truth function is that node's local map. Throughout \cref{sec:extraction,sec:graphical_representation,sec:manipulate,sec:construction}, ReLU networks are assumed to have integer weights and biases and to realize functions $[0,1]^{d_0}\rightarrow[0,1]$; these are the hypotheses under which extraction operates, and they are not restated in the individual definitions and results. \Cref{subsec:finite_case} retains integer weights and biases but considers finite input domains; \cref{subsec:rational_case,subsec:real_case} replace integer by rational and by real weights and biases, respectively. The requirement that the network realize a $[0,1]$-valued function on $[0,1]^{d_0}$ is retained throughout.

\subsection{ReLU networks as computational graphs}

Formally, a ReLU network is a directed acyclic graph whose edges carry weights and whose nodes carry biases and activations (Definitions~\ref{def:DAG}--\ref{def:output_map}); we call the labeled graph underlying a neural network its \emph{computational graph}.

\begin{defn}[Directed acyclic graph]\label{def:DAG}\hfil
  \begin{itemize}
  \item A \emph{directed graph} is an ordered pair $(V,E)$ where $V$ is a nonempty finite set of nodes and $E\subset V\times V\setminus\{(v,v):v\in V\}$ is a nonempty set of directed edges. An edge $(v,\tilde{v})$ points from $v$ to $\tilde{v}$.
  \item A \emph{directed cycle} is a sequence $v_1,\ldots,v_k,v_1$ with $(v_j,v_{j+1})\in E$ for $j<k$ and $(v_k,v_1)\in E$.
  \item A \emph{directed acyclic graph} (DAG) is a directed graph with no directed cycles.
  \end{itemize}
  For a DAG $(V,E)$, define the \emph{parent set} of the node $v$ as $\pre(v) = \{\tilde{v}\in V: (\tilde{v},v)\in E\}$, its \emph{children} as $\{\tilde{v}\in V: (v,\tilde{v})\in E\}$, and the \emph{level} $\lvl(v)$ recursively: $\lvl(v) = 0$ if $\pre(v) = \varnothing$; otherwise $\lvl(v) = \max_{u\in\pre(v)}\lvl(u)+1$.
\end{defn}

\begin{defn}[Layered graph]\label{def:layered_graph}
  A DAG $(V,E)$ is \emph{layered} if there exists $L\in\sN$ such that
  \begin{itemize}
    \item $V = V^{(0)}\cup\cdots\cup V^{(L-1)}\cup\{\vout\}$, where $V^{(j)} = \{v\in V:\lvl(v)=j\}$ for $j=0,\ldots,L-1$ and $\lvl(\vout)=L$;
    \item $E = \{(v,v'):\lvl(v')=\lvl(v)+1\}$.
  \end{itemize}
  The integer $L$ is the \emph{depth} of the layered graph, the elements of $V^{(0)}$ are its \emph{input nodes}, and $\vout$ is its \emph{output node}. With $d_j = |V^{(j)}|$ for $j=0,\ldots,L-1$ and $d_L=1$, the tuple $(d_0,\ldots,d_L)$ is its \emph{architecture}.
\end{defn}

\begin{defn}[Neural network]\label{def:NN}
  A \emph{neural network} is a tuple $(V,E,\gW,\gB,\Psi)$ where $(V,E)$ is a layered graph,
  $\gW = \{w_{\tilde{v},v}\in\sR:(v,\tilde{v})\in E\}$ is the set of edge weights,
  $\gB = \{b_v\in\sR:v\in V\setminus V^{(0)}\}$ is the set of node biases, and
  $\Psi = \{\psi_v:\sR\rightarrow\sR:v\in V\setminus(V^{(0)}\cup\{\vout\})\}$ is the set of activation functions associated with the \emph{hidden nodes} $V\setminus(V^{(0)}\cup\{\vout\})$.
  The \emph{incoming weights} of a node $v$ are $\{w_{v,u}:(u,v)\in E\}$ and its \emph{outgoing weights} are $\{w_{\tilde{v},v}:(v,\tilde{v})\in E\}$.
  The network is \emph{shallow} if $V\setminus(V^{(0)}\cup\{\vout\})=\varnothing$, equivalently if $(V,E)$ has depth~$1$, and \emph{deep} otherwise.
\end{defn}

Each node $v$ computes an affine combination of its parents' outputs, then applies its activation function.
Formally, the \emph{local map} $\ang{v}$ is defined as follows.
For $v_i^{(0)}\in V^{(0)}$, $i\in\{1,\ldots,d_0\}$, $\ang{v_i^{(0)}}(x) = x_i$, $x\in\sR^{d_0}$.
For $v_i^{(j)}\in V^{(j)}$ with $j\in\{1,\ldots,L-1\}$, $i\in\{1,\ldots,d_j\}$,
\begin{equation}\label{eq:local_map}
\ang{v_i^{(j)}}(x) = \psi_{v_i^{(j)}}\!\left(\sum_{i'=1}^{d_{j-1}}w_{v_i^{(j)},v_{i'}^{(j-1)}}x_{i'}+b_{v_i^{(j)}}\right), \quad x\in\sR^{d_{j-1}}.
\end{equation}
For $\vout$, $\ang{\vout}(x) = \sum_{i'=1}^{d_{L-1}}w_{\vout,v_{i'}^{(L-1)}}x_{i'}+b_{\vout}$, $x\in\sR^{d_{L-1}}$.

The local map of a node depends only on its immediate inputs, the outputs of the preceding layer. Composing these maps from the input layer onward expresses each node's output as a function of the network's input; the resulting map at the output node is the function the network realizes. We now define these global maps.

\begin{defn}[Global map and input-output map]\label{def:output_map}
  The \emph{global map} $\twoang{v}:\sR^{d_0}\rightarrow\sR$ is defined recursively: for input nodes, $\twoang{v_i^{(0)}} = \ang{v_i^{(0)}}$; for $v_i^{(j)}$ with $j\geq 1$,
  \[
  \twoang{v_i^{(j)}}(x) = \psi_{v_i^{(j)}}\!\left(\sum_{i'=1}^{d_{j-1}}w_{v_i^{(j)},v_{i'}^{(j-1)}}\twoang{v_{i'}^{(j-1)}}(x)+b_{v_i^{(j)}}\right).
  \]
  The \emph{input-output map} of $\gN$, denoted $\ang{\gN}$, is the global map of the output node, $\ang{\gN} = \twoang{\vout}$.
\end{defn}

Two specific classes of networks, distinguished by their activation function, are of interest here.

\begin{defn}[$\rho$-networks and $\sigma$-networks]\label{def:rho_sigma}
  A neural network $(V,E,\gW,\gB,\Psi)$ is a \emph{$\rho$-network} if $\psi_v = \rho$ for every $v\in V\setminus(V^{(0)}\cup\{\vout\})$, no activation function being applied at the output node, and a \emph{$\sigma$-network} if $\psi_v = \sigma$ for every $v\in V\setminus V^{(0)}$, the activation function thus being applied at the output node as well, so that
  \begin{equation}\label{eq:sigma_out}
    \ang{\vout}(x) = \sigma\Bigl(\sum_{i'=1}^{d_{L-1}}w_{\vout,v_{i'}^{(L-1)}}x_{i'}+b_{\vout}\Bigr), \quad x\in\sR^{d_{L-1}}.
  \end{equation}
\end{defn}

The following example illustrates Definitions~\ref{def:DAG}--\ref{def:output_map}.

\begin{exmp}\label{exmp:example1}
  Consider the $\rho$-network of architecture $(2,2,2,1,1)$, whose computational graph is shown in \cref{fig:dag}. It has depth $4$, and for instance $\pre(v_1^{(2)})=\{v_1^{(1)},v_2^{(1)}\}$. Reading the edge weights, node biases, and activations off the labels gives the local maps; those of the level-$1$ nodes are
  \[
    \ang{v_1^{(1)}}(x) = \rho(x_1+2x_2-1), \qquad \ang{v_2^{(1)}}(x) = \rho(-2x_1+1).
  \]
  Composing them from the input layer onward, as in \cref{def:output_map}, yields the global maps. At level~$2$,
  \[
    \twoang{v_1^{(2)}} = \rho(a), \qquad \twoang{v_2^{(2)}} = \rho(-a), \qquad \text{with } a = 1+\twoang{v_2^{(1)}}-\twoang{v_1^{(1)}},
  \]
  so $\twoang{v_1^{(2)}}+\twoang{v_2^{(2)}}=|a|$ and
  \[
    \ang{\gN}(x) = \twoang{\vout}(x) = \rho\bigl(1-|1+\rho(1-2x_1)-\rho(x_1+2x_2-1)|\bigr),
  \]
  a McNaughton function on $[0,1]^2$ depicted in \cref{fig:fct}.
\end{exmp}

\begin{figure}[tb]
  \centering
  \resizebox{6cm}{!}{\begin{tikzpicture}[
  neuron/.style={circle, draw, minimum size=1.2cm},
  >=Stealth, 
  node distance=1.8cm and 1.5cm
]
  \def\labelsep{2pt}  
  \def\yshift{0.5cm}
    \def\yshifty{0.2cm}
\node[neuron] (i1) at (0, 0) {};
\node[neuron, below=of i1] (i2) {};

\node at (i1.north) [above=\labelsep] {$v^{(0)}_1$};
\node at (i2.south) [below=\labelsep] {$v^{(0)}_2$};

\node[neuron, right=of i1] (h11) {$-1,\rho$};
\node[neuron, below=of h11] (h12) {$1,\rho$};
 \draw[->] (i1) -- (h11)node[near end, sloped, above] {$1$};
 \draw[->] (i2) -- (h11)node[near end, sloped, above] {$2$};

  \draw[->] (i1) -- (h12)node[near end, sloped, above] {$-2$};


\node at (h11.north) [above=\labelsep] {$v^{(1)}_1$};
\node at (h12.south) [below=\labelsep] {$v^{(1)}_2$};


\node[neuron, right=of h11] (h21) {$1,\rho$};
\node[neuron, below=of h21] (h22) {$-1,\rho$};
\node at (h21.north) [above=\labelsep] {$v^{(2)}_1$};
\node at (h22.south) [below=\labelsep] {$v^{(2)}_2$};

 \draw[->] (h11) -- (h21)node[near end, sloped, above] {$-1$};
 \draw[->] (h12) -- (h21)node[near end, sloped, above] {$1$};
  \draw[->] (h11) -- (h22)node[near end, sloped, above] {$1$};
 \draw[->] (h12) -- (h22)node[near end, sloped, above] {$-1$};

\node[neuron] (o) at ([xshift=2.7cm]$(h21)!0.5!(h22)$) {$1,\rho$};
\node[neuron, right=of o] (out) {$0$};
\node at (out.south) [below=\labelsep]  {$\vout$};
\node at (o.south) [below=\labelsep]  {$v^{(3)}_1$};
\draw[->] (o) -- (out)node[near end, sloped, above] {$1$};
\draw[->] (h21) -- (o)node[near end, sloped, above] {$-1$};
\draw[->] (h22) -- (o)node[near end, sloped, above] {$-1$};

\draw[->] (h22) -- (o);
\draw[->] (h21) -- (o);

\end{tikzpicture}}
  \caption{The computational graph of a neural network, with weights on the edges and biases and activation functions on the nodes; zero-weight edges are omitted.}\label{fig:dag}
\end{figure}

\begin{figure}[tb]
  \centering
  \includegraphics[width=7cm]{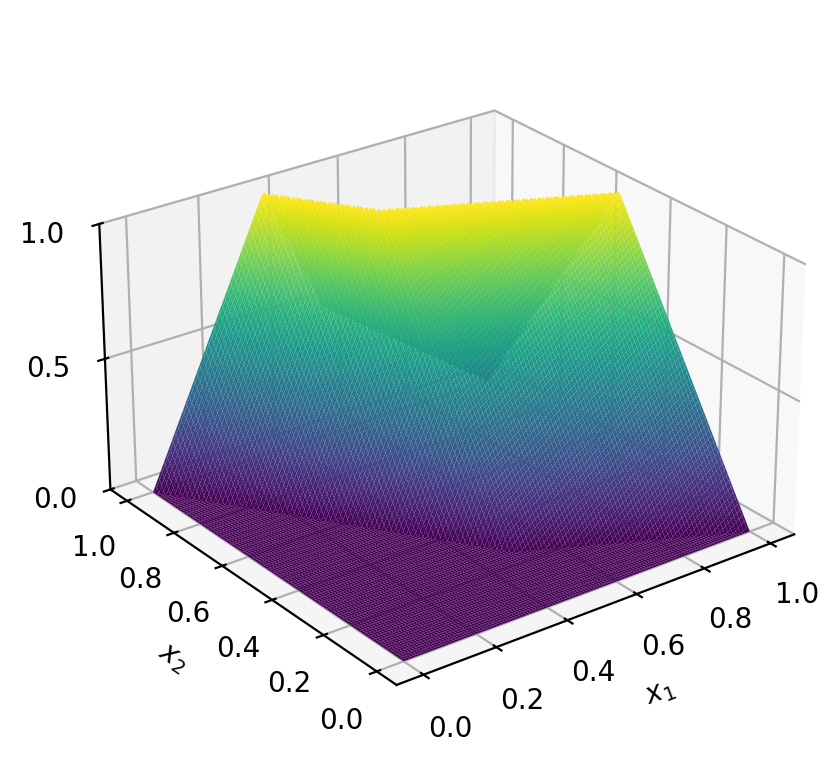}
  \caption{The function on $[0,1]^2$ realized by the network in \cref{fig:dag}.}\label{fig:fct}
\end{figure}

\subsection{Extraction, Step~I: Converting a $\rho$-network to a $\sigma$-network}\label{subsec:extract_step1}

We start from a $\rho$-network $\gN$ whose realized function $\ang{\gN}$ takes values in $[0,1]$. Since the network's domain is $[0,1]^{d_0}$ and every node has finitely many parents, each contributing a finite weight (Definitions~\ref{def:DAG} and~\ref{def:NN}), the input to every hidden node, the affine combination feeding its activation, is bounded. When the input to a $\rho$-node lies in an interval $[\ell,\mathcal{L}]$ with $\ell,\mathcal{L}\in\sR$, the following identity lets it be replaced by one or more $\sigma$-nodes:
\begin{equation}\label{eq:sigma-replacement}
  \rho(t) = \begin{cases}
    \sigma(t), & \mathcal{L}\leq 1, \\
    \sigma(t)+\sigma(t-1)+\cdots+\sigma(t-\lceil\mathcal{L}\rceil+1), & \mathcal{L}>1.
  \end{cases}
\end{equation}
We apply this substitution layer by layer, proceeding from the input layer to the output. Specifically, for each $v_i^{(j)}\in V^{(j)}$, $j=1,\ldots,L-1$, we first compute the input interval $[\ell_{v_i^{(j)}},\mathcal{L}_{v_i^{(j)}}]$ given by
\begin{equation}\label{eq:compute-interval}
  \ell_{v_i^{(j)}} = b_{v_i^{(j)}}+\sum_{i'=1}^{d_{j-1}}\tfrac{w_{v_i^{(j)},v_{i'}^{(j-1)}}-|w_{v_i^{(j)},v_{i'}^{(j-1)}}|}{2}, \quad
  \mathcal{L}_{v_i^{(j)}} = b_{v_i^{(j)}}+\sum_{i'=1}^{d_{j-1}}\tfrac{w_{v_i^{(j)},v_{i'}^{(j-1)}}+|w_{v_i^{(j)},v_{i'}^{(j-1)}}|}{2}.
\end{equation}
Note that \eqref{eq:compute-interval} relies on each level-$(j-1)$ parent outputting a value in $[0,1]$, which holds because those parents have already been converted to $\sigma$-nodes by the time $v_i^{(j)}$ is reached. The bounds \eqref{eq:compute-interval} let the parent outputs range independently over $[0,1]$, though these are functions of the same network input, and may hence overestimate the true range of the affine combination.
We then convert the node $v_i^{(j)}$ according to $\mathcal{L}_{v_i^{(j)}}$. If $\mathcal{L}_{v_i^{(j)}}\leq 1$, the activation function of $v_i^{(j)}$ is replaced by $\sigma$.
If $\mathcal{L}_{v_i^{(j)}}>1$, $v_i^{(j)}$ is replaced by $\lceil\mathcal{L}_{v_i^{(j)}}\rceil$ $\sigma$-nodes, each joined to the parents and children of $v_i^{(j)}$ by edges of the same weights, with biases $b_{v_i^{(j)}}, b_{v_i^{(j)}}-1, \ldots, b_{v_i^{(j)}}-\lceil\mathcal{L}_{v_i^{(j)}}\rceil+1$, according to~\eqref{eq:sigma-replacement}.
Each such substitution reproduces the local map of the node it replaces, preserving the network's input-output map.
$\ang{\gN}$ takes values in $[0,1]$ by assumption, on which $\sigma$ acts as the identity. Associating $\sigma$ with the output node, as required by \cref{def:rho_sigma}, therefore leaves $\ang{\gN}$ unchanged and yields the corresponding $\sigma$-network $\gM$.

\begin{exmp}
  Continuing \cref{exmp:example1}, the node $v_1^{(1)}$ has input interval $[-1,2]$, so it is replaced by two $\sigma$-nodes (biases $-1$ and $-2$ respectively). The node $v_2^{(1)}$ has input interval $[-1,1]$, so only its activation is swapped. Proceeding level by level yields the $\sigma$-network in \cref{fig:dag6}.
\end{exmp}

\begin{figure}[tb]
  \centering
  \resizebox{6cm}{!}{\begin{tikzpicture}[
  neuron/.style={circle, draw, minimum size=1.2cm},
  clusternode/.style={circle, draw=black!40, text=black!40, minimum size=1.2cm},
  >=Stealth, 
  node distance=1.8cm and 1.5cm
]
  \def\labelsep{2pt}  
  \def\yshift{0.5cm}
    \def\yshifty{0.2cm}
\node[neuron] (i1) at (0, 0) {};
\node[neuron, below=of i1] (i2) {};


\node[neuron, right=of i1] (h11) {$-1,\sigma$};
\node[neuron, below=of h11] (h12) {$1,\sigma$};
 \draw[->] (i1) -- (h11)node[near end, sloped, above] {$1$};
 \draw[->] (i2) -- (h11)node[near end, sloped, above] {$2$};

  \draw[->] (i1) -- (h12)node[near end, sloped, above] {$-2$};


\node at (h11.north) [above=\labelsep] {$v^{(1)}_1$};
\node at (h12.south) [below=\labelsep] {$v^{(1)}_2$};


\node[neuron, right=of h11] (h21) {$1,\sigma$};
\node[neuron, below=of h21] (h22) {$-1,\sigma$};
\node at (h21.north) [above=\labelsep] {$v^{(2)}_1$};
\node at (h22.south) [below=\labelsep] {$v^{(2)}_2$};

 \draw[->] (h11) -- (h21)node[near start, sloped, above] {$-1$};
 \draw[->] (h12) -- (h21)node[near start, sloped, above] {$1$};
  \draw[->] (h11) -- (h22)node[near start, sloped, above] {$1$};
 \draw[->] (h12) -- (h22)node[near start, sloped, above] {$-1$};

\node[neuron] (o) at ([xshift=2.7cm]h21) {$1,\sigma$};
\draw[->] (h21) -- (o)node[near end, sloped, above] {$-1$};
\draw[->] (h22) -- (o)node[near end, sloped, above] {$-1$};

\node[clusternode, above=of h11] (h111) {$-2,\sigma$};
\node[text=black!40] at (h111.north) [above=\labelsep] {$v^{(1)}_{11}$};
  \draw[->,black!40] (i1) -- (h111)node[near end, sloped, above] {$1$};
  \draw[->,black!40] (i2) -- (h111)node[near end, sloped, above] {$2$};
\draw[->,black!40] (h111) -- (h21)node[near start, sloped, above] {$-1$};
\draw[->,black!40] (h111) -- (h22)node[near start, sloped, above] {$1$};

\node[clusternode, above=of h21] (h211) {$0,\sigma$};
\node[text=black!40] at (h211.north) [above=\labelsep] {$v^{(2)}_{11}$};
 \draw[->,black!40] (h111) -- (h211)node[near end, sloped, above] {$-1$};
\draw[->,black!40] (h11) -- (h211)node[near end, sloped, above] {$-1$};
\draw[->,black!40] (h12) -- (h211)node[near end, sloped, above] {$1$};

 \draw[->,black!40] (h211) -- (o)node[near end, sloped, above] {$-1$};

 \node[neuron, right=of o] (out) {$0,{\color{blue}\sigma}$};
\node at (out.south) [below=\labelsep]  {$\vout$};
\node at (o.south) [below=\labelsep]  {$v^{(3)}_1$};
\draw[->] (o) -- (out)node[near start, sloped, above] {$1$};

\end{tikzpicture}}
  \caption{The $\sigma$-network obtained from \cref{fig:dag} by Extraction, Step~I. The nodes introduced by the conversion, and their edges, are drawn in grey; the remaining nodes and edges are those of \cref{fig:dag}.}\label{fig:dag6}
\end{figure}

\subsection{Extraction, Step~II: Associating formulae with $\sigma$-nodes}\label{subsec:single}

Given the $\sigma$-network $\gM = (V,E,\gW,\gB,\Psi)$ from Extraction, Step~I, this step associates with each node $v\in V$ a \Luka formula, denoted $[v]$, whose truth function equals the local map of $v$.

For an input node $v_i^{(0)}$, $[v_i^{(0)}] = x_i$, with truth function $x_i^\sI = \ang{v_i^{(0)}}$.
For a non-input $\sigma$-node $v_i^{(j)}$ with local map $\sigma(f)$, where $f = m_1x_1+\cdots+m_nx_n+b$, formula extraction is based on the following result.

\begin{lem}[\cite{rose1958}]\label{lem:extractmv}
  Let $f(x) = m_1x_1+\cdots+m_nx_n+b$ with $m_1,\ldots,m_n,b\in\sZ$, and set $f_\circ = (m_1-1)x_1+m_2x_2+\cdots+m_nx_n+b$. If $m_1\geq 1$, then on $[0,1]^n$,
  \begin{equation}
    \sigma(f) = (\sigma(f_\circ)\oplus x_1)\odot\sigma(f_\circ+1). \label{eq:lemma4.4}
  \end{equation}
  For all $m_1,\ldots,m_n,b\in\sZ$, on $[0,1]^n$,
  \begin{equation}
    \sigma(f) = \lnot\sigma(1-f). \label{eq:flip_sign}
  \end{equation}
\end{lem}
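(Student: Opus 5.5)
Both identities are pointwise claims on $[0,1]^n$, so I would prove them by direct computation from the formulas of $\sI$ in \cref{defn:standard-mv}, namely $a\oplus b=\min\{1,a+b\}$, $a\odot b=\max\{0,a+b-1\}$ and $\lnot a=1-a$, together with $\sigma(t)=\max\{\min\{t,1\},0\}$.

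\emph{Identity \eqref{eq:flip_sign}.} For every real $t$ we have $\sigma(1-t)=1-\sigma(t)$; this follows by checking the three regimes $t\le 0$, $0\le t\le 1$ and $t\ge 1$. Setting $t=f(x)$ gives $\lnot\sigma(1-f)=1-\sigma(1-f)=\sigma(f)$. This identity needs neither integrality nor $m_1\ge1$.

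\emph{Identity \eqref{eq:lemma4.4}.} Fix $x\in[0,1]^n$ and write $g=f_\circ(x)$, so that $f(x)=g+x_1$. Since $m_2,\dots,m_n$ and $b$ are integers and $x_1\in[0,1]$, the value $g$ is real but generally not an integer, so I would split by position rather than by integrality.

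\begin{itemize}
\item \emph{Case $g\ge 1$.} Then $\sigma(g)=1$, so $\sigma(g)\oplus x_1=1$. Also $g+1\ge1$ gives $\sigma(g+1)=1$. The right-hand side is therefore $1\odot 1=1$. On the left, $f\ge g\ge1$, so $\sigma(f)=1$.
\item \emph{Case $g\le -1$.} Then $\sigma(g+1)=0$, so the right-hand side is $0$ by Ax.~4$'$. On the left, $f=g+x_1\le 0$, so $\sigma(f)=0$.
\item \emph{Case $-1<g<0$.} Then $\sigma(g)=0$, so $\sigma(g)\oplus x_1=x_1$, and $\sigma(g+1)=g+1\in(0,1)$. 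The right-hand side is $x_1\odot(g+1)=\max\{0,x_1+g\}$. Since $x_1+g<1$, this equals $\sigma(g+x_1)=\sigma(f)$.
\item \emph{Case $0\le g<1$.} Then $\sigma(g+1)=1$, so the right-hand side is $\sigma(g)\oplus x_1=\min\{1,g+x_1\}$. Since $g+x_1\ge0$, this equals $\sigma(f)$.
\end{itemize}

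These four cases cover all real $g$, so the identity holds on $[0,1]^n$.

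The hypothesis $m_1\ge1$ is not used in this computation. Its role is in the intended application: $f_\circ$ has a strictly smaller positive coefficient on $x_1$, so \eqref{eq:lemma4.4} can be iterated, and together with \eqref{eq:flip_sign}, which handles negative coefficients, this drives an induction that produces a formula for $\sigma(f)$. Similarly, integrality of the coefficients is only needed so that the resulting terms remain within the McNaughton class.

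The only delicate point is making the case boundaries exhaustive and consistent at the endpoints $g=-1,0,1$. At each of these points the adjacent formulas agree by continuity, so I expect no real obstacle.
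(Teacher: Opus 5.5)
Your proof is correct and follows the paper's argument essentially verbatim: \eqref{eq:flip_sign} from $\sigma(t)=1-\sigma(1-t)$, and \eqref{eq:lemma4.4} by the same four-way split on the value of $f_\circ(x)$. The paper likewise notes that the hypothesis $m_1\geq 1$ matters only for termination of \texttt{EXTR}, and your only departure is placing the boundary point $f_\circ(x)=0$ in the last case rather than the third, which is immaterial since your four cases already partition $\mathbb{R}$, so the closing continuity remark is unnecessary.
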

A proof, following~\cite{mundici1994constructive}, is given in \cref{app:proof_extractmv}.

The formula associated with the node $v_i^{(j)}$ is $[v_i^{(j)}] = \texttt{EXTR}(\sigma(f))$, where \texttt{EXTR} is the algorithm that applies \cref{lem:extractmv} recursively, lowering the coefficients $m_i$ until the recursion terminates. On input $\sigma(f)$, let $\ell$ and $\mathcal{L}$ be the minimum and maximum of $f$ over $[0,1]^n$. If $\ell\geq 1$, \texttt{EXTR} returns the constant $1$; if $\mathcal{L}\leq 0$, it returns the constant $0$. Otherwise $\ell<1$ and $\mathcal{L}>0$. Let $k$ be the smallest index with $m_k\neq 0$ (thus $m_1=\cdots=m_{k-1}=0$), so $x_k$ is the first variable present. As a nonzero integer, $m_k$ is either $\geq 1$ or $\leq -1$, and \texttt{EXTR} eliminates $x_k$ accordingly. Specifically, if $m_k\geq 1$, then by~\eqref{eq:lemma4.4} (with $x_k$ in the role of $x_1$) $\texttt{EXTR}(\sigma(f)) = (\tau_1\oplus x_k)\odot\tau_2$, where $\tau_1$ and $\tau_2$ are the results of \texttt{EXTR} applied to $\sigma((m_k-1)x_k+m_{k+1}x_{k+1}+\cdots+m_nx_n+b)$ and to the same argument with $b$ replaced by $b+1$. If $m_k\leq -1$, then by~\eqref{eq:flip_sign} $\texttt{EXTR}(\sigma(f)) = \lnot\,\texttt{EXTR}(\sigma(1-f))$, where $1-f = -m_kx_k-\cdots-m_nx_n-b+1$ has leading coefficient $-m_k\geq 1$, satisfying the assumption of \cref{lem:extractmv}. Each application of~\eqref{eq:lemma4.4} lowers $\sum_i|m_i|$, and~\eqref{eq:flip_sign} makes the leading coefficient positive and so is followed by~\eqref{eq:lemma4.4}; hence \texttt{EXTR} terminates. Eliminating the first present variable is a convention---\cref{lem:extractmv} applies with any present variable in the role of $x_1$, and the recursion terminates for every choice. Since every step preserves the truth function, the choice of the variable to be eliminated affects only the syntactic form of the resulting formula, not its truth function. \texttt{EXTR}, with the convention of eliminating the first variable present, is used throughout the paper; subsequent definitions and results are stated relative to this convention.
The recursion may leave $0\oplus\tau$ or $\tau\odot 1$ in the output; we identify these with $\tau$ (Ax.~1, 5, 5$'$). Each step of \texttt{EXTR} rewrites $\sigma(f)$ by one of the identities of \cref{lem:extractmv}, which hold for all $x\in[0,1]^n$, so that the formula returned has truth function $\sigma(f)$, that is, $[v_i^{(j)}]^\sI = \ang{v_i^{(j)}}$.

\begin{exmp}
  Consider a $\sigma$-node $v$ with local map $\ang{v} = \sigma(x_1-x_2+x_3-1)$ on $[0,1]^3$. Its input interval has
  \[
    \ell_v = \min_{x\in[0,1]^3}(x_1-x_2+x_3-1) = -2, \qquad
    \mathcal{L}_v = \max_{x\in[0,1]^3}(x_1-x_2+x_3-1) = 1,
  \]
  so $\ell_v<1$ and $\mathcal{L}_v>0$. The first nonzero coefficient is that of $x_1$, equal to $1$, so applying~\eqref{eq:lemma4.4} eliminates $x_1$ according to
  \[
    \sigma(x_1-x_2+x_3-1) = (\sigma(-x_2+x_3-1)\oplus x_1)\odot\sigma(-x_2+x_3).
  \]
  \texttt{EXTR} is then applied to the two $\sigma$-terms. Since $\sigma(-x_2+x_3-1)$ has $\mathcal{L} = 0$, \texttt{EXTR} returns $0$. The term $\sigma(-x_2+x_3)$ has input interval $[-1,1]$ with first nonzero coefficient $-1$ (that of $x_2$), so by~\eqref{eq:flip_sign} the sign is flipped according to
  \[
    \sigma(-x_2+x_3) = \lnot\sigma(x_2-x_3+1),
  \]
  and applying~\eqref{eq:lemma4.4} then eliminates $x_2$ according to
  \[
    \sigma(x_2-x_3+1) = (\sigma(-x_3+1)\oplus x_2)\odot\sigma(-x_3+2),
  \]
  where $\texttt{EXTR}(\sigma(-x_3+1)) = \lnot x_3$ and $\texttt{EXTR}(\sigma(-x_3+2)) = 1$. Substituting back, the formula associated with $v$ is
  \[
    [v] = x_1\odot\lnot(\lnot x_3\oplus x_2).
  \]
\end{exmp}

\subsection{Extraction, Step~III: Composing by substitution}\label{subsec:substitute}

Throughout this step, $v_i^{(j)}$, $d_j$, and $\twoang{\cdot}$ refer to the $\sigma$-network $\gM$ obtained in Extraction, Step~I from the $\rho$-network $\gN$. The \Luka formula corresponding to the full network is obtained by substituting, for each edge $(v_{i'}^{(j-1)}, v_i^{(j)})$, $i' = 1,\ldots,d_{j-1}$, $i = 1,\ldots,d_j$, $j = 1,\ldots,L$, the formula $[v_{i'}^{(j-1)}]$ into all occurrences of $x_{i'}$ in $[v_i^{(j)}]$, proceeding layer by layer from the inputs to the output. The result is a single \Luka formula, and we show next that its truth function equals $\ang{\gN}$ on $[0,1]^{d_0}$.

The substitutions act on the formulae, and correspond to composition at the level of truth functions as follows. We claim that, for every \Luka{} formula $\tau$ in the variables $x_1,\ldots,x_d$ and all \Luka{} formulae $\chi_1,\ldots,\chi_d$, the formula $\widetilde\tau$ obtained from $\tau$ by replacing every occurrence of $x_{i'}$ by $\chi_{i'}$, $i'=1,\ldots,d$, satisfies
\begin{equation}\label{eq:subst-semantics}
  \widetilde\tau^{\,\sI} = \tau^\sI(\chi_1^\sI,\ldots,\chi_{d}^\sI).
\end{equation}
Here a formula in the variables $x_1,\ldots,x_d$ need not contain all of them; its truth function is in every case regarded as a map $[0,1]^d\rightarrow[0,1]$, so that for a constant $\tau$ it is the constant map of $d$ variables. The claim holds trivially when $\tau$ is a single variable. It holds for a constant $\tau$ as well, since the replacement then leaves $\tau$ unchanged and both sides of \eqref{eq:subst-semantics} equal that constant. Suppose now that the claim holds for $\tau'$, and consider $\tau = \lnot\tau'$. The replacement affects only the variable occurrences in $\tau'$ and leaves the $\lnot$ in place, so that $\widetilde\tau = \lnot\big(\widetilde{\tau'}\big)$, where $\widetilde{\tau'}$ denotes the result of the replacement applied to $\tau'$, and evaluation applies $\lnot$ in $\sI$ to the truth function of $\widetilde{\tau'}$ (\cref{subsec:logic_and_algebra}), whence
\[
  \widetilde\tau^{\,\sI} = \lnot\big(\widetilde{\tau'}^{\,\sI}\big) = \lnot\big(\tau'^\sI(\chi_1^\sI,\ldots,\chi_d^\sI)\big) = \tau^\sI(\chi_1^\sI,\ldots,\chi_d^\sI).
\]
The cases $\tau = \gamma\oplus\eta$ and $\tau = \gamma\odot\eta$ follow in the same manner, the binary operation being applied to the truth functions of both parts and the induction hypothesis being invoked for $\gamma$ and for $\eta$. As every \Luka{} formula is built from variables and constants by these three operations, \eqref{eq:subst-semantics} holds in general.

We now apply \eqref{eq:subst-semantics} level by level, proving by induction on $j$ that the level-$j$ formulae, once the replacements performed in passing from the inputs to level $j$ have been carried out, have truth functions $\twoang{v_i^{(j)}}$, $i = 1,\ldots,d_j$.
At $j = 1$ the $\chi_{i'}$ are the input-node formulae $[v_{i'}^{(0)}] = x_{i'}$, so that the replacement leaves the level-$1$ formulae unchanged; their truth functions are $\ang{v_i^{(1)}} = \twoang{v_i^{(1)}}$ (\cref{subsec:single}).
For the induction step, let $j\in\{2,\ldots,L\}$ and denote by $\widetilde\tau_{i'}$, $i' = 1,\ldots,d_{j-1}$, the level-$(j-1)$ formulae once the replacements up to level $j-1$ have been carried out, so that $\widetilde\tau_{i'}^{\,\sI} = \twoang{v_{i'}^{(j-1)}}$ by the induction hypothesis. Taking $\tau = [v_i^{(j)}]$ and $\chi_{i'} = \widetilde\tau_{i'}$, \eqref{eq:subst-semantics} shows that the formula obtained at $v_i^{(j)}$ has as truth function the local map $\ang{v_i^{(j)}}$ evaluated at the global maps $\twoang{v_1^{(j-1)}},\ldots,\twoang{v_{d_{j-1}}^{(j-1)}}$ of the level-$(j-1)$ nodes, the parents of $v_i^{(j)}$, which is the global map $\twoang{v_i^{(j)}}$ (\cref{def:output_map}).
At $j = L$ the claim reads $\twoang{\vout} = \ang{\gM} = \ang{\gN}$ (\cref{subsec:extract_step1}), the truth function of the formula returned by Extraction, Step~III.

\section{Graphical representation of formulae}\label{sec:graphical_representation}

The extraction algorithm of~\cite{nn2mv2024}, which we recast in \cref{sec:extraction} to act on the ReLU network's computational graph, produces a \Luka{} formula as a finite string. Extraction, Step~I converts the ReLU network to a $\sigma$-network and Step~II labels each node of that $\sigma$-network with a formula whose truth function is the node's local map; Step~III then composes these node formulae by substitution into a single string. It is this last step that discards the network's graph structure, thereby losing the information needed to reconstruct the network from the formula. Stopping at Step~II instead---retaining the $\sigma$-network's layered graph together with its node formulae---preserves that structural information. Working on this architecture-preserving graph, rather than on the flattened formula, is what makes identification of networks possible. This section systematically develops the layered graph, with its node formulae, into a graphical representation of the extracted formula. The result is a \emph{compositional normal form}\footnote{Classical normal forms for \Luka{} logic are \emph{flat}. The form of~\cite{dinolaNormalFormsLukasiewicz2004}, for instance, writes every McNaughton function as a \change{conjunction of disjunctions of \emph{simple McNaughton functions}, the conjunction and disjunction being the lattice connectives $\wedge=\min$ and $\vee=\max$ (not the strong $\odot,\oplus$)}. The form developed here is compositional instead; \cref{rem:dinola} draws the comparison in detail.}. \change{By \cref{prop:normal_form}, every McNaughton function admits such a form.} \change{The entire identification program} \changec{in this paper} \change{is organized around} \changeb{the compositional normal form.}

The following example demonstrates the loss of structure in Extraction, Step~III concretely. \change{The same McNaughton function is realized by two networks of different architecture, and Step~III maps both to strings that differ only in bracketing---a difference dissolved by the associativity axiom Ax.~2$'$---so the extracted string cannot tell the two architectures apart.}

\begin{exmp}\label{exmp:3-1}
  The two networks $\gN_1$ and $\gN_2$ in \cref{fig:3-fig1} have different architectures, $(1,2,1,1)$ and $(1,1,1,1)$, but realize the same function on $[0,1]$, namely
  \[
  \ang{\gN_1}(x_1) = \ang{\gN_2}(x_1) = \begin{cases} 0, & 0\leq x_1\leq 5/6, \\ 6x_1-5, & 5/6<x_1\leq 1. \end{cases}
  \]
  Reading the weights and biases off \cref{fig:3-fig1}, $\gN_1$ computes $\rho(\rho(4x_1-3)+\rho(2x_1-1)-1)$ and $\gN_2$ computes $\rho(3\rho(2x_1-1)-2)$, both equal to $\max\{6x_1-5,0\}$ on $[0,1]$.
  Applying Extraction, Steps~I and~II yields distinct $\sigma$-networks $\gM_1$, $\gM_2$ (\cref{fig:3-fig2}) with distinct node formulae (\cref{fig:3-fig3})\change{; the passage from the $\sigma$-networks to the node formulae is done by applying \cref{lem:extractmv} at each node}.
  Because every node's output stays within the unit interval $[0,1]$, \change{$\sigma$ and $\rho$ agree at every node}, and Extraction, Step~I introduces no new nodes; it merely replaces each $\rho$ by $\sigma$ and applies $\sigma$ at the output node, so $\gM_1$ and $\gM_2$ coincide with $\gN_1$ and $\gN_2$ apart from the activation function.
  Extraction, Step~III---composing the node formulae by substitution, layer by layer from the inputs to the output---flattens $\gM_1$ to $(x_1\odot(x_1\odot(x_1\odot x_1)))\odot(x_1\odot x_1)$ and $\gM_2$ to $(x_1\odot x_1)\odot((x_1\odot x_1)\odot(x_1\odot x_1))$, two $\odot$-products of six copies of $x_1$ differing only in bracketing, hence interderivable by the associativity axiom Ax.~2$'$ alone.
  That the two extracted formulae differ only in bracketing is special to this example. Functionally equivalent networks need not extract to strings so closely related; two networks realizing $\max\{x_1,x_2\}$, for instance, can extract to the distinct formulae $x_2\oplus(x_1\odot\lnot x_2)$ and $x_1\oplus\lnot(\lnot x_2\oplus x_1)$ (\cref{fig:max_distinct}). \cref{them:chang} guarantees that any two strings with the same underlying truth function are related through application of the MV axioms; \cref{sec:manipulate} lifts this derivability to substitution graphs.
\end{exmp}

\begin{figure}[tb]
  \centering
  \resizebox{7cm}{!}{\begin{tikzpicture}[
  neuron/.style={circle, draw, minimum size=0.9cm, inner sep=0pt},
  >=Stealth, 
  node distance=1.2cm and 1.6cm
]
  \def\labelsep{2pt}  
  \def\yshift{0.5cm}
    \def\xshift{1.cm}
    \def\yshifty{0.2cm}

    \node[]  at (-1.8, 7) {$\gN_1$};

\node[neuron] (i1) at (0, 0) {};
\node[neuron, above=of i1, xshift=-\xshift] (h11){ \makebox[0pt][c]{$-3,\rho$}}; 
\node[neuron, above=of i1, xshift=\xshift] (h12) {$-1,\rho$};

 \draw[->] (i1) -- (h11)node[near end, sloped, above] {$4$};
  \draw[->] (i1) -- (h12)node[near end, sloped, above] {$2$};


\node[neuron, above=of h11, xshift=\xshift] (h21) {$-1,\rho$};
\node[neuron, above=of h21] (h31) {$0$};
  \draw[->] (h11) -- (h21)node[near end, sloped, above] {$1$};
  \draw[->] (h12) -- (h21)node[near end, sloped, above] {$1$};
    \draw[->] (h21) -- (h31)node[near end, sloped, above] {$1$};


\begin{scope}[xshift=8cm]
     \node[]  at (-1.8, 7) {$\gN_2$};

\node[neuron] (i1) at (0, 0) {};
\node[neuron, above=of i1, xshift=0] (h11){ \makebox[0pt][c]{$-1,\rho$}}; 

 \draw[->] (i1) -- (h11)node[near end, sloped, above] {$2$};

\node[neuron, above=of h11, xshift=0] (h21) {$-2,\rho$};
\node[neuron, above=of h21] (h31) {$0$};
  \draw[->] (h11) -- (h21)node[near end, sloped, above] {$3$};
    \draw[->] (h21) -- (h31)node[near end, sloped, above] {$1$};

\end{scope}

\end{tikzpicture}}
  \caption{Two functionally equivalent ReLU networks $\gN_1$, $\gN_2$ with different architectures. A hidden node is labeled by its bias and activation function, and an edge by its weight; the output node applies no activation function and is labeled by its bias alone.}\label{fig:3-fig1}
\end{figure}

\begin{figure}[tb]
  \centering
  \resizebox{7cm}{!}{\begin{tikzpicture}[
  neuron/.style={circle, draw, minimum size=0.9cm, inner sep=0pt},
  >=Stealth, 
  node distance=1.2cm and 1.6cm
]
  \def\labelsep{2pt}  
  \def\yshift{0.5cm}
    \def\xshift{1.cm}
    \def\yshifty{0.2cm}

    \node[]  at (-1.8, 7) {$\gM_1$};

\node[neuron] (i1) at (0, 0) {};
\node[neuron, above=of i1, xshift=-\xshift] (h11){ \makebox[0pt][c]{$-3,\sigma$}}; 
\node[neuron, above=of i1, xshift=\xshift] (h12) {$-1,\sigma$};

 \draw[->] (i1) -- (h11)node[near end, sloped, above] {$4$};
  \draw[->] (i1) -- (h12)node[near end, sloped, above] {$2$};


\node[neuron, above=of h11, xshift=\xshift] (h21) {$-1,\sigma$};
\node[neuron, above=of h21] (h31) {$0,\sigma$};
  \draw[->] (h11) -- (h21)node[near end, sloped, above] {$1$};
  \draw[->] (h12) -- (h21)node[near end, sloped, above] {$1$};
    \draw[->] (h21) -- (h31)node[near end, sloped, above] {$1$};


\begin{scope}[xshift=8cm]
     \node[]  at (-1.8, 7) {$\gM_2$};

\node[neuron] (i1) at (0, 0) {};
\node[neuron, above=of i1, xshift=0] (h11){ \makebox[0pt][c]{$-1,\sigma$}}; 

 \draw[->] (i1) -- (h11)node[near end, sloped, above] {$2$};

\node[neuron, above=of h11, xshift=0] (h21) {$-2,\sigma$};
\node[neuron, above=of h21] (h31) {$0,\sigma$};
  \draw[->] (h11) -- (h21)node[near end, sloped, above] {$3$};
    \draw[->] (h21) -- (h31)node[near end, sloped, above] {$1$};

\end{scope}

\end{tikzpicture}}
  \caption{$\sigma$-networks $\gM_1$ and $\gM_2$ obtained from $\gN_1$ and $\gN_2$ by Extraction, Step~I.}\label{fig:3-fig2}
\end{figure}

\begin{figure}[tb]
  \centering
  \resizebox{10cm}{!}{\begin{tikzpicture}[
  neuron/.style={circle, draw, minimum size=0.5cm, inner sep=0pt},
  >=Stealth, 
  node distance=1.2cm and 1.6cm
]
  \def\labelsep{2pt}  
  \def\yshift{0.5cm}
    \def\xshift{1.cm}
    \def\yshifty{0.2cm}

    \node[]  at (-1.8, 6) {$\gM_1$};

\node[neuron] (i1) at (0, 0) {};
\node at (i1.east) [right=\labelsep] {$x_1$};
\node[neuron, above=of i1, xshift=-\xshift] (h11){ \makebox[0pt][c]{}}; 
\node[neuron, above=of i1, xshift=\xshift] (h12) {};
\node at (h11.west) [left=\labelsep] {$x_1\odot(x_1\odot(x_1\odot x_1))$};

\node at (h12.east) [right=\labelsep] {$x_1\odot x_1$};

 \draw[->] (i1) -- (h11)node[near end, sloped, above] {};
  \draw[->] (i1) -- (h12)node[near end, sloped, above] {};


\node[neuron, above=of h11, xshift=\xshift] (h21) {};
\node at (h21.east) [right=\labelsep] {$x_1\odot x_2$};

\node[neuron, above=of h21] (h31) {};
\node at (h31.east) [right=\labelsep] {$x_1$};
  \draw[->] (h11) -- (h21)node[midway, sloped, above] {$x_1$};
  \draw[->] (h12) -- (h21)node[midway, sloped, above] {$x_2$};
    \draw[->] (h21) -- (h31)node[near end, sloped, above] {};


\begin{scope}[xshift=6cm]
     \node[]  at (-1.8, 6) {$\gM_2$};

\node[neuron] (i1) at (0, 0) {};
\node at (i1.east) [right=\labelsep] {$x_1$};
\node[neuron, above=of i1, xshift=0] (h11){ \makebox[0pt][c]{}}; 

 \draw[->] (i1) -- (h11)node[near end, sloped, above] {};
 \node at (h11.east) [right=\labelsep] {$x_1\odot x_1$};

\node[neuron, above=of h11, xshift=0] (h21) {};
 \node at (h21.east) [right=\labelsep] {$x_1\odot(x_1\odot x_1)$};
\node[neuron, above=of h21] (h31) {};
 \node at (h31.east) [right=\labelsep] {$x_1$};
  \draw[->] (h11) -- (h21)node[near end, sloped, above] {};
    \draw[->] (h21) -- (h31)node[near end, sloped, above] {};

\end{scope}

\end{tikzpicture}}
  \caption{Formulae associated with individual nodes after Extraction, Step~II. Each node is labeled by its extracted formula, in the variables supplied by its parents; substituting these formulae bottom-up yields the formula realized by the network. For nodes with several parents, the incoming edges are labeled by the variable each supplies---so the node of $\gM_1$ where the two branches meet forms their conjunction $x_1\odot x_2$.}\label{fig:3-fig3}
\end{figure}

\begin{figure}[tb]
  \centering
  \resizebox{11cm}{!}{\begin{tikzpicture}[
  neuron/.style={circle, draw, minimum size=0.9cm, inner sep=0pt},
  >=Stealth, 
  node distance=1.2cm and 1.6cm
]
  \def\labelsep{3pt}

  \node[] at (-1.4, 4.6) {$\gN_3$};

  \node[neuron] (i1) at (0,0) {};
  \node[neuron] (i2) at (2.4,0) {};
  \node at (i1.south) [below=\labelsep] {$x_1$};
  \node at (i2.south) [below=\labelsep] {$x_2$};

  \node[neuron] (a) at (0.4,2.3) {$0,\rho$};
  \node[neuron] (b) at (2.8,2.3) {$0,\rho$};

  \draw[->] (i1) -- (a) node[near end, sloped, above] {$1$};
  \draw[->] (i2) -- (a) node[pos=0.35, sloped, above] {$-1$};
  \draw[->] (i2) -- (b) node[near end, sloped, above] {$1$};

  \node[neuron] (o) at (1.6,4.6) {$0$};
  \draw[->] (a) -- (o) node[near end, sloped, above] {$1$};
  \draw[->] (b) -- (o) node[near end, sloped, above] {$1$};

  \begin{scope}[xshift=6cm]
    \node[] at (-1.4, 4.6) {$\gN_4$};

    \node[neuron] (j1) at (0,0) {};
    \node[neuron] (j2) at (2.4,0) {};
    \node at (j1.south) [below=\labelsep] {$x_1$};
    \node at (j2.south) [below=\labelsep] {$x_2$};

    \node[neuron] (c) at (0.4,2.3) {$0,\rho$};
    \node[neuron] (d) at (2.8,2.3) {$0,\rho$};

    \draw[->] (j2) -- (c) node[near end, sloped, above] {$1$};
    \draw[->] (j1) -- (c) node[pos=0.35, sloped, above] {$-1$};
    \draw[->] (j1) -- (d) node[near end, sloped, above] {$1$};

    \node[neuron] (p) at (1.6,4.6) {$0$};
    \draw[->] (c) -- (p) node[near end, sloped, above] {$1$};
    \draw[->] (d) -- (p) node[near end, sloped, above] {$1$};
  \end{scope}

\end{tikzpicture}}
  \caption{\changec{Two networks $\gN_3$ and $\gN_4$ that both realize $\max\{x_1,x_2\}$ on $[0,1]^2$ yet extract to the distinct formulae $x_2\oplus(x_1\odot\lnot x_2)$ and $x_1\oplus\lnot(\lnot x_2\oplus x_1)$; zero-weight edges are omitted.}}\label{fig:max_distinct}
\end{figure}

\subsection{Substitution graphs}

In this subsection we define the \emph{substitution graph}, the central object of the paper's development. A substitution graph consists of a layered graph together with \Luka{} formulae attached to its nodes, the edges prescribing how these formulae compose; the represented formula is read off the graph by substituting, along each edge, the formula at a node for the corresponding variable in its children. While the definition stands on its own, the substitution graphs from which our derivations start arise from extraction, as the layered graph of a $\sigma$-network with each node labeled by the formula extracted from it. The underlying operation---replacing the variables of a formula by other formulae---is \emph{substitution}, which we now define formally.

\begin{defn}[Substitution]\label{def:substitution}
  Let $x_1,\ldots,x_n$ be propositional variables, and let $\{x_{i_1},\ldots,x_{i_k}\}\subseteq\{x_1,\ldots,x_n\}$. A \emph{substitution} $\zeta$ is a finite set $\zeta = \{x_{i_1}\mapsto\chi_1,\ldots,x_{i_k}\mapsto\chi_k\}$ whose \emph{substitutors} $\chi_1,\ldots,\chi_k$ are \Luka{} formulae; the set $\{x_{i_1},\ldots,x_{i_k}\}$ is the \emph{domain} of $\zeta$. Applying $\zeta$ to a \Luka{} formula $\tau$ simultaneously replaces every occurrence of each $x_{i_j}$ by $\chi_j$; the resulting formula is denoted $\tau(\zeta)$, written out as $\tau(\{x_{i_1}\mapsto\chi_1,\ldots,x_{i_k}\mapsto\chi_k\})$. If a particular $x_{i_j}$ does not occur in $\tau$, replacing it has no effect, e.g., $(x_2\odot x_3)(\{x_1\mapsto x_1\oplus x_1\}) = x_2\odot x_3$. When $\tau$ is a constant ($\tau\in\{0,1\}$), no replacement applies and $\tau(\zeta)=\tau$.
\end{defn}

For example, $(x_1\odot\lnot x_1)(\{x_1\mapsto x_2\oplus x_3\}) = (x_2\oplus x_3)\odot\lnot(x_2\oplus x_3)$.

\noindent We now attach formulae to the nodes of a layered graph and use the edges to compose them.

\begin{defn}[Substitution graph]\label{defn:composition_graph}
  Let $(V,E)$ be a layered graph of architecture $(d_0,\ldots,d_L)$, and let $\gC = \{[v]:v\in V\}$ assign a \Luka{} formula to every node, as follows. Each input node carries a distinct variable, $[v_i^{(0)}] = x_i$, $i=1,\ldots,d_0$. A node $v_i^{(j)}$ at level $j\in\{1,\ldots,L\}$, $i = 1,\ldots,d_j$, has the $d_{j-1}$ nodes of level $j-1$ as its parents, and carries a \Luka{} formula $[v_i^{(j)}]$ in the variables $x_1,\ldots,x_{d_{j-1}}$ (for $j=1$, the $d_0$ input variables $x_1,\ldots,x_{d_0}$), the variable $x_{i'}$ being associated with the parent $v_{i'}^{(j-1)}$, $i'=1,\ldots,d_{j-1}$.
  We call $\gG = (V,E,\gC)$ a \emph{substitution graph} and $(V,E)$ its \emph{layered graph}.

  The node formulae can be composed through layer substitutions. For each $j=1,\ldots,L$, the \emph{layer substitution}
  \[ \zeta^{(j-1,j)} = \{x_{i'}\mapsto[v_{i'}^{(j-1)}] : i'=1,\ldots,d_{j-1}\} \]
  replaces, in the formula of every level-$j$ node, each variable by the formula of the associated parent at level $j-1$. The formula \emph{represented} by $\gG$ is obtained by applying the layer substitutions starting from $[\vout]$ and working layer by layer toward the inputs,
  \begin{equation}\label{eq:ext-1}
    [\gG] = [\vout]\,(\zeta^{(L-1,L)})(\zeta^{(L-2,L-1)})\cdots(\zeta^{(0,1)}),
  \end{equation}
  that is, $\zeta^{(L-1,L)}$ is applied to $[\vout]$, then $\zeta^{(L-2,L-1)}$ to the resulting formula, and so on, finishing with $\zeta^{(0,1)}$---iterated application reads from the left, each substitution applying to the formula produced thus far, i.e., $\tau(\zeta)(\zeta') = \big(\tau(\zeta)\big)(\zeta')$.
\end{defn}

Reading out $[\gG]$ via \eqref{eq:ext-1} composes the node formulae from the output inward, whereas Extraction, Step~III composes the same formulae from the inputs outward. Establishing that the two yield the same formula requires the composition of substitutions, which we define next.

\begin{defn}[Substitution composition]\cite[pp.~74]{duffyPrinciplesAutomatedTheorem}\cite{robinson2001handbook}\label{def:substituion_composition}
  Let $\zeta = \{x_{i_1}\mapsto\chi_1,\ldots,x_{i_k}\mapsto\chi_k\}$ be a substitution with \Luka{} substitutors $\chi_1,\ldots,\chi_k$, and let $\zeta'$ be a substitution. The \emph{composition} $\zeta\blackcirc\zeta'$ applies $\zeta'$ to each substitutor of $\zeta$,
  \[
    \zeta\blackcirc\zeta' = \{x_{i_1}\mapsto\chi_1(\zeta'),\ldots,x_{i_k}\mapsto\chi_k(\zeta')\}.
  \]
\end{defn}

\begin{lem}\label{lem:substitution_composition}
  Let $\zeta = \{x_{i_1}\mapsto\chi_1,\ldots,x_{i_k}\mapsto\chi_k\}$ and $\zeta'$ be substitutions, and let $\tau$ be a formula with variables in $\{x_{i_1},\ldots,x_{i_k}\}$. Applying $\zeta$ to $\tau$ and then $\zeta'$ to the result yields the same formula as applying the single substitution $\zeta\blackcirc\zeta'$ to $\tau$,
  \[
    \big(\tau(\zeta)\big)(\zeta') = \tau(\zeta\blackcirc\zeta').
  \]
\end{lem}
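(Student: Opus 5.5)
The plan is structural induction on $\tau$, following the same pattern used above to establish \eqref{eq:subst-semantics}. The only preliminary observation is what happens to a single variable. Let $x_{i_j}$ be a variable in the domain of $\zeta$.

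\begin{itemize}
\item On the left-hand side, $\zeta$ first turns $x_{i_j}$ into $\chi_j$, and $\zeta'$ then turns this into $\chi_j(\zeta')$.
\item On the right-hand side, $\zeta\blackcirc\zeta'$ maps $x_{i_j}$ directly to $\chi_j(\zeta')$, by \cref{def:substituion_composition}.
\end{itemize}

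So the two sides agree on variables. This is precisely where the hypothesis that the variables of $\tau$ lie in $\{x_{i_1},\ldots,x_{i_k}\}$ is needed. A variable outside the domain of $\zeta$ would be left unchanged by $\zeta$ and then possibly replaced by $\zeta'$. The composed substitution, whose domain equals that of $\zeta$, would leave it untouched.

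The base cases are as follows.

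\begin{itemize}
\item If $\tau=x_{i_j}$, the observation above gives the identity directly.
\item If $\tau\in\{0,1\}$, every substitution fixes $\tau$ by \cref{def:substitution}, so both sides equal $\tau$.
\end{itemize}

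For the inductive step, I use that substitution commutes with the connectives. Simultaneous replacement of variable occurrences leaves the outer connective in place. Hence $(\lnot\tau')(\zeta)=\lnot(\tau'(\zeta))$, and likewise $(\gamma\oplus\eta)(\zeta)=\gamma(\zeta)\oplus\eta(\zeta)$ and $(\gamma\odot\eta)(\zeta)=\gamma(\zeta)\odot\eta(\zeta)$, for any substitution.

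The subformulae $\tau'$, $\gamma$ and $\eta$ have variables among those of $\tau$, so the induction hypothesis applies to each. For negation this gives
\[
\big((\lnot\tau')(\zeta)\big)(\zeta')=\lnot\big(\tau'(\zeta)(\zeta')\big)=\lnot\big(\tau'(\zeta\blackcirc\zeta')\big)=(\lnot\tau')(\zeta\blackcirc\zeta').
\]
The binary cases follow in the same way, applying the hypothesis to both operands. Since every formula is built from variables and constants by $\lnot$, $\oplus$ and $\odot$ (\cref{defn:MV terms}), the identity holds for all $\tau$ with variables in the domain of $\zeta$.

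I expect no real obstacle; the argument is syntactic bookkeeping. The one point to state carefully is the simultaneity of substitution. The substitutors $\chi_j$ may themselves contain variables in the domain of $\zeta$. These must not be rewritten again by $\zeta$; only $\zeta'$ acts on them. This is what makes the left-hand side equal $\chi_j(\zeta')$ rather than something iterated, so the proof should stress that applying $\zeta$ acts once on the original occurrences only.
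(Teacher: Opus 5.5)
Your proposal is correct and follows the paper's own proof essentially step for step: structural induction on $\tau$, with constants and single variables as base cases (the variable case being exactly the definition of $\zeta\blackcirc\zeta'$), and an induction step that uses the fact that substitution commutes with $\lnot$, $\oplus$, and $\odot$. Your remarks on why the domain hypothesis and the simultaneity of substitution matter are accurate.
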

\vspace{-2ex}
\begin{proof}
  See \cref{app:proof_substitution_composition}.
\end{proof}

We now formalize the substitution graph that extraction associates with a ReLU network.
\begin{defn}[Extracted substitution graph]\label{defn:extracted_graph}
For a ReLU network $\gN$, let $(V,E)$ be the layered graph of the $\sigma$-network obtained from $\gN$ by Extraction, Step~I, and let $\gC$ be the node formulae produced by Extraction, Step~II. We call $\ext(\gN) = (V,E,\gC)$ the substitution graph \emph{extracted} from $\gN$.
\end{defn}

It remains to relate $[\gG]$, the formula represented by $\gG$ in the sense of \cref{defn:composition_graph}, to the string produced by Extraction, Step~III. Step~III composes the node formulae from the inputs outward, whereas \eqref{eq:ext-1} composes them from the output inward; the next proposition shows that the two orders yield the same formula, so $[\gG]$ is exactly the string Extraction, Step~III returns.

\begin{prop}\label{prop:stepIII_equals_graph}
  For every substitution graph $\gG$, Extraction, Step~III, applied to the node formulae of $\gG$, returns the formula $[\gG]$ defined in \eqref{eq:ext-1}. Moreover, the iterated application in \eqref{eq:ext-1} can be carried out in a single substitution,
  \begin{equation}\label{eq:graph_single_subst}
    [\gG] = [\vout](\eta_L),
  \end{equation}
  where $\eta_1 = \zeta^{(0,1)}$ and $\eta_{j+1} = \zeta^{(j,j+1)}\blackcirc\eta_j$, for $j = 1,\ldots,L-1$.
\end{prop}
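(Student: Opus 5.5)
The plan is to prove \eqref{eq:graph_single_subst} first and derive the equality with Extraction, Step~III from it. Both follow by induction on depth from \cref{lem:substitution_composition}.

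\textbf{Single-substitution form.} Define $\tau_j = [\vout](\zeta^{(L-1,L)})\cdots(\zeta^{(L-j,L-j+1)})$, the formula after the first $j$ substitutions in \eqref{eq:ext-1}. I would prove by induction on $k$ that applying the last $k$ layer substitutions, $\zeta^{(k-1,k)},\ldots,\zeta^{(0,1)}$, to a formula whose variables lie in $\{x_1,\ldots,x_{d_k}\}$ equals applying the single substitution $\eta_k$.

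For $k=1$ this holds by definition, since $\eta_1=\zeta^{(0,1)}$.

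For the step, take $\tau$ with variables among $x_1,\ldots,x_{d_k}$; these form exactly the domain of $\zeta^{(k,k+1)}$. Every substitutor $[v_{i'}^{(k)}]$ of $\zeta^{(k,k+1)}$ has variables among $x_1,\ldots,x_{d_{k-1}}$. So the formula $\tau(\zeta^{(k,k+1)})$ again has variables in $\{x_1,\ldots,x_{d_{k-1}}\}$. By the induction hypothesis, applying the remaining substitutions to it equals applying $\eta_k$. Then \cref{lem:substitution_composition} gives
\[
\tau(\zeta^{(k,k+1)})(\eta_k)=\tau(\zeta^{(k,k+1)}\blackcirc\eta_k)=\tau(\eta_{k+1}).
\]
Applying this with $\tau=[\vout]$ and $k=L$ yields \eqref{eq:graph_single_subst}.

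The main point needing care is the variable bookkeeping. \cref{lem:substitution_composition} requires $\tau$'s variables to lie in the domain of $\zeta$. The same names $x_1,x_2,\ldots$ are reused across levels, so at each stage I must check that the current formula only mentions the variables indexing the next-lower level.

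\textbf{Agreement with Step~III.} Step~III works from the inputs outward. It produces, for each level-$j$ node, the formula $\widetilde\tau_i^{(j)}$, defined by
\[
\widetilde\tau_i^{(1)}=[v_i^{(1)}], \qquad \widetilde\tau_i^{(j)}=[v_i^{(j)}]\big(\{x_{i'}\mapsto\widetilde\tau_{i'}^{(j-1)}\}\big).
\]
At $j=1$ the input substitution $x_{i'}\mapsto x_{i'}$ is the identity.

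I would show by induction that $\eta_j=\{x_i\mapsto\widetilde\tau_i^{(j)}: i=1,\ldots,d_j\}$. For $j=1$, $\eta_1$ maps $x_i\mapsto[v_i^{(1)}]$, which is $\widetilde\tau_i^{(1)}$ because the level-$1$ formulae use only input variables and $\zeta^{(0,1)}$ maps each to itself. For the step, \cref{def:substituion_composition} gives
\[
\eta_{j+1}=\{x_i\mapsto[v_i^{(j)}](\eta_j)\}=\{x_i\mapsto\widetilde\tau_i^{(j+1)}\},
\]
by the induction hypothesis. Since $\vout$ is the single level-$L$ node, Step~III returns $[\vout](\eta_{L-1})=[\vout](\zeta^{(L-1,L)}\blackcirc\eta_{L-1})$, which equals $[\vout](\eta_L)=[\gG]$ by the definition of $\eta_L$ and \cref{lem:substitution_composition}.
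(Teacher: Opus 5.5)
Your plan is the paper's own. The first induction uses \cref{lem:substitution_composition} to collapse the iterated layer substitutions of \eqref{eq:ext-1} into $\eta_L$, and the second identifies the formulae produced by Extraction, Step~III with images under the $\eta_j$. The first induction is correct in execution, but its claim should be stated for formulae with variables in $\{x_1,\ldots,x_{d_{k-1}}\}$, the domain of $\zeta^{(k-1,k)}$ and of $\eta_k$, not $\{x_1,\ldots,x_{d_k}\}$. With the latter set the claim can fail when $d_k>d_{k-1}$: a variable $x_i$ with $d_{k-1}<i\leq d_k$ is untouched by $\zeta^{(k-1,k)}$ and by $\eta_k$, but can be replaced by a later layer substitution. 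Your inductive step and the final application with $\tau=[\vout]$, $k=L$, in fact use the correct set.

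The second induction is wrong as written: it is shifted by one level. The substitution $\eta_j$ is the one applied to level-$j$ formulae. Its domain is $\{x_1,\ldots,x_{d_{j-1}}\}$ and its substitutors are the rewritten level-$(j-1)$ formulae. This breaks the argument in three places:
\begin{enumerate}
\item Base case: $\eta_1=\zeta^{(0,1)}$ is the identity $\{x_i\mapsto x_i\}$, not $\{x_i\mapsto[v_i^{(1)}]\}$.
\item Inductive step: the equality $[v_i^{(j)}](\eta_j)=\widetilde\tau_i^{(j+1)}$ cannot hold, since the right-hand side is built from $[v_i^{(j+1)}]$.
\item Closing display: it compensates with the false equality $[\vout](\eta_{L-1})=[\vout](\zeta^{(L-1,L)}\blackcirc\eta_{L-1})$. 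By \cref{lem:substitution_composition} the right-hand side is $[\vout](\zeta^{(L-1,L)})(\eta_{L-1})$, so $\zeta^{(L-1,L)}$ has simply been dropped.
\end{enumerate}
A depth-two graph with one input, hidden formula $\lnot x_1$, and output formula $x_1\oplus x_1$ exhibits all of this. There $\eta_1=\{x_1\mapsto x_1\}$ and $[\vout](\eta_1)=x_1\oplus x_1$, whereas Step~III returns $\lnot x_1\oplus\lnot x_1=[\vout](\eta_2)$.

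The correct identity is $\eta_{j+1}=\{x_i\mapsto\widetilde\tau_i^{(j)}:i=1,\ldots,d_j\}$ for $j=0,\ldots,L-1$, with $\widetilde\tau_i^{(0)}=x_i$. This amounts to the paper's claim $\widetilde\tau_i^{(j)}=[v_i^{(j)}](\eta_j)$. Its inductive step follows from \cref{def:substituion_composition} together with your recursion:
\[
  \eta_{j+1}=\bigl\{x_i\mapsto[v_i^{(j)}](\eta_j)\bigr\}=\bigl\{x_i\mapsto[v_i^{(j)}]\bigl(\{x_{i'}\mapsto\widetilde\tau_{i'}^{(j-1)}\}\bigr)\bigr\}=\bigl\{x_i\mapsto\widetilde\tau_i^{(j)}\bigr\}.
\]
At $j=L-1$ it shows that Step~III returns $[\vout]\bigl(\{x_{i'}\mapsto\widetilde\tau_{i'}^{(L-1)}\}\bigr)=[\vout](\eta_L)$ outright, with no further appeal to \cref{lem:substitution_composition}.
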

\begin{proof}
  Equation \eqref{eq:ext-1} reads out $[\gG]$ by applying the layer substitutions $\zeta^{(L-1,L)},\ldots,\zeta^{(0,1)}$ to $[\vout]$ from the output inward, whereas Extraction, Step~III rewrites the node formulae from the inputs outward. We show that both procedures amount to applying to $[\vout]$ the same substitution $\eta_L$, given by the recursion $\eta_1 = \zeta^{(0,1)}$, $\eta_{j+1} = \zeta^{(j,j+1)}\blackcirc\eta_j$, for $j = 1,\ldots,L-1$.

  For \eqref{eq:ext-1}, we claim that, for $j = 1,\ldots,L$, every formula $\theta$ with variables in $\{x_1,\ldots,x_{d_{j-1}}\}$ satisfies
  \[
    \theta\,(\zeta^{(j-1,j)})(\zeta^{(j-2,j-1)})\cdots(\zeta^{(0,1)}) = \theta(\eta_j);
  \]
  the case $j = L$, $\theta = [\vout]$ gives $[\gG] = [\vout](\eta_L)$. The claim is proved by induction on $j$. For $j = 1$ it follows from the definition of $\eta_1$. For the step from $j$ to $j+1$, assume that every formula $\theta$ with variables in $\{x_1,\ldots,x_{d_{j-1}}\}$ satisfies the identity above; we must show the corresponding statement for $j+1$, that is, that every formula $\theta'$ with variables in $\{x_1,\ldots,x_{d_{(j+1)-1}}\} = \{x_1,\ldots,x_{d_j}\}$, the domain of $\zeta^{(j,j+1)}$, satisfies $\theta'\,(\zeta^{(j,j+1)})(\zeta^{(j-1,j)})\cdots(\zeta^{(0,1)}) = \theta'(\eta_{j+1})$. Fix such a $\theta'$. Every variable of $\theta'$ is replaced, under $\zeta^{(j,j+1)}$, by its substitutor, a level-$j$ node formula with variables in $\{x_1,\ldots,x_{d_{j-1}}\}$ (\cref{defn:composition_graph}); the formula $\theta'(\zeta^{(j,j+1)})$ hence has its variables in $\{x_1,\ldots,x_{d_{j-1}}\}$. The induction assumption, applied with $\theta = \theta'(\zeta^{(j,j+1)})$, thus yields the first equality in
  \[
    \theta'\,(\zeta^{(j,j+1)})(\zeta^{(j-1,j)})\cdots(\zeta^{(0,1)}) = \big(\theta'(\zeta^{(j,j+1)})\big)(\eta_j) = \theta'(\zeta^{(j,j+1)}\blackcirc\eta_j) = \theta'(\eta_{j+1});
  \]
  the second equality is \cref{lem:substitution_composition}, applicable since the variables of $\theta'$ lie in $\{x_1,\ldots,x_{d_j}\}$, the domain of $\zeta^{(j,j+1)}$.

  For Extraction, Step~III, we claim that, for $j = 1,\ldots,L$, the rewritten formula of every level-$j$ node $v_i^{(j)}$ is $[v_i^{(j)}](\eta_j)$; the case $j = L$ shows that Step~III returns $[\vout](\eta_L)$. The claim is again proved by induction on $j$. For $j = 1$, Step~III substitutes, for each edge $(v_{i'}^{(0)}, v_i^{(1)})$, the formula $[v_{i'}^{(0)}] = x_{i'}$ (\cref{defn:composition_graph}) into all occurrences of $x_{i'}$ in $[v_i^{(1)}]$; the substitution applied, $\zeta^{(0,1)} = \{x_{i'}\mapsto x_{i'}\}$, is thus the identity, and the rewritten formula of $v_i^{(1)}$ is $[v_i^{(1)}] = [v_i^{(1)}](\zeta^{(0,1)}) = [v_i^{(1)}](\eta_1)$. For the step from $j$ to $j+1$, assume that the rewritten formula of every level-$j$ node $v_i^{(j)}$ is $[v_i^{(j)}](\eta_j)$; we must show that the rewritten formula of every level-$(j+1)$ node $v_i^{(j+1)}$ is $[v_i^{(j+1)}](\eta_{j+1})$. Step~III replaces, for each edge $(v_{i'}^{(j)}, v_i^{(j+1)})$, all occurrences of $x_{i'}$ in $[v_i^{(j+1)}]$ by the rewritten parent formula, which, by the induction assumption, is $[v_{i'}^{(j)}](\eta_j)$; it thus applies to $[v_i^{(j+1)}]$ the substitution $\{x_{i'}\mapsto[v_{i'}^{(j)}](\eta_j) : i' = 1,\ldots,d_j\}$, which, by \cref{def:substituion_composition}, equals $\zeta^{(j,j+1)}\blackcirc\eta_j$, that is, $\eta_{j+1}$. The rewritten formula of $v_i^{(j+1)}$ is therefore $[v_i^{(j+1)}](\eta_{j+1})$, as desired.
\end{proof}

Extraction identifies the truth function of $[\ext(\gN)]$ with $\ang{\gN}$, and construction the function realized by the network built from the graph $\gG$ with $[\gG]^\sI$. Both compose along the edges of $(V,E)$, by syntactic substitution on the node formulae and by function composition on their truth functions. The following lemma states that the two agree in the sense that the truth function of the composed formula $[\gG]$ is the function composition of the truth functions of the node formulae.

\begin{lem}\label{lem:graph_truth_function}
  Let $\gG = (V,E,\gC)$ be a substitution graph of depth $L$ and architecture $(d_0,\ldots,d_L)$, and let $\gK$ be the $\sigma$-network (\cref{def:rho_sigma}) with layered graph $(V,E)$ whose nodes have local maps $\ang{v} = [v]^\sI$, $v\in V$. Then $\ang{\gK} = [\gG]^\sI$ on $[0,1]^{d_0}$.
\end{lem}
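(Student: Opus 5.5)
The plan is to prove, by induction on the level $j$, that for every node $v_i^{(j)}$ the global map $\twoang{v_i^{(j)}}$ of $\gK$ (\cref{def:output_map}) coincides on $[0,1]^{d_0}$ with the truth function of $[v_i^{(j)}](\eta_j)$. Here $\eta_1=\zeta^{(0,1)}$ and $\eta_{j+1}=\zeta^{(j,j+1)}\blackcirc\eta_j$, as in \cref{prop:stepIII_equals_graph}. At $j=L$ this gives
\[
\ang{\gK}=\twoang{\vout}=\bigl([\vout](\eta_L)\bigr)^\sI=[\gG]^\sI,
\]
where the last equality is \eqref{eq:graph_single_subst}. The only semantic tool needed is the substitution identity \eqref{eq:subst-semantics}: replacing variables by formulae corresponds to composing truth functions. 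That identity was already proved by structural induction on formulae in \cref{subsec:substitute}, and the argument there did not depend on the formulae having come from a network. So I would invoke it directly.

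For the base case $j=1$: $\eta_1=\zeta^{(0,1)}$ is the identity substitution $x_{i'}\mapsto x_{i'}$. Hence $[v_i^{(1)}](\eta_1)=[v_i^{(1)}]$, and by \cref{def:output_map} $\twoang{v_i^{(1)}}=\ang{v_i^{(1)}}=[v_i^{(1)}]^\sI$, using the hypothesis on $\gK$. The input nodes are handled trivially, since $\twoang{v_{i}^{(0)}}(x)=x_i=x_i^\sI$.

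For the inductive step from $j$ to $j+1$, I use \cref{lem:substitution_composition} to write
\[
[v_i^{(j+1)}](\eta_{j+1})=\bigl([v_i^{(j+1)}](\zeta^{(j,j+1)})\bigr)(\eta_j)=[v_i^{(j+1)}]\bigl(\{x_{i'}\mapsto[v_{i'}^{(j)}](\eta_j)\}\bigr).
\]
Applying \eqref{eq:subst-semantics} with $\tau=[v_i^{(j+1)}]$ and $\chi_{i'}=[v_{i'}^{(j)}](\eta_j)$ shows that its truth function is $[v_i^{(j+1)}]^\sI$ evaluated at the $\chi_{i'}^\sI$. By the induction hypothesis each $\chi_{i'}^\sI$ equals $\twoang{v_{i'}^{(j)}}$, and by assumption $[v_i^{(j+1)}]^\sI=\ang{v_i^{(j+1)}}$. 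The composition is therefore exactly $\twoang{v_i^{(j+1)}}$ by \cref{def:output_map}.

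The main obstacle is well-definedness at the interface between the abstract $\gK$ and the formal definitions. First, $\twoang{\cdot}$ in \cref{def:output_map} is written for local maps of the form $\psi(\text{affine})$, whereas here the local map is prescribed as $[v]^\sI$. I would read the global map as the composition $\ang{v}\circ(\twoang{v_1^{(j-1)}},\ldots,\twoang{v_{d_{j-1}}^{(j-1)}})$, which is what that definition amounts to. Second, each $[v]^\sI$ is defined only on $[0,1]^{d_{j-1}}$, so every composition must stay inside the unit cube. This holds because truth functions take values in $[0,1]$, so each $\twoang{v_{i'}^{(j)}}$ maps $[0,1]^{d_0}$ into $[0,1]$ by the induction hypothesis. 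Third, a node formula need not mention all parent variables; this is covered by the convention in \cref{subsec:substitute} that truth functions are regarded as maps on all $d$ variables.
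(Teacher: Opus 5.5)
Your proof is correct and is essentially the paper's own argument: an induction on the level establishing $\twoang{v_i^{(j)}} = \bigl([v_i^{(j)}](\eta_j)\bigr)^\sI$ via \eqref{eq:subst-semantics} and the recursion $\eta_{j+1} = \zeta^{(j,j+1)}\blackcirc\eta_j$, closed off by \eqref{eq:graph_single_subst}. The differences are cosmetic. The paper starts the induction at $j=0$ with an identity substitution $\eta_0$. Your detour through \cref{lem:substitution_composition} is unnecessary, since $\{x_{i'}\mapsto[v_{i'}^{(j)}](\eta_j)\}$ is $\eta_{j+1}$ directly by \cref{def:substituion_composition}. Your well-definedness remarks make explicit points that the paper leaves implicit.
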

\vspace{-2ex}
\begin{proof}
  See \cref{app:proof_graph_truth_function}.
\end{proof}

Applied to the extracted graph $\ext(\gN)$, \cref{prop:stepIII_equals_graph} shows that $[\ext(\gN)]$ is the formula Extraction, Step~III produces from $\gN$. The node formulae of $\ext(\gN)$, delivered by Extraction, Step~II, satisfy $[v]^\sI = \ang{v}$ (\cref{subsec:single}), so \cref{lem:graph_truth_function}, with $\gK$ the $\sigma$-network obtained from $\gN$ by Extraction, Step~I, yields $[\ext(\gN)]^\sI = \ang{\gK}$, which equals $\ang{\gN}$ (\cref{subsec:extract_step1}), on $[0,1]^{d_0}$.

\subsection{A normal form based on substitution}\label{subsec:normal_form}

A McNaughton function is the truth function of infinitely many \Luka{} formulae, since syntactically different formulae can evaluate to the same function, e.g., $\lnot\lnot x_1$ and $x_1$ both compute the identity map $x_1\mapsto x_1$, and $(x_1\odot\lnot x_2)\oplus x_2$ and $(x_2\odot\lnot x_1)\oplus x_1$ both realize $\max\{x_1,x_2\}$. \Cref{them:chang} characterizes this redundancy, showing that two \Luka{} formulae have the same truth function if and only if they are interderivable by the MV axioms.

The substitution-graph encoding of a \Luka{} formula contains more information than the formula itself. Each node in the graph carries a component \Luka{} formula, and the edges record how these node formulae compose into the overall formula. Such a graph structure is what the extraction algorithm of \cref{sec:extraction} delivers from a ReLU network, and construction (\cref{sec:construction}) reads a network back from the graph. \Cref{them:chang} reduces the redundancy among a McNaughton function's \Luka{} formulae to interderivability by the MV axioms. Representing any of these functionally equivalent formulae by a substitution graph adds redundancy of two further kinds, since a formula can in general be composed in many different ways. The first source of redundancy is \emph{per-node}. Replacing the formula at a given node by any other formula corresponding to the same truth function leaves the function represented by the graph unchanged, so each node's formula is fixed only up to MV-equivalence---the logical redundancy, localized to a node. The second source is \emph{architectural}. Distinct substitution graphs of different architecture can represent the same function.

For the substitution graph $\gG$, the represented formula $[\gG]$ is obtained by composing the node formulae along the edges, \emph{flattening} $\gG$ to a single string that no longer encodes the layered graph $(V,E)$. Architecturally distinct substitution graphs realizing the same function may therefore flatten to strings differing only in bracketing (such as $\gM_1$ and $\gM_2$ in \cref{exmp:3-1}) or to altogether different strings with the same underlying truth function. The architecture is information the graph carries but its flattening does not, which is why construction builds a network from the substitution graph and not from the formula alone, and why the compositional normal form is needed in place of the flat normal forms of the existing literature.

Architectural redundancy comprises two parts. One part is substantive, consisting of genuinely different architectures realizing the same function, and is encompassed by the redundancy characterized by \cref{them:main_theorem1}. The other is trivial, the degenerate structure excluded by the non-degeneracy conditions in \cref{defn:nondege-condition} below. For a non-degenerate network $\gN$, $\constr(\ext(\gN)) = \gN$; thus $\constr$ is a left inverse of $\ext$ (\cref{prop:pillar3}), so that non-degeneracy is sufficient for exact recovery. \change{Four phenomena produce this degenerate structure. Call a hidden node \emph{active} at an input $x$ if $\twoang{v}(x)>0$ and \emph{inactive} if $\twoang{v}(x)=0$. A node inactive on all of $[0,1]^{d_0}$ is called \emph{dead}; its global map is then identically zero, so it contributes nothing to its children and can be removed without changing the network's input-output map. A node active on all of $[0,1]^{d_0}$ has activation reducing to the identity map, so it can be absorbed into an adjacent layer. Two hidden nodes at the same level with the same local map produce identical outputs and can be merged into one node, with their outgoing weights added. Finally, a hidden node all of whose outgoing weights are zero can be removed without changing the network's input-output map. In each of the four cases the network reduces to a smaller one realizing the same function. The following definition rules out all four phenomena.}

\begin{defn}\label{defn:nondege-condition}
  \change{A ReLU network with input nodes $V^{(0)}$ and output node $\vout$ is \emph{non-degenerate on $[0,1]^{d_0}$} if:}
  \begin{itemize}
    \item \change{every hidden node $v\in V\setminus(V^{(0)}\cup\{\vout\})$ is active for some $x\in[0,1]^{d_0}$};
    \item \change{every hidden node is inactive for some $x\in[0,1]^{d_0}$};
    \item \change{no two nodes at the same level have the same local map};
    \item \change{every hidden node has at least one nonzero outgoing weight.}
  \end{itemize}
\end{defn}

With the trivial redundancies excluded by non-degeneracy, what remains is to fix the form of the \Luka{} formula at each node of the substitution graph. We do this by requiring every node formula to be the result of applying Extraction, Step~II to a single $\sigma$-node; such formulae are the \emph{minterms} defined below. Substitution graphs extracted from ReLU networks are automatically of this form---their node formulae are produced by Extraction, Step~II---so the requirement adds no hypothesis to \cref{prop:pillar3}; rather, it delimits the class of substitution graphs on which the construction algorithm (\cref{sec:construction}) operates. A substitution graph whose node formulae are all minterms is said to be in \emph{normal form}.

\begin{defn}[Minterms and normal form]\label{def:minterms}
  A \Luka{} formula $\tau$ is a \emph{minterm} if $\tau = \texttt{EXTR}(\sigma(m_1x_1+\cdots+m_nx_n+b))$ for some $m_1,\ldots,m_n,b\in\sZ$; the minterms form the set $\gC_\norm$.
  A substitution graph $\gG = (V,E,\gC)$ is \emph{normal} if every formula $[v]\in\gC$ is a minterm.
  The truncations of integer-affine maps, that is, the local maps of $\sigma$-nodes, form the set
  \[ C = \{\sigma(m_1x_1+\cdots+m_nx_n+b): n\in\sN,\ m_1,\ldots,m_n,b\in\sZ\}. \]
\end{defn}
Our terminology differs from that of Boolean algebra, where a \emph{minterm} is a conjunction containing every variable exactly once.

MV derivations carried out on substitution graphs can pass through graphs that are not normal, that is, graphs carrying at one or more nodes a formula that is not the extract of a single $\sigma$-node; $\constr$ does not apply to such graphs, which therefore carry no reading as ReLU networks in the sense of \cref{def:NN}. This is the price paid for representing formulae extracted from ReLU networks in a way that retains the network's layered structure; we detail this in \cref{sec:manipulate}. Networks do, however, reappear at the endpoints of such a derivation, which is why $\gN_1\widesim{\mv}\gN_2$ is defined through a pullback (\cref{def:network_manipulation}) rather than as a chain of networks.

As established in \cref{subsec:substitute}, a minterm $\tau = \texttt{EXTR}(\sigma(f))$ satisfies $\tau^\sI = \sigma(f)\in C$; the truth function of a node formula hence equals the local map of the $\sigma$-node the formula was extracted from. Each node of a normal substitution graph thereby carries a local map, read off from its formula, and on a substitution graph extracted from a ReLU network these are the local maps of the nodes of the corresponding $\sigma$-network (\cref{lem:kappa_maps_back} below). We denote the mapping that takes a minterm to its truth function by $\kappa$.

\begin{defn}[$\kappa$-mapping]\label{def:kappa}
  Define $\kappa:\gC_\norm\rightarrow C$ by
  \begin{equation}\label{eq:kappa_def}
    \kappa(\tau) = \begin{cases} 0, & \tau = 0, \\ 1, & \tau = 1, \\ \tau^\sI, & \text{otherwise.}\end{cases}
  \end{equation}
\end{defn}

\begin{lem}\label{lem:kappa_maps_back}
  Let $\gN$ be a ReLU network and $\ext(\gN) = (V,E,\gC)$ the substitution graph extracted from $\gN$ (\cref{defn:extracted_graph}). Then $\kappa([v]) = \ang{v}$ for every $v\in V$, where $[v]\in\gC$ is the formula carried by $v$ and $\ang{v}$ its local map.
\end{lem}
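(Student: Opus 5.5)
The argument is a case split over the kinds of nodes of $\ext(\gN)$, using the facts established in \cref{subsec:single} for the node formulae produced by Extraction, Step~II.

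\emph{Input nodes.} For $v=v_i^{(0)}$ the formula is $[v]=x_i$. This is not a constant, so \eqref{eq:kappa_def} gives $\kappa([v])=x_i^\sI$. That is the coordinate projection, which equals $\ang{v_i^{(0)}}$ by definition of the local map. One also has to check that $x_i$ lies in $\gC_\norm$. I would verify this by running \texttt{EXTR} on $\sigma(x_i)$: one application of \eqref{eq:lemma4.4} yields $(\sigma(0)\oplus x_i)\odot\sigma(1)$, which returns $(0\oplus x_i)\odot 1$, and this is identified with $x_i$ by the stated convention.

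\emph{Non-input nodes.} Every non-input node of the $\sigma$-network $\gM$ from Extraction, Step~I has a local map of the form $\sigma(f)$ with $f=m_1x_1+\cdots+m_nx_n+b$ and integer coefficients. Integrality holds because Step~I copies the integer weights of $\gN$ and shifts the biases by integers in \eqref{eq:sigma-replacement}. The output node is included here by \eqref{eq:sigma_out}. So $[v]=\texttt{EXTR}(\sigma(f))$ is a minterm in the sense of \cref{def:minterms}, and $\kappa([v])$ is defined.

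Two subcases remain. If \texttt{EXTR} returns a constant $c\in\{0,1\}$, then $\ell\geq 1$ or $\mathcal{L}\leq 0$ on $[0,1]^n$, so $\sigma(f)\equiv c$ there. In this subcase $\kappa([v])=c$, read as the constant map of $d_{j-1}$ variables (following the convention after \eqref{eq:subst-semantics}), and this equals $\ang{v}$. Otherwise $\kappa([v])=[v]^\sI$, and \cref{subsec:single} shows $[v]^\sI=\sigma(f)=\ang{v}$ on $[0,1]^{n}$. That argument is an induction on the recursion of \texttt{EXTR}: each step is an instance of \cref{lem:extractmv}, and \eqref{eq:subst-semantics} lets the truth function pass through the connectives $\oplus$, $\odot$, $\lnot$.

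The only delicate point is bookkeeping, not mathematics: the identity $\kappa([v])=\ang{v}$ must be read with both sides on the same domain $[0,1]^{d_{j-1}}$. Two things need attention. First, constants are regarded as functions of $d_{j-1}$ variables. Second, removing $0\oplus\tau$ and $\tau\odot 1$, by the identification via Ax.~1, 5, 5$'$, leaves the truth function unchanged.
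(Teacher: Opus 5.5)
Your proof is correct and follows the paper's route: a split into input and non-input nodes, with $\kappa([v])=[v]^\sI$ and the truth-function preservation of \texttt{EXTR} from \cref{subsec:single} doing the work. The paper dispatches your constant subcase in one line by noting that $\kappa(\tau)=\tau^\sI$ for every minterm, since $\kappa(0)=0^\sI$ and $\kappa(1)=1^\sI$. Your added checks that $x_i$ and the non-input node formulae lie in $\gC_\norm$ are harmless; the paper leaves them implicit.
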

\vspace{-2ex}
\begin{proof}
  Since $\kappa(0) = 0 = 0^\sI$ and $\kappa(1) = 1 = 1^\sI$, we have $\kappa(\tau) = \tau^\sI$ for every $\tau\in\gC_\norm$. For an input node, $[v_i^{(0)}] = x_i$, so $\kappa([v_i^{(0)}]) = x_i^\sI = \ang{v_i^{(0)}}$. For a non-input node $v$ with local map $\ang{v} = \sigma(f)$, extraction sets $[v] = \texttt{EXTR}(\sigma(f))$ and $[v]^\sI = \sigma(f)$ (\cref{subsec:substitute}), so $\kappa([v]) = [v]^\sI = \ang{v}$.
\end{proof}

While normality imposes a restriction on the substitution graphs admitted, the following result shows that every McNaughton function is representable by a normal substitution graph, which, in turn, corresponds to a ReLU network.

\begin{prop}\label{prop:normal_form}
  For $f:[0,1]^n\rightarrow[0,1]$, the following statements are equivalent:
  \begin{enumerate}[label=(\roman*)]
    \item $f$ is a McNaughton function;
    \item $f$ is realized by a ReLU network with integer weights and biases;
    \item $f = [\gG]^\sI$ for some normal substitution graph $\gG$.
  \end{enumerate}
\end{prop}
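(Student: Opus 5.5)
I will prove the cycle of implications (i)$\Rightarrow$(ii)$\Rightarrow$(iii)$\Rightarrow$(i), leaning on the results already in place rather than re-deriving the network--formula correspondence.

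For (i)$\Rightarrow$(ii), I invoke McNaughton's theorem (\cref{them:McNaughton_theorm}): $f$ is the truth function of some \Luka{} formula $\tau$. I then apply the construction algorithm of~\cite{nn2mv2024}, recalled in \cref{subsec:logic_and_algebra}, which maps $\tau$ to a ReLU network with integer weights and biases realizing $\tau^\sI=f$ on $[0,1]^n$. To keep the argument self-contained, I could instead argue by structural induction on $\tau$:
\begin{itemize}
\item $x\oplus y = 1-\rho(1-x-y)$ and $x\odot y=\rho(x+y-1)$ on $[0,1]^2$;
\item $\lnot x = 1-x$;
\item subnetworks for subformulae are placed in parallel, and their depths are equalized by padding shallower branches with identity nodes $\rho(t)=t$, valid since values lie in $[0,1]$.
\end{itemize}
All weights and biases produced this way are integers.

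For (ii)$\Rightarrow$(iii), let $\gN$ realize $f$. I take $\gG=\ext(\gN)$ (\cref{defn:extracted_graph}). Its node formulae come from Extraction, Step~II applied to single $\sigma$-nodes, so they are outputs of \texttt{EXTR} and hence minterms (\cref{def:minterms}); thus $\gG$ is normal. By \cref{prop:stepIII_equals_graph} and \cref{lem:graph_truth_function}, $[\gG]^\sI=\ang{\gN}=f$ on $[0,1]^n$, as discussed after \cref{lem:graph_truth_function}.

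The main obstacle is that extraction is developed for ReLU networks with a $[0,1]$-valued realized function, and Step~I needs each hidden node's input interval to be bounded. Here $f$ is $[0,1]$-valued by hypothesis and the domain is the unit cube, so Step~I applies verbatim. The one subtlety is that the converse direction of the paper's pipeline is usually stated for non-degenerate networks, but extraction itself requires no non-degeneracy, so this is harmless. I will also check that the input nodes' formulae $x_i$ and the constants $0$, $1$ (returned by \texttt{EXTR} for constant $\sigma$-nodes) qualify as minterms. The convention for $x_i$ is that it equals $\texttt{EXTR}(\sigma(x_i))$, since $\sigma(x_i)$ has $m_i=1$ and the recursion yields $(0\oplus x_i)\odot 1$, identified with $x_i$.

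For (iii)$\Rightarrow$(i), let $\gG$ be normal. Each node formula $[v]$ is a minterm with $[v]^\sI=\sigma(\text{integer affine})$. The $\sigma$-network $\gK$ of \cref{lem:graph_truth_function} therefore has integer weights and biases and $\ang{\gK}=[\gG]^\sI=f$. Each $\sigma(t)=\rho(t)-\rho(t-1)$, so $\gK$ converts to a ReLU network with integer parameters, after absorbing the output $\sigma$, which is the identity on $[0,1]$-valued outputs. This already gives (ii), and by~\cite{nn2mv2024} (equivalently McNaughton's theorem) $f$ is a McNaughton function. More directly, $[\gG]$ is itself a \Luka{} formula with truth function $f$, so McNaughton's theorem yields (i) immediately. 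I will use this shortcut, making (iii)$\Rightarrow$(i) trivial.
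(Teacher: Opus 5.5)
Your proof is correct and follows the paper's own argument: McNaughton's theorem together with the construction of~\cite{nn2mv2024} for (i)$\Rightarrow$(ii), normality of $\ext(\gN)$ with $[\ext(\gN)]^\sI=\ang{\gN}=f$ for (ii)$\Rightarrow$(iii), and the observation that $[\gG]$ is itself a \Luka{} formula for (iii)$\Rightarrow$(i); your explicit check that the input-node formulae $x_i=\texttt{EXTR}(\sigma(x_i))$ are minterms fills in a detail the paper leaves implicit. One remark concerns the detour in (iii)$\Rightarrow$(i) that you discard. There the output $\sigma$ of a general normal graph cannot always be dropped, because its pre-activation need not lie in $[0,1]$ (for instance, the output formula $x_1\oplus x_1$ has pre-activation $2x_1$). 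One would then have to add a layer via $\sigma(g)=\rho(g)-\rho(g-1)$, as in Construction, Step~II, but your shortcut makes this point moot.
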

\begin{proof}
  For (i)$\,\Rightarrow\,$(ii), by \cref{them:McNaughton_theorm}, $f$ is the truth function of a \Luka{} formula $\tau$, and by~\cite[Thm.~3.2]{nn2mv2024} there is a ReLU network with integer weights and biases realizing $\tau^\sI = f$ on $[0,1]^n$. For (ii)$\,\Rightarrow\,$(iii), let $\gN$ realize $f$. The substitution graph $\ext(\gN)$ extracted from $\gN$ (\cref{defn:extracted_graph}) is normal, its node formulae being the outputs of Extraction, Step~II and hence minterms, and $[\ext(\gN)]^\sI = \ang{\gN} = f$, as observed after \cref{prop:stepIII_equals_graph}. For (iii)$\,\Rightarrow\,$(i), $[\gG]$ arises from the node formulae of $\gG$ by substitution~\eqref{eq:ext-1} and is therefore itself a \Luka{} formula, so $f = [\gG]^\sI$ is the truth function of a \Luka{} formula and hence a McNaughton function by \cref{them:McNaughton_theorm}.
\end{proof}

\begin{rem}[Relation to classical \Luka{} normal forms]\label{rem:dinola}
The compositional normal form developed here and the classical normal forms of \Luka{} logic differ in how they combine their building blocks, the formulae from which the normal form is built. Following Aguzzoli~\cite{aguzzoliThesis}, Di~Nola and Lettieri~\cite{dinolaNormalFormsLukasiewicz2004} write every McNaughton function as a conjunction of disjunctions of \emph{simple McNaughton functions} $\sigma(m_1x_1+\cdots+m_nx_n+b)$, i.e., the truth functions of the minterms of \cref{def:minterms}. The conjunction and disjunction are the connectives $x\wedge y=\lnot(\lnot x\odot y)\odot y$ and $x\vee y=\lnot(\lnot x\oplus y)\oplus y$, respectively. In shape this is the \Luka{} counterpart of the Boolean conjunctive normal form, the simple McNaughton functions occupying the place of the disjunctive clauses. Mundici~\cite{mundici1994constructive} instead assigns to each McNaughton function $f$ a unimodular triangulation of $[0,1]^n$ on each simplex of which $f$ coincides with one of the linear polynomials $p_j$ of \cref{them:McNaughton_theorm}, and writes $f$ as a finite $\oplus$-sum of \emph{Schauder hats}. Each vertex of the triangulation carries one such hat. The hat at $v$ is a McNaughton function, affine on all simplices, positive at $v$, and zero at every other vertex, hence vanishing on the simplices that do not contain $v$. What all these forms have in common is that their building blocks are combined by a fixed pattern of connectives and never substituted into one another; we call such forms \emph{flat}.

Our compositional normal form instead combines minterms by substitution along a substitution graph, so that a minterm may be substituted into the variables of another and the composition extends to the depth $L$ of the graph. For a substitution graph extracted from a ReLU network $\gN$, this depth is that of $\gN$, since extraction turns the network's layered composition of maps into a composition of formulae rather than into a single string. To the best of our knowledge no normal form of this kind has appeared in the \Luka{} logic literature.

The flat normal forms prescribe the shape in which the function is written, which in the Boolean case makes the canonical disjunctive and conjunctive normal forms unique up to the order of their terms. Our compositional normal form prescribes no shape and is highly non-unique; \cref{them:main_theorem1} characterizes this non-uniqueness.
\end{rem}

\section{Syntactic derivation on substitution graphs}\label{sec:manipulate}

The identification program behind \cref{them:main_theorem1} proceeds in three steps---extraction, derivation, and construction. Extraction produces from the networks $\gN_1$ and $\gN_2$ the substitution graphs $\ext(\gN_1)$ and $\ext(\gN_2)$, whose represented formulae have the same truth function, $[\ext(\gN_1)]^\sI = [\ext(\gN_2)]^\sI$; by \cref{them:main_theorem1}, the two graphs are therefore related by $\ext(\gN_1)\widesim{\mv}\ext(\gN_2)$. This section shows how the graph derivation $\ext(\gN_1)\widesim{\mv}\ext(\gN_2)$ is carried out, by a finite sequence of local operations---node rewrites by MV axioms, layer collapses, and layer expansions. The graphs passed through in the course of this rewriting need not all be normal, as already mentioned in \cref{sec:graphical_representation}. Construction---the algorithm returning a ReLU network from a normal substitution graph---is therefore applied only at the endpoints of the derivation $\ext(\gN_1)\widesim{\mv}\ext(\gN_2)$, where the graphs are guaranteed to be normal; in between, the operations act on graphs alone.

An MV derivation on graphs realizes an MV derivation of the formulae the graphs represent, the relation $\tau_1\widesim{\mv}\tau_2$ appearing in \cref{them:chang}. An MV derivation applies MV axioms to subformulae. For instance, in $\tau = ((x_1\oplus x_2)\odot\lnot(x_1\oplus x_2))\oplus x_3$, applying Ax.~3$'$ ($x\odot\lnot x = 0$) to the subformula $(x_1\oplus x_2)\odot\lnot(x_1\oplus x_2)$ replaces it by~$0$, Ax.~1 ($x\oplus y = y\oplus x$) then yields $x_3\oplus 0$, and Ax.~5 ($x\oplus 0 = x$) reduces this to $x_3$. Four notions make the process precise---subformula, logic equation, instantiation of an axiom, and derivation.

\begin{defn}
  A formula $\gamma$ is a \emph{subformula} of $\tau$ if it is a substring of $\tau$ that is itself a formula. Every formula is a subformula of itself.
\end{defn}

\begin{defn}
  A \emph{logic equation} is an expression $\tau = \tau'$ where $\tau$, $\tau'$ are formulae.
\end{defn}

\begin{defn}\label{defn:axiom}
  An \emph{axiom} $e$ is a logic equation $\epsilon = \epsilon'$, as in Ax.~1--9$'$ of \cref{defn:MV algebra}.
  The logic equation $\tau = \tau'$ is an \emph{instantiation} of $e$ if there exists a substitution $\zeta$ (\cref{def:substitution}) with $\tau = \epsilon(\zeta)$ and $\tau' = \epsilon'(\zeta)$, or with $\tau = \epsilon'(\zeta)$ and $\tau' = \epsilon(\zeta)$. For example, $x_3\oplus 0 = 0\oplus x_3$ and $0\oplus x_3 = x_3\oplus 0$ both instantiate Ax.~1 under $\zeta = \{x\mapsto x_3,\, y\mapsto 0\}$.
\end{defn}

\begin{defn}[Derivation]\label{defn:manipulation}
  Let $e$ be an axiom. We write $\tau\widesim{e}\tau'$, and say that $\tau'$ is \emph{derived from $\tau$ by $e$}, if $\tau'$ arises from $\tau$ by replacing one occurrence of a subformula $\gamma$ of $\tau$ by a formula $\gamma'$ such that the logic equation $\gamma = \gamma'$ is an instantiation of $e$.
  For a set $\gE$ of axioms, we write $\tau\widesim{\gE}\tau'$ if there is a finite sequence $\tau = \tau_1,\ldots,\tau_T = \tau'$ with $\tau_t\widesim{e_t}\tau_{t+1}$ and $e_t\in\gE$, $t = 1,\ldots,T-1$.
 
  Equivalence of $\widesim{\gE}$ follows by taking $T=1$ for reflexivity, reversing the sequence for symmetry, and concatenating sequences for transitivity.
\end{defn}

Taking $\gE = \mv$ gives the relation $\widesim{\mv}$ underlying \cref{them:main_theorem1}; the axiom set $\gE$ in \cref{defn:manipulation} is left general to account for the DMV and Riesz extensions of \cref{sec:extension}. The rewriting of $\tau$ in the example above is a derivation $\tau\widesim{\gE}x_3$ with $\gE = \mv$.

\subsection{Graph manipulation}

We now develop graph derivation in three stages---its definition, the local operations it carries out, and the resulting completeness statement.

\begin{defn}[Graph derivation]\label{def:graph_manipulation}
  We lift the relations $\widesim{e}$ and $\widesim{\gE}$ of \cref{defn:manipulation} to substitution graphs $\gG, \gG'$ through their represented formulae, writing $\gG\widesim{e}\gG'$ if $[\gG]\widesim{e}[\gG']$ and $\gG\widesim{\gE}\gG'$ if $[\gG]\widesim{\gE}[\gG']$.
\end{defn}

A second pullback carries the relation from graphs to networks.

\begin{defn}[Network derivation]\label{def:network_manipulation}
  We lift the relation $\widesim{\gE}$ of \cref{def:graph_manipulation} to ReLU networks $\gN_1,\gN_2$ through their extracted graphs, writing $\gN_1\widesim{\gE}\gN_2$ if $\ext(\gN_1)\widesim{\gE}\ext(\gN_2)$.
\end{defn}

Chang's theorem (\cref{them:chang}) carries over to graphs as follows.

\begin{lem}\label{prop:graph_derivation}
  Let $\gG$ and $\gG'$ be substitution graphs with the same number of input nodes. If $[\gG]^\sI = [\gG']^\sI$, then $\gG\widesim{\mv}\gG'$.
\end{lem}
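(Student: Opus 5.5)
The lemma follows almost directly from Chang's theorem once the represented formulae are shown to be genuine \Luka{} formulae in a common set of variables. Set $\tau = [\gG]$ and $\tau' = [\gG']$. By \eqref{eq:ext-1}, or equivalently \eqref{eq:graph_single_subst} in \cref{prop:stepIII_equals_graph}, each is obtained from the output-node formula by substitution of \Luka{} formulae for variables. Substitution of formulae into a formula yields a formula, so $\tau$ and $\tau'$ are \Luka{} formulae. I will check inductively over the layers, exactly as in the first half of the proof of \cref{prop:stepIII_equals_graph}, that their variables lie in $\{x_1,\ldots,x_{d_0}\}$. The level-$j$ substitutors have variables in $\{x_1,\ldots,x_{d_{j-1}}\}$, and after composition down to $\zeta^{(0,1)}$ only the input variables $[v_i^{(0)}]=x_i$ remain.

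The hypothesis that $\gG$ and $\gG'$ have the same number $d_0$ of input nodes matters here. It ensures that $[\gG]^\sI$ and $[\gG']^\sI$ are both maps $[0,1]^{d_0}\to[0,1]$ in the same variables $x_1,\ldots,x_{d_0}$, following the convention stated after \eqref{eq:subst-semantics} that a formula not containing all variables is still read as a function of all $d_0$ of them. The assumed equality $[\gG]^\sI = [\gG']^\sI$ is therefore an equality of truth functions of two \Luka{} formulae over the same variable set. This is precisely the hypothesis of \cref{them:chang}.

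Applying \cref{them:chang} yields $[\gG]\widesim{\mv}[\gG']$. By \cref{def:graph_manipulation}, this is by definition $\gG\widesim{\mv}\gG'$, which completes the argument.

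The only potential obstacle is the variable bookkeeping. Chang's theorem concerns formulae over a fixed variable set, and one formula might omit some input variables, or contain constants only. I would handle this by noting that both the truth-function convention and MV derivations are insensitive to absent variables: the Lindenbaum algebra over $x_1,\ldots,x_{d_0}$ embeds in the free MV algebra on countably many generators. Hence the completeness statement applies verbatim to formulae regarded as living over $x_1,\ldots,x_{d_0}$. No graph-structural argument is needed at this stage, since the relation on graphs is a pure pullback of the formula-level relation; realizing the derivation by local operations is deferred to \cref{prop:local_realizability}.
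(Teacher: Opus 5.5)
Your proof is correct and follows the paper's route: apply Chang's theorem (\cref{them:chang}) to the represented formulae $[\gG]$ and $[\gG']$, then unfold \cref{def:graph_manipulation}. The variable bookkeeping you add is sound, and the paper leaves it implicit: both formulae live over $x_1,\ldots,x_{d_0}$, and this is where the equal-number-of-inputs hypothesis enters.
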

\begin{proof}
  Chang's theorem (\cref{them:chang}) turns the hypothesis $[\gG]^\sI = [\gG']^\sI$ into $[\gG]\widesim{\mv}[\gG']$, which is $\gG\widesim{\mv}\gG'$ by \cref{def:graph_manipulation}.
\end{proof}

\Cref{def:graph_manipulation} relates the represented formulae only, leaving the derivation on the graph invisible. The following example identifies the local operations such a derivation requires.

\begin{exmp}[A derivation carried out on graphs]\label{exmp:nonnormal}
The networks $\gN_3$ and $\gN_4$ in \cref{fig:max_distinct}, both realizing $\max\{x_1,x_2\}$, extract to the respective formulae $\tau_3 = x_2\oplus(x_1\odot\lnot x_2)$ and $\tau_4 = x_1\oplus\lnot(\lnot x_2\oplus x_1)$, connected by the derivation
\begin{equation}\label{eq:max-derivation}
\begin{split}
  x_2\oplus(x_1\odot\lnot x_2)
  &\;\overset{\text{\upshape Ax.\,1}}{=}\;
  (x_1\odot\lnot x_2)\oplus x_2
  \;\overset{\text{\upshape Ax.\,9}}{=}\;
  (x_2\odot\lnot x_1)\oplus x_1\\
  &\;\overset{\text{\upshape Ax.\,7}}{=}\;
  (\lnot\lnot x_2\odot\lnot x_1)\oplus x_1
  \;\overset{\text{\upshape Ax.\,6}}{=}\;
  \lnot(\lnot x_2\oplus x_1)\oplus x_1
  \;\overset{\text{\upshape Ax.\,1}}{=}\;
  x_1\oplus\lnot(\lnot x_2\oplus x_1).
\end{split}
\end{equation}
We now show how this derivation can be carried out on the substitution graph $\ext(\gN_3)$ (\cref{fig:nonnormal_transit}). To this end we first derive the node formulae of $\ext(\gN_3)$. Its left hidden node computes $\rho(x_1-x_2)$, which on $[0,1]^2$ equals $\sigma(x_1-x_2)$, so Extraction, Step~II (\cref{subsec:single}) assigns it the formula $x_1\odot\lnot x_2$. The right hidden node computes $\rho(x_2)$, which equals $\sigma(x_2)$ on $[0,1]^2$, and carries the formula $x_2$. The output node applies no activation. Extraction, Step~I, which converts the $\rho$-network into a $\sigma$-network, applies $\sigma$ at the output node as well, without effect since $\ang{\gN_3} = \max\{x_1,x_2\}$ never exceeds~$1$. The output node therefore computes $\sigma(x_1+x_2)$ in the variables supplied by the hidden layer and carries the formula $x_2\oplus x_1$.

The starting graph $\ext(\gN_3)$ (\cref{fig:nonnormal_transit}) is normal, each of the three node formulae just derived being a minterm. The first step of~\eqref{eq:max-derivation}, by Ax.~1, simply exchanges the two arguments of the outermost $\oplus$, replacing the output node's formula $x_2\oplus x_1$ by $x_1\oplus x_2$, which is again the extract of $\sigma(x_1+x_2)$ and hence a minterm, so that the resulting substitution graph is normal.

After this first step the graph represents $(x_1\odot\lnot x_2)\oplus x_2$. The second step of~\eqref{eq:max-derivation}, by Ax.~9, rewrites this formula in its entirety and, unlike the first, is no node rewrite, since no single node carries the string being rewritten. Concretely, the formula $(x_1\odot\lnot x_2)\oplus x_2$ is spread over two levels in the graph, its subformula $x_1\odot\lnot x_2$ sitting at a hidden node and its outermost $\oplus$ at the output node. Collapsing the hidden layer, by inserting the hidden node formulae into the output node's formula, yields the depth-one substitution graph $\mathcal{H}$, whose single node carries $(x_1\odot\lnot x_2)\oplus x_2$ and at which the application of Ax.~9 becomes a node rewrite. The resulting node formula $(x_1\odot\lnot x_2)\oplus x_2$, however, is no minterm, so $\mathcal{H}$ is not normal. This is seen as follows. The truth function $\max\{x_1,x_2\}$ of this formula has a gradient taking on two distinct nonzero values, namely $(1,0)$ on $\{x_1>x_2\}$ and $(0,1)$ on $\{x_2>x_1\}$, while the gradient of $\sigma(m_1x_1+\cdots+m_nx_n+b)$ (\cref{def:minterms}) can only have the single nonzero value $(m_1,\ldots,m_n)$. Passage through non-normal graphs is the price of applying axioms that relate formulae extending over several layers, since such formulae can be confined to a single node only by collapsing those layers, and the composite formula so obtained need not be a minterm.

With the entire formula now confined to a single node, the second to fifth steps of~\eqref{eq:max-derivation} are node rewrites throughout and carry $\mathcal{H}$ to $\mathcal{H}'$ on a fixed layered graph, through $(x_2\odot\lnot x_1)\oplus x_1$, $(\lnot\lnot x_2\odot\lnot x_1)\oplus x_1$, and $\lnot(\lnot x_2\oplus x_1)\oplus x_1$. Each of these five depth-one substitution graphs is not normal, as its single node has truth function $\max\{x_1,x_2\}$ and therefore does not carry a minterm. The endpoint of the derivation must again be a normal substitution graph, so that $\constr$ can be applied to deliver a ReLU network. To this end, we expand $\mathcal{H}'$ into two levels, which returns $\ext(\gN_4)$, again normal, its node formulae $\lnot(\lnot x_2\oplus x_1)$, $x_1$, and $x_2\oplus x_1$ being the extracts of $\sigma(x_2-x_1)$, $\sigma(x_1)$, and $\sigma(x_1+x_2)$.
\end{exmp}

\begin{figure}[tb]
\centering
\resizebox{\textwidth}{!}{%
\begin{tikzpicture}[>=Latex,
  nd/.style={draw, rounded corners=1pt, minimum size=6.5mm, inner sep=2.5pt, outer sep=0pt, font=\small}]
  \node[nd] (x) {$x_1$};
  \node[nd] (y) [right=9mm of x] {$x_2$};
  \node[nd] (v1) [above=8mm of x] {$x_1\odot\lnot x_2$};
  \node[nd] (v2) [above=8mm of y] {$x_2$};
  \node[nd] (c) at ($(v1)!0.5!(v2)+(0,12mm)$) {$x_2\oplus x_1$};
  \draw[->] (x)--(v1); \draw[->] (y)--(v1); \draw[->] (y)--(v2);
  \draw[->] (v1)--(c); \draw[->] (v2)--(c);
  \node[font=\small] at ($(x)!0.5!(y)+(0,-9mm)$) {$\ext(\gN_3)$ (normal)};

  \node[right=4mm of v2, yshift=2mm, font=\small, align=center] (a1) {Ax.\,1, collapse\\$\longrightarrow$};

  \node[nd] (hx) [right=4mm of a1, yshift=-10mm] {$x_1$};
  \node[nd] (hy) [right=9mm of hx] {$x_2$};
  \node[nd, fill=red!12] (ho) at ($(hx)!0.5!(hy)+(0,12mm)$) {$(x_1\odot\lnot x_2)\oplus x_2$};
  \draw[->] (hx)--(ho); \draw[->] (hy)--(ho);
  \node[font=\small] at ($(hx)!0.5!(hy)+(0,-9mm)$) {$\mathcal{H}$};

  \node[right=4mm of ho, font=\small, align=center] (a2) {Ax.\,9, 7, 6, 1\\$\longrightarrow$};

  \node[nd] (kx) [right=4mm of a2, yshift=-12mm] {$x_1$};
  \node[nd] (ky) [right=9mm of kx] {$x_2$};
  \node[nd, fill=red!12] (ko) at ($(kx)!0.5!(ky)+(0,12mm)$) {$x_1\oplus\lnot(\lnot x_2\oplus x_1)$};
  \draw[->] (kx)--(ko); \draw[->] (ky)--(ko);
  \node[font=\small] at ($(kx)!0.5!(ky)+(0,-9mm)$) {$\mathcal{H}'$};

  \node[right=4mm of ky, yshift=2mm, font=\small, align=center] (a3) {expand\\$\longrightarrow$};

  \node[nd] (px) [right=4mm of a3, yshift=-2mm] {$x_1$};
  \node[nd] (py) [right=9mm of px] {$x_2$};
  \node[nd] (w1) [above=8mm of px] {$\lnot(\lnot x_2\oplus x_1)$};
  \node[nd] (w2) [above=8mm of py] {$x_1$};
  \node[nd] (d) at ($(w1)!0.5!(w2)+(0,12mm)$) {$x_2\oplus x_1$};
  \draw[->] (px)--(w1); \draw[->] (py)--(w1); \draw[->] (px)--(w2);
  \draw[->] (w1)--(d); \draw[->] (w2)--(d);
  \node[font=\small] at ($(px)!0.5!(py)+(0,-9mm)$) {$\ext(\gN_4)$ (normal)};
\end{tikzpicture}}
\caption{The derivation of \cref{exmp:nonnormal} carried out on graphs. The endpoints $\ext(\gN_3)$ and $\ext(\gN_4)$ are normal and hence correspond to ReLU networks. The first arrow combines the Ax.~1 rewrite at the output node with the layer collapse needed to prepare the second step.}
\label{fig:nonnormal_transit}
\end{figure}

We now formalize the three operations---node rewrites, layer collapses, and layer expansions---and show in \cref{prop:local_realizability} that they suffice to carry out every graph derivation.

\begin{defn}[Node rewrite]\label{def:node_rewrite}
  Let $\gG$ be a substitution graph, $\gE$ a set of axioms, $e\in\gE$, and $v$ a non-input node of $\gG$ at level $j$. A \emph{node rewrite of $v$ by $e$} replaces the formula $[v]$ by a formula $\tau'$ in the variables $x_1,\ldots,x_{d_{j-1}}$ with $[v]\widesim{e}\tau'$, leaving $(V,E)$ and all other node formulae unchanged. A \emph{node rewrite by $\gE$} is a node rewrite by some $e\in\gE$.
\end{defn}

While a node rewrite leaves the local map of $v$ unchanged, it need not preserve normality of the substitution graph, as the rewritten formula need not be a minterm. To see this, consider a node carrying the minterm $x_1$ and rewrite it into $\lnot\lnot x_1$ by Ax.~7, which leaves the truth function $\sigma(x_1)$ unchanged. A minterm with this truth function must be of the form $\texttt{EXTR}(\sigma(g))$ with $g = m_1x_1+\cdots+m_nx_n+b$ an integer affine map and $\sigma(g) = \sigma(x_1)$. Now the points $x\in[0,1]^n$ with $0<x_1<1$ form a non-empty open set, on which $\sigma(x_1) = x_1\in(0,1)$ and hence $\sigma(g)$ takes values in $(0,1)$ as well. Fix an $x$ in this set. As $\sigma(t)$ lies in $(0,1)$ only for $t\in(0,1)$, it follows that $g(x)\in(0,1)$, and $\sigma$ acting as the identity on $(0,1)$ gives $g(x) = \sigma(g(x)) = x_1$. The affine maps $g(x)$ and $x_1$ thus agree on an open set and must therefore be equal everywhere. Applying \cref{lem:extractmv} with $m_1 = 1$, so that $f_\circ = 0$, yields $\texttt{EXTR}(\sigma(x_1)) = (\sigma(0)\oplus x_1)\odot\sigma(1) = x_1$. The only minterm with truth function $\sigma(x_1)$ is hence $x_1$.

\begin{defn}[Layer collapse and expansion]\label{def:collapse_expand}
  Let $\gG$ be a substitution graph of depth $L$. For $L\geq 2$ and $k\in\{1,\ldots,L-1\}$, \emph{collapsing} layer $k$ removes the level-$k$ nodes, joins levels $k-1$ and $k+1$, and replaces the formula $[v]$ of every level-$(k+1)$ node by $[v](\zeta^{(k,k+1)})$, which is a formula in the level-$(k-1)$ variables because the substitutors of $\zeta^{(k,k+1)}$, the level-$k$ node formulae of $\gG$, are. For $k\in\{1,\ldots,L\}$, \emph{expanding} at level $k$ inserts a new level of $m\geq 1$ nodes between levels $k-1$ and $k$, shifting every level from $k$ on up by one, so that the depth becomes $L+1$. The inserted nodes, which now constitute level $k$, carry formulae $\chi_1,\ldots,\chi_m$ in the level-$(k-1)$ variables and supply the new variables $x_1,\ldots,x_m$ to the level above, and the formula $[v]$ of every level-$(k+1)$ node of the expanded graph is replaced by a formula $[v]'$ in these variables according to $[v]'(\{x_1\mapsto\chi_1,\ldots,x_m\mapsto\chi_m\}) = [v]$. While collapse is determined by $\gG$ and $k$, expansion involves the choice of $m$, the $\chi_i$, and the $[v]'$; collapsing layer $k$ is undone by taking, all with reference to $\gG$ before the collapse, $m = d_k$, $\chi_{i'} = [v_{i'}^{(k)}]$, $i' = 1,\ldots,d_k$, and $[v]'$ the formula of the level-$(k+1)$ node in question, but other choices build layers unrelated to any collapse. In both cases the result is again a substitution graph.
\end{defn}

Layer collapse and expansion have no formula-level counterpart; they modify the graph $\gG$ without affecting the represented formula $[\gG]$ (Lemmata~\ref{lem:substitution_collapse_same_formula} and~\ref{lem:substitution_expansion_same_formula}, both in \cref{app:proof_sec_manipulate}). A node rewrite by an axiom $e\in\gE$, in contrast, alters the represented formula, but only within its $\gE$-derivation class, $[\gG]\widesim{\gE}[\gG']$ (\cref{prop:graph2formula}, in \cref{app:proof_sec_manipulate}). A single node rewrite may take more than one derivation step at the level of the represented formula, which is why the relation here is $\widesim{\gE}$ and not $\widesim{e}$. This can be seen as follows. Flattening $\gG$ substitutes the formula of a given node into every child referring to it, so $[v]$ may occur repeatedly in $[\gG]$, while \cref{defn:manipulation} replaces one occurrence of $[v]$ at a time. The following example illustrates this. Take a graph of depth two whose single hidden node $v$ carries the formula $\chi$ and whose output node carries $x_1\oplus x_1$, so that $[\gG] = \chi\oplus\chi$. A rewrite of $v$ turning $\chi$ into $\chi'$ produces $[\gG'] = \chi'\oplus\chi'$, two derivation steps away from $[\gG]$. Whichever of the three operations---node rewrite, layer collapse, layer expansion---is applied, the resulting graph $\gG'$ satisfies $\gG\widesim{\gE}\gG'$ (\cref{def:graph_manipulation}).

We are now ready to state the main result of this section, namely that every graph derivation can be carried out by node rewrites, layer collapses, and layer expansions.

\begin{prop}[Graph derivation by local operations]\label{prop:local_realizability}
  Let $\gE$ be a set of axioms and let $\gG$, $\gG'$ be substitution graphs with the same number of input nodes. If $\gG\widesim{\gE}\gG'$, then $\gG$ and $\gG'$ are connected by a finite sequence of layer collapses, layer expansions (\cref{def:collapse_expand}), and node rewrites by $\gE$ (\cref{def:node_rewrite}).
\end{prop}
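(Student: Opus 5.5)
The plan is to route every derivation through depth-one graphs, where the represented formula sits at a single node and each formula-level step becomes a node rewrite.

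First, let $\gG$ have depth $L$. Collapse layers $L-1, L-2, \ldots, 1$ in turn, as in \cref{def:collapse_expand}. Each collapse leaves the represented formula unchanged (\cref{lem:substitution_collapse_same_formula}). The result is a depth-one graph $\mathcal{H}$ whose input nodes are those of $\gG$ and whose output node carries exactly $[\gG]$. To confirm this, note that after all collapses the output formula is $[\vout]$ with the layer substitutions applied in the order of \eqref{eq:ext-1}; by \cref{prop:stepIII_equals_graph} this is $[\gG]$. Applying the same procedure to $\gG'$ gives a depth-one graph $\mathcal{H}'$ whose output node carries $[\gG']$ over the same $d_0$ input variables, since the number of input nodes agrees.

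Next, take the chain $[\gG]=\tau_1\widesim{e_1}\tau_2\cdots\widesim{e_{T-1}}\tau_T=[\gG']$ supplied by $\gG\widesim{\gE}\gG'$ (\cref{def:graph_manipulation,defn:manipulation}). Each step $\tau_t\widesim{e_t}\tau_{t+1}$ is, by definition, a node rewrite of the output node of the depth-one graph carrying $\tau_t$ (\cref{def:node_rewrite}). One technical point needs checking: $\tau_{t+1}$ must be a formula in the input variables $x_1,\ldots,x_{d_0}$. An instantiation could in principle introduce a fresh variable, for example via Ax.~3$'$ read right to left, $0\mapsto y\odot\lnot y$. In that case I would either choose the substitution so that it maps into $\{x_1,\ldots,x_{d_0}\}$ or constants, or, more carefully, argue that the derivation can be modified to avoid fresh variables. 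The modification is to substitute $0$ for every variable outside $\{x_1,\ldots,x_{d_0}\}$ throughout the whole chain. Substitution instances of instantiations are again instantiations of the same axiom, and replacement of a subformula commutes with substitution. Since the endpoints contain no such variables, they are unchanged. I expect this variable-hygiene step to be the main obstacle; everything else is bookkeeping.

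Finally, $\mathcal{H}'$ is turned back into $\gG'$ by reversing the collapses. Record the collapses performed on $\gG'$. Each collapse of layer $k$ is undone by the expansion specified in \cref{def:collapse_expand}, with $m=d_k$, $\chi_{i'}=[v^{(k)}_{i'}]$, and $[v]'$ the original level-$(k+1)$ formulae. Performing these expansions in reverse order reconstructs $\gG'$ exactly. Concatenating the three pieces gives the claim: the collapses from $\gG$ to $\mathcal{H}$, the $T-1$ node rewrites from $\mathcal{H}$ to $\mathcal{H}'$, and the expansions from $\mathcal{H}'$ back to $\gG'$. If $L=1$ for either graph, the corresponding collapse or expansion phase is simply empty.
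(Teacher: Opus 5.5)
Your proposal is correct and follows essentially the same route as the paper: collapse $\gG$ to a depth-one graph carrying $[\gG]$, realize each formula-level step as a rewrite of its single non-input node, and rebuild $\gG'$ by the expansions that undo its collapses. The one difference is cosmetic: to remove fresh variables you substitute the constant $0$, whereas the paper substitutes $x_1$; both choices are justified by the same fact, \cref{lem:change_substitutee2}.
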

\begin{proof}
  Since $\gG\widesim{\gE}\gG'$, \cref{def:graph_manipulation} gives $[\gG]\widesim{\gE}[\gG']$, and \cref{defn:manipulation} asserts the existence of a corresponding derivation $[\gG] = \tau_1\widesim{e_1}\tau_2\widesim{e_2}\cdots\widesim{e_{T-1}}\tau_T = [\gG']$ with $e_t\in\gE$, $t = 1,\ldots,T-1$. Let $d_0$ denote the common number of input nodes of $\gG$ and $\gG'$. The formulae $\tau_t$ may all be assumed to have their variables among $x_1,\ldots,x_{d_0}$. Indeed, an axiom instantiation can introduce variables absent from both $[\gG]$ and $[\gG']$, as when Ax.~4, $x\oplus 1 = 1$, is applied from right to left and replaces $1$ by $y\oplus 1$ for such a variable $y$. Replacing every new variable by, for concreteness, $x_1$ leaves the endpoints unchanged and preserves each derivation step (\cref{lem:change_substitutee2}, in \cref{app:proof_sec_manipulate}), yielding an end-to-end derivation with all formulae in $x_1,\ldots,x_{d_0}$.

  Now flatten $\gG$ by collapsing its hidden layers (\cref{lem:substitution_collapse_same_formula}), obtaining a graph of depth one whose single non-input node carries the formula $[\gG] = \tau_1$ over the $d_0$ input nodes. Next, construct the depth-one graphs $H_t$, $t = 1,\ldots,T$, whose only non-input node carries $\tau_t$ over the $d_0$ input nodes; flattening $\gG$ produces $H_1$. The step $\tau_t\widesim{e_t}\tau_{t+1}$ can now be read as a graph operation, since in a depth-one graph the represented formula is the formula of the single non-input node; it is a rewrite of that node by $e_t$, carrying $H_t$ to $H_{t+1}$. Flattening $\gG'$ likewise produces $H_T$. Finally, reversing the collapses that flatten $\gG'$ turns them into expansions leading from $H_T$ back to $\gG'$, each collapse being undone by the corresponding expansion (\cref{def:collapse_expand}), so that $\gG'$ itself is recovered. We thus pass from $\gG$ to $H_1$ by collapses, from $H_1$ to $H_T$ by single-node rewrites, and from $H_T$ to $\gG'$ by expansions.
\end{proof}

We now state the graph counterpart of \cref{them:chang}.

\begin{theorem}[Graph-level completeness]\label{them:graph_completeness}
  Let $\gG$ and $\gG'$ be substitution graphs with the same number of input nodes. Then $[\gG]^\sI = [\gG']^\sI$ if and only if $\gG$ and $\gG'$ are connected by a finite sequence of layer collapses, layer expansions (\cref{def:collapse_expand}), and node rewrites by $\mv$ (\cref{def:node_rewrite}).
\end{theorem}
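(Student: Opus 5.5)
The two directions are handled separately, and each reduces to results already stated. For the forward direction, assume $[\gG]^\sI = [\gG']^\sI$. \Cref{prop:graph_derivation}, which is Chang's theorem (\cref{them:chang}) lifted through \cref{def:graph_manipulation}, then yields $\gG\widesim{\mv}\gG'$. \Cref{prop:local_realizability}, applied with $\gE = \mv$, turns this into a finite sequence of layer collapses, layer expansions, and node rewrites by $\mv$ connecting $\gG$ to $\gG'$. Both results require the same number of input nodes, which is part of the hypothesis. So this direction is just the composition of the two results.

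For the converse, I show that each of the three local operations preserves the truth function $[\cdot]^\sI$ of the represented formula, and then chain along the finite sequence. Layer collapses and expansions leave the represented formula itself unchanged, syntactically (\cref{lem:substitution_collapse_same_formula,lem:substitution_expansion_same_formula}). A fortiori they leave its truth function unchanged.

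For a node rewrite by $e\in\mv$, \cref{prop:graph2formula} gives $[\gG]\widesim{\mv}[\gH]$ for the rewritten graph $\gH$. It then remains to check soundness: a single step $\tau\widesim{e}\tau'$ preserves $\tau^\sI$. I would argue this by induction on the formula context around the replaced subformula $\gamma$. The base case is $\gamma^\sI = \gamma'^\sI$, which holds because $\gamma=\gamma'$ is an instance $\epsilon(\zeta)=\epsilon'(\zeta)$ of an MV axiom. The axioms hold identically in $\sI$, since $\sI$ is an MV algebra (\cref{defn:MV algebra,defn:standard-mv}). By \eqref{eq:subst-semantics}, the truth function of $\epsilon(\zeta)$ is $\epsilon^\sI$ evaluated at the truth functions of the substitutors, and likewise for $\epsilon'(\zeta)$, so the two agree. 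The induction step uses the fact that $\oplus$, $\odot$, $\lnot$ act pointwise on truth functions, exactly as in the derivation of \eqref{eq:subst-semantics}. Chaining these equalities over the finite sequence of operations gives $[\gG]^\sI = [\gG']^\sI$.

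The main obstacle is bookkeeping in the variables rather than any conceptual step. In the soundness argument, an axiom instance may introduce variables absent from the ambient formula. Truth functions must then be read as maps on $[0,1]^{d}$ for a common $d$, extended constantly in the unused coordinates, as the convention following \eqref{eq:subst-semantics} allows. I would fix $d$ to cover all variables appearing in the chain, or invoke \cref{lem:change_substitutee2} to map any extra variables to $x_1$, as in the proof of \cref{prop:local_realizability}. Either way every equality is stated on $[0,1]^{d_0}$. Intermediate graphs have the same number of input nodes because none of the three operations touches level~$0$.
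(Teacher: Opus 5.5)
Your proposal is correct and follows the paper's proof: the forward direction composes \cref{prop:graph_derivation} with \cref{prop:local_realizability}, and the converse uses \cref{lem:substitution_collapse_same_formula} and \cref{lem:substitution_expansion_same_formula} for collapses and expansions and \cref{prop:graph2formula} for node rewrites. The only addition is that you verify explicitly, variable bookkeeping included, that $\widesim{\mv}$ preserves truth functions in $\sI$, a soundness fact the paper treats as immediate.
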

\begin{proof}
  If $[\gG]^\sI = [\gG']^\sI$, then \cref{prop:graph_derivation} gives $\gG\widesim{\mv}\gG'$, and \cref{prop:local_realizability} exhibits the associated sequence of layer collapses, layer expansions, and node rewrites. Conversely, layer collapse and layer expansion leave the formula represented by the graph unchanged (Lemmata~\ref{lem:substitution_collapse_same_formula} and~\ref{lem:substitution_expansion_same_formula}), while a node rewrite by an axiom of $\mv$ gives $[\gG]\widesim{\mv}[\gG']$ (\cref{prop:graph2formula}). Each operation therefore satisfies $[\gG]\widesim{\mv}[\gG']$, and hence we get $[\gG]^\sI = [\gG']^\sI$ overall.
\end{proof}

\section{Constructing ReLU networks from normal substitution graphs}\label{sec:construction}

This section establishes the construction step of the identification program behind \cref{them:main_theorem1}. The construction algorithm developed here builds, from any normal substitution graph $\gG$, a ReLU network $\constr(\gG)$ with $\langle\constr(\gG)\rangle = [\gG]^\sI$ and $\constr(\ext(\gN)) = \gN$ for every $\gN\in\mathfrak{N}$, that is, $\constr$ is a left inverse of $\ext$ on $\mathfrak{N}$. Left-invertibility is what renders the normal substitution graph $\ext(\gN)$ a faithful representation of $\gN$. It also makes $\ext$ injective on $\mathfrak{N}$, as $\ext(\gN_1) = \ext(\gN_2)$ implies $\gN_1 = \constr(\ext(\gN_1)) = \constr(\ext(\gN_2)) = \gN_2$. Distinct networks in $\mathfrak{N}$ therefore have distinct substitution graphs, and a classification carried out on normal substitution graphs is a classification of networks.

We build on the construction algorithm of~\cite{nn2mv2024}, which takes a \Luka{} formula as input and builds a ReLU network by concatenating three elementary networks $\gN^\lnot$, $\gN^\oplus$, $\gN^\odot$ (\cref{fig:5-fig1}), realizing the operations $\lnot$, $\oplus$, and $\odot$, according to how they appear in the formula. The architecture of the network so obtained is dictated by the syntax of the formula and bears no relation to the architecture of a network the formula may have been extracted from.

\begin{figure}[tb]
  \centering
  \resizebox{12cm}{!}{\begin{tikzpicture}[
  neuron/.style={circle, draw, minimum size=0.9cm, inner sep=0pt},
  >=Stealth, 
  node distance=1.2cm and 1.6cm
]
  \def\labelsep{2pt}  
  \def\yshift{0.5cm}
    \def\xshift{1.cm}
    \def\yshifty{0.2cm}

    \node[]  at (-1.4, 2) {$\gN^{\lnot}$};

\node[neuron] (i1) at (0, 0) {};
\node[neuron, above=of i1] (h11){ \makebox[0pt][c]{$1$}}; 
 \draw[->] (i1) -- (h11)node[near end, sloped, above] {$-1$};

\begin{scope}[xshift=5cm]
     \node[]  at (-1.4, 4) {$\gN^{\oplus}$};

\node[neuron] (i1) at (0, 0) {};
\node[neuron, right=of i1] (i2){ }; 
\node[neuron, above=of i1, xshift=\xshift] (h11){ $1,\rho$}; 
 \draw[->] (i1) -- (h11)node[near end, sloped, above] {$-1$};

 \draw[->] (i2) -- (h11)node[near end, sloped, above] {$-1$};
\node[neuron, above=of h11] (h21) {$1$};
  \draw[->] (h11) -- (h21)node[near end, sloped, above] {$-1$};
\end{scope}

\begin{scope}[xshift=13cm]
     \node[]  at (-1.4, 4) {$\gN^{\odot}$};

\node[neuron] (i1) at (0, 0) {};
\node[neuron, right=of i1] (i2){ }; 
\node[neuron, above=of i1, xshift=\xshift] (h11){ $-1,\rho$}; 
 \draw[->] (i1) -- (h11)node[near end, sloped, above] {$1$};

 \draw[->] (i2) -- (h11)node[near end, sloped, above] {$1$};
\node[neuron, above=of h11] (h21) {$0$};
  \draw[->] (h11) -- (h21)node[near end, sloped, above] {$1$};
\end{scope}

\end{tikzpicture}}
  \caption{The elementary building blocks of the construction algorithm in \cite{nn2mv2024}. As in \cref{fig:3-fig1}, a hidden node is labeled by its bias and activation function, an edge by its weight, and the output node, which applies no activation function, by its bias alone. The three networks thus compute $1-x_1 = \lnot x_1$, $1-\rho(1-x_1-x_2) = x_1\oplus x_2$, and $\rho(x_1+x_2-1) = x_1\odot x_2$, respectively.}\label{fig:5-fig1}
\end{figure}

\subsection{Construction, Step~I: From normal substitution graphs to $\sigma$-networks}\label{subsec:construct_step1}

Let $\gG = (V,E,\gC)$ be a normal substitution graph of depth $L$ and architecture $(d_0,\ldots,d_L)$.
The network we construct inherits its nodes and edges from the layered graph $(V,E)$ of $\gG$, so that its architecture is $(d_0,\ldots,d_L)$. It remains to specify weights, biases, and activation functions.
For each node $v_i^{(j)}$ of $\gG$ with $j\geq 1$, apply the $\kappa$-mapping (\cref{def:kappa}) to the \change{\Luka} formula $[v_i^{(j)}]\in\gC_\norm$. The result takes one of the two forms
\begin{align}
  \kappa([v_i^{(j)}]) &= \sigma\Bigl(b_{v_i^{(j)}}+\sum_{i'}m_{i'}x_{i'}\Bigr), \quad m_{i'},b_{v_i^{(j)}}\in\sZ, \label{eq:kappa-affine}\\
  \kappa([v_i^{(j)}]) &= c, \quad c\in\{0,1\}. \label{eq:kappa-const}
\end{align}
The affine combination inside the right-hand side of~\eqref{eq:kappa-affine} is uniquely determined by $\kappa([v_i^{(j)}])$ whenever some $m_{i'}$ is nonzero, equivalently, whenever $\kappa([v_i^{(j)}])$ is non-constant. Indeed, a non-constant affine combination takes values in $(0,1)$ on a non-empty open set, on which $\sigma$ acts as the identity; two combinations with equal $\sigma$-image therefore agree there, and two affine maps agreeing on an open set are equal. In case~\eqref{eq:kappa-affine} we can hence identify the weight on the edge $(v_{i'}^{(j-1)},v_i^{(j)})$ with $m_{i'}$, for $i'=1,\ldots,d_{j-1}$, and the bias at $v_i^{(j)}$ with $b_{v_i^{(j)}}$. If all $m_{i'}$ vanish, the bias is not uniquely determined, being an integer and $\sigma(b)$ taking the value $1$ for every $b\geq 1$ and the value $0$ for every $b\leq 0$, and~\eqref{eq:kappa-const} fixes it to the corresponding value $c$; every edge entering $v_i^{(j)}$ then receives the weight $0$ and the node the bias $c$.
All non-input nodes, the output node included, are endowed with the activation function $\sigma$ (\cref{def:rho_sigma}).
Every node $v$ of the resulting $\sigma$-network $\gM = (V,E,\gW,\gB,\Psi)$ (\cref{def:rho_sigma}) has local map $\ang{v} = \kappa([v])$. For $j\geq 1$ this holds by construction, the weights, the bias, and the activation function of $v$ having been chosen so that $\ang{v}$ is the right-hand side of~\eqref{eq:kappa-affine}, respectively the constant $\sigma(c)=c$ of~\eqref{eq:kappa-const}; at an input node it holds since $[v_i^{(0)}]=x_i$. Moreover, by \cref{def:kappa}, $\kappa([v]) = [v]^\sI$, so $\gM$ realizes $[\gG]^\sI$ on $[0,1]^{d_0}$ by \cref{lem:graph_truth_function}.

\subsection{Construction, Step~II: From $\sigma$-networks to $\rho$-networks}\label{subsec:sigma-rho}

This step converts the $\sigma$-network $\gM$ delivered by Construction, Step~I into a functionally equivalent $\rho$-network. The conversion rests on the identities
\begin{align}
  \sigma(x) &= \rho(x), \quad x\leq 1, \label{eq:1}\\
  \sigma(x) &= \rho(x)-\rho(x-1), \quad x\in\sR. \label{eq:11}
\end{align}
We proceed through the hidden layers from the output side toward the input side, that is, in the order $j = L-1,\ldots,1$; the output node is treated at the end of this step. The parents of a node are then still $\sigma$-nodes when its input interval~\eqref{eq:compute-interval} is computed, so that the parent outputs lie in $[0,1]$, as~\eqref{eq:compute-interval} requires. Extraction, Step~I faces the same requirement and meets it by the reverse order, the parents of a $\rho$-node being converted to $\sigma$-nodes before the node itself is converted.
Each node $v_i^{(j)}$ with input interval $[\ell_{v_i^{(j)}}, \mathcal{L}_{v_i^{(j)}}]$, computed as in \eqref{eq:compute-interval}, is treated as follows:
\begin{itemize}
  \item if $\mathcal{L}_{v_i^{(j)}}\leq 1$: replace the activation function by $\rho$, which leaves the local map unchanged, according to~\eqref{eq:1};
  \item if $\mathcal{L}_{v_i^{(j)}}>1$: replace the activation function by $\rho$ and add a companion node $v_{i1}^{(j)}$ to layer $j$, joined to the parents and children of $v_i^{(j)}$, with the same incoming weights, outgoing weights negated, and bias $b_{v_i^{(j)}}-1$; the contributions of $v_i^{(j)}$ and $v_{i1}^{(j)}$ to their children then sum to the output of the original $\sigma$-node, according to~\eqref{eq:11}.
\end{itemize}
Once a layer has been converted, we \emph{aggregate} its $\rho$-nodes, merging those of equal local map into one, with outgoing weights added, and removing every node of the layer whose outgoing weights are then all zero. Aggregation does not change the realized function. It removes the degeneracies excluded by the third and fourth conditions of \cref{defn:nondege-condition}, and thereby ensures that, for $\gG = \ext(\gN)$, Construction, Step~II returns $\gN$ itself rather than a functionally equivalent expansion of it.

It remains to treat the output node $\vout$, which, in accordance with \cref{def:rho_sigma}, was endowed with the activation function $\sigma$ in Construction, Step~I.
For $\gG = \ext(\gN)$ the pre-activation of $\vout$ equals $\ang{\gN}$ and hence lies in $[0,1]$, so that $\sigma$ acts as the identity and can be removed, the architecture remaining that of $\gG$. For a general normal graph we have to determine whether the pre-activation $g$ of $\vout$ lies in $[0,1]$. If it does, $\sigma$ is again removed; if not, a layer has to be inserted, $\sigma$ being realized through \eqref{eq:11} by turning $\vout$ and a companion node with bias $b_\vout-1$ into hidden $\rho$-nodes that feed a new output node with weights $1$ and $-1$ and bias $0$.
As the interval bounds \eqref{eq:compute-interval} may overestimate the true range of $g$ and could hence insert the additional layer needlessly, the condition $0\leq g\leq 1$ on $[0,1]^{d_0}$ has to be decided exactly.
At this point the network feeding $\vout$ carries $\rho$-nodes in all hidden layers, so that the condition is an input-output property of a network realizing a piecewise-linear function. We can hence employ the Branch-and-Bound framework of \cite{bunelBranchBoundPiecewise2020}, which decides such properties by branching over the input domain or over the phase of a ReLU node, active or inactive, and bounding the output on each resulting subproblem through a convex relaxation.

We now show that Construction, Step~II undoes Extraction, Step~I on non-degenerate networks.

\begin{lem}\label{lem:sigma-rho-equal}
  Let $\gN\in\mathfrak{N}$, let $\gM$ be the $\sigma$-network obtained from $\gN$ by Extraction, Step~I, and let $\gN'$ be the $\rho$-network Construction, Step~II produces from $\gM$. Then $\gN' = \gN$.
\end{lem}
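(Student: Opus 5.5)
The plan is to compare, node by node, what Extraction, Step~I does to $\gN$ with what Construction, Step~II does to $\gM$, and to show that each undoes the other layer by layer. The argument is an induction over the hidden layers. Extraction, Step~I processes them in the order $j=1,\ldots,L-1$. Construction, Step~II processes them in the order $j=L-1,\ldots,1$. In both cases the quantity that decides what happens at a node is its input interval~\eqref{eq:compute-interval}. That interval is computed from the node's own incoming weights and bias, and in both passes it is computed while the node's parents are still $\sigma$-nodes. So a node of $\gN$ and the $\sigma$-nodes it gives rise to in $\gM$ have matching interval data. One point needs care: the incoming weights of a node are unchanged by either conversion of its parent layer, because conversion only duplicates parents (with the same outgoing weights) and aggregation re-merges them. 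I will track this explicitly.

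For a single node $v$ of $\gN$ at level $j$ with bounds $\ell_v,\mathcal{L}_v$, I distinguish two cases.

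\emph{Case $\mathcal{L}_v\le 1$.} Extraction only swaps $\rho$ for $\sigma$. Construction computes the same $\mathcal{L}_v\le1$ and swaps back, so the node is restored.

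\emph{Case $\mathcal{L}_v>1$.} Extraction produces $K=\lceil\mathcal{L}_v\rceil$ $\sigma$-nodes $u_0,\ldots,u_{K-1}$. They share the incoming weights $w$ of $v$ and the outgoing weights of $v$, and have biases $b_v-k$. Node $u_k$ has upper bound $\mathcal{L}_v-k$, so $u_0,\ldots,u_{K-2}$ have upper bound $>1$, while $u_{K-1}$ has upper bound $\le1$. Construction therefore turns $u_k$, for $k\le K-2$, into a $\rho$-node with bias $b_v-k$ together with a companion with bias $b_v-k-1$ and negated outgoing weights. It turns $u_{K-1}$ into a $\rho$-node with bias $b_v-K+1$. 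Grouped by local map $\rho(w\cdot x+b_v-k)$, the outgoing weights then telescope. The net weight is $c$ for $k=0$, where $c$ denotes the vector of outgoing weights of $v$. For $1\le k\le K-1$ the net weight is $c-c=0$. Aggregation merges equal local maps and deletes the nodes whose outgoing weights are all zero, which leaves exactly $v$ with its original parameters.

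Two things remain to check. First, aggregation must not merge anything else. By the third condition of \cref{defn:nondege-condition}, distinct nodes of $\gN$ at the same level have distinct local maps, so nodes stemming from different original nodes are not merged. Moreover, the weight of $v$ itself survives after telescoping, because by the fourth condition some outgoing weight of $v$ is nonzero. Second, the output node. Its pre-activation equals $\ang{\gN}\in[0,1]$, so the exact test of Construction, Step~II removes $\sigma$ without inserting a layer, and the output bias and weights are those of $\gN$.

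The hardest step will be making the induction over layers fully rigorous. The subtlety is that the processing order is reversed while the companion nodes of layer $j$ alter the outgoing-weight pattern seen from layer $j-1$. I would handle this by proving a local invariant for the pair of adjacent layers $(j-1,j)$: once layer $j$ has been converted and aggregated, the edges from layer $j-1$ into layer $j$ carry exactly the original incoming weights of $\gN$. I would also argue that possible coincidences of local maps between a surviving node $u_k$ and a different original node cannot occur, since such a coincidence would contradict the third condition of \cref{defn:nondege-condition}. The first two conditions are what make the node-level bookkeeping consistent with extraction.
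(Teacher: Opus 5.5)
Your plan follows the paper's proof closely. It uses the same split of each node into $\lceil\mathcal{L}_v\rceil$ cluster members, the same observation that both passes evaluate \eqref{eq:compute-interval} against the same $\sigma$-parents so that member $k$ has upper end $\mathcal{L}_v-k$, the same telescoping of companions against cluster members, the fourth condition for survival, and the same treatment of the output node via $\ang{\gN}\in[0,1]$.

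One supporting claim is false as stated. The third condition of \cref{defn:nondege-condition} does \emph{not} imply that nodes stemming from different original nodes are never merged. It only makes the affine maps $f_u$ pairwise distinct, and it allows $f_{u'}=f_u-1$. Take the network with hidden nodes $u=\rho(2x)$ and $u'=\rho(2x-1)$ and output weights $1,-1$. It realizes $\min\{2x,1\}$ on $[0,1]$ and lies in $\mathfrak{N}$. Here $\mathcal{L}_u=2$, so $u$ becomes $\sigma(2x),\sigma(2x-1)$, while $u'$ stays a single node. In Construction, Step~II the companion of $\sigma(2x)$, the member $\sigma(2x-1)$ of $u$'s cluster, and $u'$ itself all turn into $\rho(2x-1)$, with weights $-1,+1,-1$, and aggregation merges them. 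Likewise, your closing claim is true only for coincidences between surviving nodes $\rho(f_u)$ and $\rho(f_{u'})$. It fails for a coincidence between $\rho(f_{u'})$ and a non-surviving member $\rho(f_u-k)$, $k\ge1$, of another cluster. For the same reason, a node in your case $\mathcal{L}_v\le 1$ is not restored in isolation, since it can absorb net-zero contributions from other clusters.

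The fix is to argue with net weights, as the paper does. Aggregation adds the outgoing weights of all nodes of the layer with a given local map. So the node left at $\rho(g)$ carries the sum, over all original nodes $u$ of that level, of the net weight that $u$'s cluster and companions place at $\rho(g)$. By your telescoping, this sum is $\sum_{u:\,f_u=g}c_u$. The third condition is needed only to ensure that at most one $u$ has $f_u=g$, and the fourth to ensure that this weight is nonzero. The first two conditions also play a role here: they make every $f_u$ non-constant, so that equal local maps really mean equal affine maps. With this accounting in place of the ``no cross-merging'' claim, your argument goes through.
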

\begin{proof}
  We track a single hidden level through the two passages, from $\gN$ to $\gM$ by Extraction, Step~I and from $\gM$ to $\gN'$ by Construction, Step~II. The input nodes are left untouched by both passages, and the output node is dealt with at the end. Arbitrarily fix a hidden level $j$, $j\in\{1,\ldots,L-1\}$, and a node $u$ of $\gN$ at that level, with local map $\rho(f_u)$, where $f_u$ denotes the affine combination feeding $u$, input interval $[\ell_u,\mathcal{L}_u]$ (according to \eqref{eq:compute-interval}), and outgoing weights $w_u = \{w_{\tilde v,u} : (u,\tilde v)\in E\}$, not all zero by non-degeneracy.
  Extraction, Step~I replaces $u$ by the cluster of $\sigma$-nodes $\sigma(f_u-i)$, $0\leq i\leq k_u-1$, where $k_u = \lceil\mathcal{L}_u\rceil$ if $\mathcal{L}_u>1$ and $k_u=1$ otherwise, each carrying the incoming weights of $u$ and the outgoing weights $w_u$.
  Extraction, Step~I and Construction, Step~II both evaluate \eqref{eq:compute-interval} at $u$ with the parents of $u$ outputting values in $[0,1]$, so that the input interval of the cluster member $\sigma(f_u-i)$ has upper end $\mathcal{L}_u-i$. A companion is added to $\sigma(f_u-i)$, through~\eqref{eq:11}, precisely when $\mathcal{L}_u-i>1$, that is, as $k_u-1<\mathcal{L}_u\leq k_u$, for $i\leq k_u-2$.
  Construction, Step~II therefore converts the cluster $\{\sigma(f_u-i)\}_{i=0}^{k_u-1}$ into $\rho$-nodes of $\gN'$, all carrying the incoming weights of $u$, namely
  \[
    \rho(f_u),\ \rho(f_u-1),\ \ldots,\ \rho(f_u-k_u+1), \qquad\text{all with outgoing weights } w_u,
  \]
  and the companions
  \[
    \rho(f_u-1),\ \ldots,\ \rho(f_u-k_u+1), \qquad\text{all with outgoing weights } -w_u.
  \]
  The two lists differ only in the entry $\rho(f_u)$, so that the outgoing weights of the nodes originating from $u$ sum to $w_u$ at the local map $\rho(f_u)$ and to zero at $\rho(f_u-i)$, $1\leq i\leq k_u-1$.
  By the third condition of \cref{defn:nondege-condition} the maps $f_u$ are pairwise distinct, so that aggregation leaves a unique node with local map $\rho(f_u)$ and outgoing weights $w_u$ for each level-$j$ node $u$ of $\gN$, and removes all others, their weights summing to zero. Altogether, level $j$ of $\gN'$ equals level $j$ of $\gN$ and hence, by virtue of $j\in\{1,\ldots,L-1\}$ being arbitrary, $\gN$ and $\gN'$ agree at every hidden level.
  At the output node of $\gM$ the pre-activation equals $\ang{\gN}\in[0,1]$, so that $\sigma$ can be removed without changing the output map. Hence $\gN' = \gN$.
\end{proof}

\begin{exmp}
  We illustrate \cref{lem:sigma-rho-equal} by applying Construction, Step~II to the $\sigma$-network $\gM$ of \cref{fig:dag6}, which Extraction, Step~I produced from the $\rho$-network $\gN$ of \cref{fig:dag}. The hidden layers are treated in the order $j=3,2,1$, followed by the output node, and $\mathcal{L}$ denotes throughout the upper end of the input interval~\eqref{eq:compute-interval}. At level~$3$ the node $v_1^{(3)}$ has $\mathcal{L}=1$, so that its activation function is merely replaced by $\rho$. At level~$2$ the nodes $v_1^{(2)},v_{11}^{(2)},v_2^{(2)}$ have $\mathcal{L}=2,1,1$, so that only $v_1^{(2)}$ receives a companion---a new node, not displayed in \cref{fig:dag6}---of bias $0$ and outgoing weight $+1$, the outgoing weight of $v_1^{(2)}$ being $-1$. This companion and $v_{11}^{(2)}$ carry the same local map and outgoing weights of opposite sign, so that aggregation merges them into a node of weight zero and removes it, leaving level~$2$ with $v_1^{(2)}$ and $v_2^{(2)}$, both now $\rho$-nodes. We proceed in the same way at level~$1$. The removal at level~$2$ has deleted the edges into $v_{11}^{(2)}$, so that $v_1^{(1)}$ and $v_{11}^{(1)}$ are each left with the two outgoing weights $-1$ and $+1$, into $v_1^{(2)}$ and $v_2^{(2)}$. Of $v_1^{(1)},v_{11}^{(1)},v_2^{(1)}$, having $\mathcal{L}=2,1,1$, only $v_1^{(1)}$ receives a companion, of bias $-2$ and outgoing weights $+1$ and $-1$, the negatives of those of $v_{11}^{(1)}$ in each component, so that this companion and $v_{11}^{(1)}$ cancel. At the output node the pre-activation equals $\ang{\gN}$, the conversion of the hidden layers having left the input-output map unchanged, and takes values in $[0,1]$, so that $\sigma$ can be dropped. The result is the network $\gN$ of \cref{fig:dag}.
\end{exmp}

\subsection{Construction is a left inverse of extraction}

\begin{prop}\label{prop:pillar3}
  For every $\gN\in\mathfrak{N}$, $\constr(\ext(\gN)) = \gN$.
\end{prop}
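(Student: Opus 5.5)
The plan is to split $\constr$ into its two steps and show that each undoes the matching step of $\ext$. Set $\gG = \ext(\gN) = (V,E,\gC)$. Here $(V,E)$ is the layered graph of the $\sigma$-network $\gM$ produced from $\gN$ by Extraction, Step~I, and $\gC$ holds the node formulae from Extraction, Step~II. Since these node formulae are minterms, $\gG$ is normal and $\constr$ applies to it. The claim then follows from two facts. First, Construction, Step~I applied to $\gG$ returns exactly $\gM$. Second, Construction, Step~II applied to $\gM$ returns $\gN$; this second fact is \cref{lem:sigma-rho-equal}.

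For the first fact I would compare the two networks node by node, since both share the layered graph $(V,E)$. Fix a non-input node $v$ of $\gM$ with local map $\ang{v} = \sigma(f_v)$, where $f_v = \sum_{i'} w_{v,v_{i'}}x_{i'} + b_v$. By \cref{lem:kappa_maps_back}, $\kappa([v]) = \ang{v} = \sigma(f_v)$. I would then split into two cases.

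If $f_v$ has some nonzero weight, then $\sigma(f_v)$ is non-constant. The uniqueness argument of Construction, Step~I (two integer affine maps with the same non-constant $\sigma$-image agree on an open set, hence are equal) shows that the weights and bias read off from $\kappa([v])$ in \eqref{eq:kappa-affine} are exactly those of $v$ in $\gM$.

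If all incoming weights of $v$ are zero, then $\ang{v}$ is the constant $\sigma(b_v)$. Construction instead assigns the bias $c\in\{0,1\}$, which could differ from $b_v$. This is the step I expect to be the real obstacle, and I would exclude it using non-degeneracy (\cref{defn:nondege-condition}). For a hidden node of $\gN$ with constant pre-activation, the global map is constant: it is either inactive everywhere (dead) or active everywhere, which violates the first or second condition. Extraction, Step~I copies incoming weights into every cluster member, so no cluster member has all-zero incoming weights either. At the output node, all-zero incoming weights would mean $\ang{\gN}$ is constant. In that case the Extraction, Step~I conversion of $\vout$ still carries the original weights $w_{\vout,\cdot}$, and these are not all zero because otherwise every level-$(L-1)$ node would have only zero outgoing weights, against the fourth condition. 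So the constant case never occurs in $\ext(\gN)$.

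Both steps also assign $\sigma$ as the activation function of every non-input node, so Construction, Step~I returns $\gM$. Applying \cref{lem:sigma-rho-equal}, Construction, Step~II then returns $\gN$, including the removal of $\sigma$ at $\vout$: there the pre-activation equals $\ang{\gN}\in[0,1]$, and the exact decision procedure correctly reports this and inserts no layer. Hence $\constr(\ext(\gN)) = \gN$.
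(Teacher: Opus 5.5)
Your reduction matches the paper's: show that Construction, Step~I returns the $\sigma$-network $\gM$ of Extraction, Step~I, then invoke \cref{lem:sigma-rho-equal}, using non-degeneracy to exclude nodes whose weights and bias cannot be read off $\kappa([v])$. The case split you use for this, however, is the wrong one.

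\textbf{Saturated nodes.} $\kappa([v])=[v]^\sI$ is a function on the cube $[0,1]^{d_{j-1}}$, and an integer affine map with nonzero weights can saturate there. For example, $\sigma(x_1-1)\equiv 0$ and $\sigma(x_1+1)\equiv 1$ on $[0,1]$. In such a case \texttt{EXTR} returns the constant $0$ or $1$, and Construction, Step~I assigns weight $0$. So ``some nonzero weight implies $\sigma(f_v)$ non-constant'' is false, and ruling out all-zero weights does not suffice.

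What you need is $\ell_v<1$ and $\mathcal{L}_v>0$ for every non-input node $v$ of $\gM$, where $[\ell_v,\mathcal{L}_v]$ is the range of $f_v$ on the cube, as in \eqref{eq:compute-interval}.
\begin{itemize}
\item For a cluster member $\sigma(f_u-i)$, let $k_u$ be the number of members emerging from $u$. The first two conditions of \cref{defn:nondege-condition} give this directly. Since $u$ is active somewhere, $\mathcal{L}_u>0$; since it is inactive somewhere, $\ell_u\le 0$. Moreover $i\le k_u-1<\mathcal{L}_u$.
\item For the output node of a deep $\gN$, let $[\ell,\mathcal{L}]$ be the range on the cube of its affine combination. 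The pre-activation equals $\ang{\gN}\in[0,1]$ at points of the cube, so $\ell\le 1$ and $\mathcal{L}\ge 0$. Saturation would therefore require $\ell=1$ or $\mathcal{L}=0$, that is, every parent term sitting at its extreme value at every input. Take the $i=0$ cluster member of a last-hidden-layer node $u$; it carries the nonzero weight of $u$ into $\vout$. The extreme-value condition then makes $u$ dead or active everywhere, which non-degeneracy forbids.
\end{itemize}

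\textbf{The shallow case.} Your claim that the constant case never occurs in $\ext(\gN)$ is false for $L=1$. The shallow network with all input weights zero lies in $\mathfrak{N}$: the first, second, and fourth conditions hold vacuously, and the third because the input maps $x_i$ are distinct. Your appeal to the fourth condition breaks down here because the level-$(L-1)$ nodes are input nodes, not hidden nodes.

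In this case $\kappa([\vout])$ is constant, and Construction, Step~I does reset the bias to $c$. The proposition survives only because $b_{\vout}=\ang{\gN}$ is an integer in $[0,1]$ and hence equals $c$. The paper's proof handles exactly this case at its end. It also organizes the case split around constancy of the local map rather than vanishing of the weights.
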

\begin{proof}
  Construction is the composition of its two steps, so that the assertion concerns the round trip
  \begin{equation}\label{eq:roundtrip}
    \gN \xrightarrow{\ \textup{Extr.~I}\ } \gM
        \xrightarrow{\ \textup{Extr.~II}\ } \ext(\gN)
        \xrightarrow{\ \textup{Constr.~I}\ } \gM'
        \xrightarrow{\ \textup{Constr.~II}\ } \gN'.
  \end{equation}
  \cref{lem:sigma-rho-equal} supplies the last arrow, but only for the $\sigma$-network Extraction, Step~I produced from $\gN$. It hence remains to show that $\gM' = \gM$.

  Extraction, Step~II retains of $\gM$ nothing but its layered graph and one \Luka{} formula per node, and Construction, Step~I has to recover the weights and biases of $\gM$ from these formulae. As the node formulae of $\ext(\gN)$ are outputs of Extraction, Step~II, they are minterms, so that $\ext(\gN)$ is normal (\cref{def:minterms}), its layered graph is that of $\gM$ (\cref{defn:extracted_graph}), and $\kappa([v]) = \ang{v}$ for every node $v$ (\cref{lem:kappa_maps_back}).

  We now argue that Construction, Step~I recovers, at every non-input node $v$, the node's weights and bias from $\kappa([v])$. If $\ang{v}$ is non-constant, the weights and the bias can be read from $\kappa([v])$, as shown in \cref{subsec:construct_step1}, and $v$ is assigned in $\gM'$ the incoming weights and the bias it carries in $\gM$. If $\ang{v}$ is constant, all $m_{i'}$ vanish, hence so do the incoming weights of $v$ in $\gM$, and Construction, Step~I reproduces them correctly, but sets the bias to $c\in\{0,1\}$, which differs from the bias of $v$ in $\gM$ whenever the latter is $>1$ or $<0$. Only a node with constant local map whose bias in $\gM$ differs from $c$ can hence obstruct $\gM'=\gM$. We next show that non-degeneracy excludes the existence of such nodes. A hidden node $u$ of $\gN$ is active at some input and inactive at another (first and second conditions of \cref{defn:nondege-condition}), so that the affine combination $f_u$ feeding $u$ is non-constant, and with it the local maps $\sigma(f_u-i)$ of the cluster members emerging from $u$. At the output node, if $\gN$ is deep, the local map is non-constant as well, since every node at the last hidden level carries a nonzero weight into $\vout$, its only child (fourth condition). A constant local map is thus confined to shallow $\gN$ with vanishing input weights---a network that lies in $\mathfrak{N}$, the first, second, and fourth conditions holding vacuously because there are no hidden nodes and the third since the input nodes carry the distinct local maps $x_1,\ldots,x_n$. In this case $\ang{\gN}$ is the bias at $\vout$, an integer equal to $0$ or $1$ as $\gN$ has integer weights and biases and maps into $[0,1]$; \eqref{eq:kappa-const} assigns exactly this value to the bias of $\vout$, so that $\gM'=\gM$.
\end{proof}

\cref{prop:pillar3} completes the construction step. On $\mathfrak{N}$, construction inverts extraction, so a non-degenerate network is recovered from, and uniquely encoded by, its normal form.

\section{Extensions}\label{sec:extension}

We now show how \cref{them:main_theorem1} extends to three further settings, namely, a finite input domain, rational network weights and biases, and real network weights and biases. Trained networks rarely have integer weights and biases, unless integrality is imposed during training, which is what motivates the latter two settings. The first is relevant when a network can be probed only at finitely many inputs, as in fixed-point implementations, where identification is based on finitely many function values.

Rational and real weights and biases enlarge the class of networks and with it the class of realized functions. The correspondence underlying the entire development in this paper---the truth functions of the logic being exactly the functions realized by the networks---then forces a corresponding enlargement of \Luka{} logic, by division operators for rational weights and biases and by scalar multiplication for real ones. The finite-input-domain case, in contrast, leaves the class of networks under consideration, that of \cref{them:main_theorem1} with its integer weights and biases, untouched and changes the semantics instead. The truth values are reduced from $[0,1]$ to a finite set, the standard MV algebra is replaced by a finite-valued one, and functional equivalence becomes coarser, so that more networks count as equivalent and, correspondingly, more axioms are available to connect them.

In each of the three settings considered, only those of the steps~(i)--(iii) of \cref{sec:intro} that require adjustment are treated in detail, with the remaining ones carrying over unchanged. In each setting the graph-level relation is obtained from the formula-level one as in \cref{def:graph_manipulation}, and the network-level relation as in \cref{def:network_manipulation}, with $\ext$ the extraction and $\gE$ the axiom set of the setting under consideration. The non-degeneracy condition (\cref{defn:nondege-condition}) is insensitive to the ring the weights and biases are drawn from and is imposed throughout without further mention. \cref{lem:substitution_composition} and~\eqref{eq:subst-semantics} are proved by induction on the structure of the formula. The operations added to the language below are all unary, each adding to that induction a case carried out as for $\lnot$, so that both results hold in the extended languages.

\subsection{Finite input domain}\label{subsec:finite_case}

For $k\in\sN$, restricting the input domain from $[0,1]^n$ to the finite grid $I_k^n$\, of resolution $k$, with $I_k = \{0,1/k,\ldots,(k-1)/k,1\}$, weakens functional equivalence, in the sense that two networks may agree on $I_k^n$ but differ elsewhere on $[0,1]^n$. The uniform spacing of $I_k$ is not a matter of convenience, a finite subset of $[0,1]$ that contains $0$ and $1$ and is closed under $\oplus$, $\odot$, and $\lnot$ being necessarily of this form; the $I_k$ are thus precisely the finite subalgebras of $\sI$. The network identification problem is correspondingly richer, and the axiom set reflects this, the axioms of $\mathcal{MV}$ being augmented by those of \cref{defn:n-mv} below, which render formulae interderivable that are not interderivable in $\mathcal{MV}$. For instance, $x\oplus x$ and $x$ agree on $\{0,1\}$ but not on $[0,1]$, and are hence interderivable at $k=1$ but not in $\mathcal{MV}$.

The finite case requires $(k+1)$-valued \Luka{} logic $\text{\L}_k$, defined on the truth values $I_k$, whose algebraic counterpart is the $(k+1)$-valued MV algebra defined next.

\begin{defn}\label{defn:n-mv}\cite{grigolia1977algebraic}; see also {\cite[Cor.~8.2.4]{cignoli2000algebraic}} for $k\in\{1,2\}$ and {\cite[Thm.~8.5.1]{cignoli2000algebraic}} for $k\geq 3$.
  We define $\oplus^j x$ and $\odot^j x$ as the $j$-fold disjunction $x\oplus\cdots\oplus x$ and the $j$-fold conjunction $x\odot\cdots\odot x$, respectively.
  For $k=1$, a $2$-valued MV algebra is an MV algebra (\cref{defn:MV algebra}) satisfying the additional axiom
  $x\oplus x = x$.
  For $k\geq 2$, a $(k+1)$-valued MV algebra is an MV algebra satisfying the additional axioms
  \begin{align*}
    \oplus^{k+1}x &= \oplus^{k}x, \\
    \odot^{k+1}\bigl(\oplus^{j}(\odot^{j-1}x)\bigr) &= \oplus^{k+1}(\odot^{j}x), \quad \text{for every } j\in\{2,\ldots,k-1\} \text{ that does not divide } k.
  \end{align*}
  We write $\mathcal{MV}_k$ for the axioms of $\mathcal{MV}$ together with the additional axioms stated in this definition, which lead to the grid symmetries Ax.~$\mathcal{MV}_k$ and Ax.~$\mathcal{MV}_{k,j}$ of \appref{app:glossary_symmetries}.
\end{defn}

\begin{defn}\label{defn:k-luka}
  For $k\in\sN$, the standard $(k+1)$-valued MV algebra is $\mathcal{I}_k = (I_k,\oplus,\odot,\lnot,0,1)$ with operations $x\oplus y=\min\{1,x+y\}$, $x\odot y=\max\{0,x+y-1\}$, $\lnot x=1-x$ restricted to $I_k$.
\end{defn}

Since $I_k$ contains $0$ and $1$ and is closed under $\oplus$, $\odot$, and $\lnot$, every \Luka{} formula takes values in $I_k$ when its variables do.

For $k=1$ we recover the Boolean case. The operations of \cref{defn:k-luka} restricted to $I_1=\{0,1\}$ are Boolean disjunction, conjunction, and negation, so that $\mathcal{I}_1$ is the two-element Boolean algebra and $\text{\L}_1$ is Boolean logic. The additional axiom $x\oplus x = x$ of \cref{defn:n-mv} is what turns an MV algebra into a Boolean one \cite[Cor.~8.2.4]{cignoli2000algebraic}.

Syntactically, $\text{\L}_k$ and \Luka{} logic coincide, being built from the same connectives and hence having the same formulae. Semantically they differ, the truth values being $I_k$ in place of $[0,1]$.
Steps~(i) and~(iii) of \cref{sec:intro} carry over unchanged. Extraction and construction are algorithms on networks and graphs, hence unaffected by the restriction of the domain, and the identities of \cref{lem:extractmv} and the equality of \cref{lem:graph_truth_function} hold pointwise on $[0,1]^n$ and \emph{a fortiori} on $I_k^n$.

The new ingredient here is \change{step}~(ii), which follows from the finite-valued completeness theorem of Grigolia~\cite{grigolia1977algebraic}.

\begin{theorem}[\cite{grigolia1977algebraic}]\label{them:k-mv-complete}
  Let $\tau$, $\tau'$ be formulae in $\text{\L}_k$ and let $k,n\in\sN$. If $\tau^{\mathcal{I}_k} = \tau'^{\mathcal{I}_k}$ on $I_k^n$, then $\tau\widesim{\mathcal{MV}_k}\tau'$.
\end{theorem}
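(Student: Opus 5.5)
The plan is the standard algebraic route to completeness. Everything is reduced to the structure theory of the variety generated by $\mathcal{I}_k$, in the same way Chang's theorem (\cref{them:chang}) reduces to the fact that $\sI$ generates the variety of MV algebras.

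First, I will set up the Lindenbaum algebra. Let $F_n$ be the set of \Luka{} formulae in $x_1,\ldots,x_n$. Define $\tau\equiv\tau'$ iff $\tau\widesim{\mathcal{MV}_k}\tau'$. By \cref{defn:manipulation} this is an equivalence relation. It is also a congruence for $\oplus,\odot,\lnot$, since a derivation can be carried out inside any context by rewriting subformula occurrences. Instances of axioms hold in the quotient because every substitution instance of an axiom is a one-step derivation. Hence $F_n/{\equiv}$ is a $(k+1)$-valued MV algebra in the sense of \cref{defn:n-mv}, and it is the free algebra on $n$ generators in the variety $\mathbb{V}_k$ axiomatized by $\mathcal{MV}_k$. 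The statement $\tau\widesim{\mathcal{MV}_k}\tau'$ therefore says exactly that $\mathbb{V}_k\models\tau=\tau'$.

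Second, and this is the key step, I will show that $\mathbb{V}_k$ is generated by $\mathcal{I}_k$ as a variety. Then $\mathbb{V}_k\models\tau=\tau'$ holds iff $\mathcal{I}_k\models\tau=\tau'$, which is the hypothesis $\tau^{\mathcal{I}_k}=\tau'^{\mathcal{I}_k}$ on $I_k^n$, and the theorem follows. By Chang's subdirect representation theorem, every MV algebra, in particular every $A\in\mathbb{V}_k$, is a subdirect product of MV chains, and each chain still satisfies the $\mathcal{MV}_k$ identities. It then suffices to prove that every MV chain satisfying these identities lies in the variety generated by $\mathcal{I}_k$. I would argue on finitely generated subalgebras, since varieties are determined by their finitely generated members. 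For a finitely generated chain $C$, the identity $\oplus^{k+1}x=\oplus^k x$ forces every element to have order at most $k$. This rules out infinitesimals, so $C$ is simple and by Hölder's theorem embeds into $\sI$ as a subalgebra with all elements of order $\le k$; that subalgebra is finite, namely some $I_m$. The identity $\oplus^{k+1}x=\oplus^k x$ forces $m\le k$. The Grigolia identities $\odot^{k+1}(\oplus^j(\odot^{j-1}x))=\oplus^{k+1}(\odot^j x)$, over the non-divisors $j$ of $k$, should then exclude $m\nmid k$. To verify this, I would evaluate them at $x=1/m$ for $j$ a suitable divisor-related index, checking that the two sides differ in $I_m$ precisely when $m\nmid k$. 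Finally, $m\mid k$ gives $I_m\subseteq I_k$, so $C\in\mathbf{IS}(\mathcal{I}_k)$. Combining this with the subdirect representation gives $\mathbb{V}_k=\mathbf{HSP}(\mathcal{I}_k)$.

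Third, I will conclude. If $\tau^{\mathcal{I}_k}=\tau'^{\mathcal{I}_k}$, the identity holds in $\mathcal{I}_k$, hence in every subalgebra, product and homomorphic image of it, hence in the free algebra $F_n/{\equiv}$. Evaluating at the generators gives $[\tau]=[\tau']$, that is, $\tau\widesim{\mathcal{MV}_k}\tau'$.

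The main obstacle is the second step: showing that the specific Grigolia identities exclude exactly the chains $I_m$ with $m\le k$ and $m\nmid k$, while also excluding non-simple chains such as Komori-type chains with infinitesimals. For the latter, I would first try to show that $\oplus^{k+1}x=\oplus^k x$ alone already forces every element to have finite order. For the divisibility condition, I would check small cases by computing $\odot^{j}(1/m)$ explicitly before attempting the general evaluation.
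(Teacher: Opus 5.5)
Your architecture is the standard proof of the cited result; the paper gives no proof and simply invokes Grigolia. The route is: identify $\widesim{\mathcal{MV}_k}$ with equational consequence via the Lindenbaum algebra, pass to chains by Chang's subdirect representation, and use $\oplus^{k+1}x=\oplus^k x$ to pin the chains down. Steps~1 and~3 are fine. So is the first half of Step~2, whose hedged claim you can settle directly. In any MV chain, $a\oplus b=a<1$ forces $b=0$, since $b\wedge\lnot a=(a\oplus b)\odot\lnot a=a\odot\lnot a=0$ and $\lnot a>0$. Hence every nonzero element has order at most $k$, and the chain is simple and embeds in $[0,1]$ by H\"older's theorem. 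For $x>y$ in the image, $x\odot\lnot y=x-y$ is nonzero, so elements are at least $1/k$ apart and the chain is some $I_m$ with $m\le k$. No reduction to finitely generated chains is needed.

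The genuine gap is the step you flag yourself. It is the only place where Grigolia's specific identities are used, so it is the actual content of the theorem, and as proposed it would not go through. At $x=1/m$ with $m\ge 3$, both sides of $\odot^{k+1}(\oplus^j(\odot^{j-1}x))=\oplus^{k+1}(\odot^j x)$ equal $0$ for every $j\ge 2$. Indeed $\odot^j(1/m)=0$, and for $j=2$ the left side is $\odot^{k+1}(2/m)=0$ because $m\le k$. So this test point detects a failure only when $m=2$.

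The missing computation is as follows. On $I_m$ with $m\le k$ both sides are $\{0,1\}$-valued: $\oplus^{k+1}$ sends every nonzero element of $I_m$ (which is at least $1/k$) to $1$, and $\odot^{k+1}$ sends every element below $1$ (which is at most $(m-1)/m$) to $0$. The left side equals $1$ iff $\odot^{j-1}x\ge 1/j$, i.e.\ iff $x\ge (j-1)/j$. The right side equals $1$ iff $\odot^j x>0$, i.e.\ iff $x>(j-1)/j$. The $j$-th identity therefore holds in $I_m$ iff $(j-1)/j\notin I_m$, that is, iff $j\nmid m$.

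If $m\le k$ and $m\nmid k$, then $m\notin\{1,k\}$, so $j=m$ is an admissible index. The witness is $x=(m-1)/m$: $\odot^m x=0$ makes the right side $0$, whereas $\odot^{m-1}x=1/m$ and $\oplus^m(1/m)=1$ make the left side $1$. Every chain satisfying $\mathcal{MV}_k$ is thus some $I_m$ with $m\mid k$, hence a subalgebra of $\mathcal{I}_k$, and your conclusion follows. The case $m=k$ of the same computation also shows that $\mathcal{I}_k$ satisfies the identities, which your direction does not need.
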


With \cref{them:k-mv-complete} supplying step~(ii) in place of Chang's completeness theorem (\cref{them:chang}), we obtain the identification result over $I_k^n$.

\begin{theorem}\label{them:main_theorem2}
  For $n\in\sN$, let $\mathfrak{N}$ be the class of non-degenerate ReLU networks with integer weights and biases realizing functions $f:[0,1]^n\rightarrow[0,1]$.
  For $k\in\sN$, the set of axioms $\mathcal{MV}_k$ completely identifies $\mathfrak{N}$ over $I_k^n$, that is, $\gN_1\sim_{I_k^n}\gN_2$ implies $\gN_1\widesim{\mathcal{MV}_k}\gN_2$ for all $\gN_1,\gN_2\in\mathfrak{N}$.
\end{theorem}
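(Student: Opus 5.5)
The proof will mirror that of \cref{them:main_theorem1}, with Grigolia's theorem (\cref{them:k-mv-complete}) taking the place of Chang's theorem (\cref{them:chang}). Let $\gN_1,\gN_2\in\mathfrak{N}$ with $\gN_1\sim_{I_k^n}\gN_2$. For the extraction step, set $\tau_i=[\ext(\gN_i)]$, $i=1,2$. As observed after \cref{lem:graph_truth_function}, $\tau_i^\sI=\ang{\gN_i}$ on $[0,1]^n$, and in particular on $I_k^n\subset[0,1]^n$.

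The one point needing care is passing from the $\sI$-semantics to the $\mathcal{I}_k$-semantics. I would argue by structural induction on the formula that for every \Luka{} formula $\tau$ and every $a\in I_k^n$, $\tau^{\mathcal{I}_k}(a)=\tau^\sI(a)$. Variables and constants are immediate. The operations of $\mathcal{I}_k$ are the restrictions of those of $\sI$ (\cref{defn:k-luka}), and $I_k$ is closed under $\oplus$, $\odot$, $\lnot$, so each inductive case goes through. Hence $\tau_i^{\mathcal{I}_k}=\tau_i^\sI|_{I_k^n}=\ang{\gN_i}|_{I_k^n}$. The hypothesis $\gN_1\sim_{I_k^n}\gN_2$ then gives $\tau_1^{\mathcal{I}_k}=\tau_2^{\mathcal{I}_k}$ on $I_k^n$.

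For the derivation step, \cref{them:k-mv-complete} yields $\tau_1\widesim{\mathcal{MV}_k}\tau_2$, i.e.\ $[\ext(\gN_1)]\widesim{\mathcal{MV}_k}[\ext(\gN_2)]$. By \cref{def:graph_manipulation} with $\gE=\mathcal{MV}_k$, this is $\ext(\gN_1)\widesim{\mathcal{MV}_k}\ext(\gN_2)$. By \cref{def:network_manipulation}, it is $\gN_1\widesim{\mathcal{MV}_k}\gN_2$.

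If desired, \cref{prop:local_realizability}, which holds for an arbitrary axiom set, realizes this derivation through layer collapses, expansions and node rewrites by $\mathcal{MV}_k$. Construction via \cref{prop:pillar3} recovers $\gN_1,\gN_2$ at the endpoints unchanged, since extraction and construction do not depend on the domain.

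I expect no step to be a real obstacle. The only subtle point is the semantic agreement of $\mathcal{I}_k$ with $\sI$ on the grid, which is what lets the extraction identity, proved over $[0,1]^n$, feed the finite-valued completeness theorem.
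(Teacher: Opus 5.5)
Your proposal is correct and follows the paper's proof essentially step for step. Extraction gives $\tau_i^{\mathcal{I}_k}=\tau_i^{\sI}=\ang{\gN_i}$ on $I_k^n$ because $I_k$ is closed under the MV operations, Grigolia's theorem then yields $\tau_1\widesim{\mathcal{MV}_k}\tau_2$, and the two pullback definitions lift this to graphs and networks. Your structural induction simply spells out the paper's one-line closure argument, and your closing remarks on local realizability and construction are not needed for the statement.
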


\begin{proof}
  Step~(i) (extraction): set $\tau_i = [\ext(\gN_i)]$, $i=1,2$. Extraction is unaffected by the domain restriction, and $\tau_i^{\mathcal{I}_k} = \tau_i^{\sI} = \ang{\gN_i}$ on $I_k^n$ because $I_k$ is closed under $\oplus$, $\odot$, and $\lnot$. Since $\gN_1\sim_{I_k^n}\gN_2$ means $\ang{\gN_1} = \ang{\gN_2}$ on $I_k^n$ (\cref{sec:intro}), we have $\tau_1^{\mathcal{I}_k} = \tau_2^{\mathcal{I}_k}$. Step~(ii) (derivation): \cref{them:k-mv-complete} yields $\tau_1\widesim{\mathcal{MV}_k}\tau_2$, which is $\ext(\gN_1)\widesim{\mathcal{MV}_k}\ext(\gN_2)$ by \cref{def:graph_manipulation}, and hence $\gN_1\widesim{\mathcal{MV}_k}\gN_2$ by \cref{def:network_manipulation}.
\end{proof}

At $k=1$, \cref{them:main_theorem2} states that the formulae extracted from two networks in $\mathfrak{N}$ agreeing on the Boolean cube $\{0,1\}^n$ are interderivable through the axioms of Boolean algebra.

Both relations in \cref{them:main_theorem2} become finer as the grid is refined. If $k$ divides $k'$, then $I_k^n\subseteq I_{k'}^n$, so that $I_{k'}^n$ is the finer grid, and $\mathcal{I}_k$ is a subalgebra of $\mathcal{I}_{k'}$. Agreement on $I_{k'}^n$ therefore implies agreement on $I_k^n$. Likewise $\tau\widesim{\mathcal{MV}_{k'}}\tau'$ implies $\tau\widesim{\mathcal{MV}_k}\tau'$. Indeed, derivations preserve truth functions, so that $\tau^{\mathcal{I}_{k'}} = \tau'^{\mathcal{I}_{k'}}$ and hence $\tau^{\mathcal{I}_{k}} = \tau'^{\mathcal{I}_{k}}$, which is $\tau\widesim{\mathcal{MV}_k}\tau'$ by \cref{them:k-mv-complete}. At $k=1$ the grid is the Boolean cube $\{0,1\}^n$, and networks realizing different functions on $[0,1]^n$ can satisfy $\gN_1\sim_{\{0,1\}^n}\gN_2$, \cref{them:main_theorem2} relating them by $\widesim{\mathcal{MV}_1}$. This leads to the notion of networks agreeing, in the sense of interderivability, up to a given resolution. As $I_k\subseteq I_{k'}$ only for $k$ dividing $k'$, resolutions have to be compared along a chain, and we take the dyadic one, $k = 2^b$, the grid of $b$-bit fixed-point arithmetic. Two networks agree up to resolution $2^b$ if $\gN_1\widesim{\mathcal{MV}_{2^b}}\gN_2$ but $\gN_1\nwidesim{\mathcal{MV}_{2^{b+1}}}\gN_2$. Such networks are interderivable in a $b$-bit fixed-point implementation, although they realize different functions on $[0,1]^n$.

\subsection{Rational weights}\label{subsec:rational_case}

As digital computers represent numbers by finite bit strings, the weights and biases of neural networks stored on computers are all rational, and in the floating-point and fixed-point formats even dyadic. The rational case hence covers ReLU networks stored on digital computers.

Neural networks with rational weights and biases realize continuous piecewise linear functions with rational coefficients, the rational McNaughton functions. The logic whose truth functions these are is Rational \Luka{} logic, introduced by Gerla~\cite{gerla2001rational}, which extends \Luka{} logic by the unary connectives $\delta_i$, $i\in\sN$, with $\delta_i x = x/i$, $x\in[0,1]$.

The formulae of this logic and their algebraic counterpart, the standard DMV algebra, are defined next.

\begin{defn}\label{defn:DMV terms}
  A \emph{Rational \Luka formula} is built from propositional variables, the constants $0$ and $1$, and the operations $\oplus$, $\odot$, $\lnot$, and $\delta_i$ ($i\in\sN$).
\end{defn}

\begin{defn}\label{defn:DMV algebra}\cite[Def.~3.1]{gerla2001rational}
  A \emph{divisible MV (DMV) algebra} is an MV algebra extended with unary operations $\{\delta_i\}_{i\in\sN}$ satisfying $\oplus^n\delta_n x = x$ and $\delta_n x\odot(\oplus^{n-1}\delta_n x)=0$ for all $n\in\sN$.
  We write $\dmv$ for the axioms of $\mv$ listed in \cref{defn:MV algebra} together with these two. The operations $\delta_i$ induce a further tier of symmetries, the scaling symmetries Ax.~DMV-$n$ of \appref{app:glossary_symmetries}.
  The standard DMV algebra is $\sI_D = ([0,1],\oplus,\odot,\lnot,0,1,\{\delta_i\}_{i\in\sN})$ with $\delta_i x = x/i$.
\end{defn}

Here \change{step}~(iii) of \cref{sec:intro} does not need modifications, while \change{steps}~(i) and~(ii) do.

\change{Step}~(i) requires a modification of Extraction, Step~II. Specifically, for a $\sigma$-node with rational weights and bias, let $s$ be the least common multiple of their denominators, so that $sf_u$ has integer coefficients. The identity
\begin{equation}\label{eq:sh}
  \sigma(x) = \tfrac{1}{s}\bigl(\sigma(sx)+\sigma(sx-1)+\cdots+\sigma(sx-(s-1))\bigr), \quad x\in\sR,
\end{equation}
converts the node to $s$ integer-coefficient $\sigma$-nodes, to which \cref{lem:extractmv} applies, returning \Luka{} formulae $\tau_0,\ldots,\tau_{s-1}$ for $\sigma(sf_u-i)$, $i=0,\ldots,s-1$. The overall node formula is then $\bigoplus_{i=0}^{s-1}\delta_s\tau_i$, whose truth function is the right-hand side of~\eqref{eq:sh}, the truncation in $\oplus$ never taking effect as the sum equals $\sigma(f_u)\leq 1$.
Extraction, Steps~I and~III are unchanged.
The minterms of \cref{def:minterms} have to be replaced, as they are built from integer coefficients. We let $C^D$ be the set of truncations $\sigma(m_1x_1+\cdots+m_nx_n+b)$ with $m_1,\ldots,m_n,b\in\sQ$, $\gC_\norm^D$ the set of corresponding formulae produced by the modified Extraction, Step~II, and $\kappa^D:\gC_\norm^D\rightarrow C^D$ the mapping taking such a formula to its truth function, as in \cref{def:kappa}. Lemmata~\ref{lem:graph_truth_function} and~\ref{lem:kappa_maps_back} hold with $\sI_D$ in place of $\sI$ and $\kappa^D$ in place of $\kappa$.

Construction, Step~I reads the rational weights and bias off $\kappa^D([v])$, with $m_{i'}$ and $b_{v_i^{(j)}}$ in \eqref{eq:kappa-affine} now rational, and Construction, Step~II uses \eqref{eq:1} and \eqref{eq:11}, which hold for every $x\in\sR$. The uniqueness argument of \cref{subsec:construct_step1}, which shows that the weights and the bias can be read off the local map unambiguously, carries over and covers, in addition, the constant local maps of value $c\in(0,1)$, which cannot arise for integer coefficients. Indeed, suppose that $\sigma\bigl(b_{v_i^{(j)}}+\sum_{i'}m_{i'}x_{i'}\bigr) = c$ for all $x\in[0,1]^{d_{j-1}}$, with $c\in(0,1)$. As $\sigma(t)=c$ has $t=c$ as its only solution, $b_{v_i^{(j)}}+\sum_{i'}m_{i'}x_{i'} = c$ for all $x\in[0,1]^{d_{j-1}}$. An affine map constant on a set with non-empty interior has vanishing linear part, whence $m_{i'}=0$ for all $i'$ and $b_{v_i^{(j)}}=c$. Such local maps are therefore covered by \eqref{eq:kappa-affine}.

We turn to \change{step}~(ii), which follows from the completeness theorem for DMV algebras~\cite{gerla2001rational}.

\begin{theorem}[{\cite[Thms.~3.14--3.15]{gerla2001rational}}]\label{them:dmv-complete}
  Let $\tau_1$, $\tau_2$ be Rational \Luka{} formulae. If $\tau_1^{\sI_D} = \tau_2^{\sI_D}$, then $\tau_1\widesim{\dmv}\tau_2$.
\end{theorem}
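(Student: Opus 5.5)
The plan is to derive the statement from Birkhoff's completeness theorem for equational logic. The main work is to show that the standard algebra $\sI_D$ generates the whole variety of DMV algebras.

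First I would check that $\widesim{\dmv}$ of \cref{defn:manipulation} coincides with provability in equational logic from the axioms $\dmv$. Reflexivity, symmetry and transitivity are built into the definition, and substitution instances are built into \cref{defn:axiom}. The rule of replacing equals inside a context is exactly the subformula replacement of \cref{defn:manipulation}. A single Birkhoff derivation step must also be matched when it applies the congruence rule for a binary connective to two premises. This is handled by rewriting the two arguments one after the other. The extra unary connectives $\delta_i$ are treated like $\lnot$.

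By Birkhoff's theorem it then suffices to show the following: every equation $\tau_1=\tau_2$ that is valid in $\sI_D$ is valid in every DMV algebra. In other words, $\sI_D$ generates the variety defined by $\dmv$.

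To prove this I would proceed in three steps.
\begin{enumerate}
\item Every DMV algebra is an MV algebra, so by Chang's subdirect representation theorem it is a subdirect product of linearly ordered MV algebras. Each factor is a quotient by an MV-congruence, and I would check that such a congruence is automatically compatible with the $\delta_n$. The reason is that the two DMV axioms force $\delta_n x$ to be the unique $y$ with $\oplus^n y=x$ and $y\odot\oplus^{n-1}y=0$, and this characterization passes to quotients. Hence the factors are linearly ordered DMV algebras, and it is enough to verify the equation in every linearly ordered DMV algebra $A$.
\item By Mundici's $\Gamma$-equivalence, $A=\Gamma(G,u)$ for a totally ordered abelian group $G$ with strong unit $u$. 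Divisibility of $A$ means that every element of $[0,u]$ is divisible by every $n$ inside $G$. Since $u$ is a strong unit, this extends to all of $G$, so $G$ is a divisible ordered abelian group. Under this correspondence $\delta_n$ becomes division by $n$.
\item An equation between Rational \Luka{} formulae translates into a universal first-order sentence over $(G,+,\le,u)$. The translated operations are $\min\{u,x+y\}$, $\max\{0,x+y-u\}$, $u-x$ and $x/n$, with the variables ranging over $[0,u]$. Scaling lets us take $u>0$. The theory of nontrivial divisible ordered abelian groups with a positive constant is complete by quantifier elimination, and $(\sQ,1)$ is a model of it. Therefore a universal sentence true in $\sQ$ is true in $G$. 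Truth in $\sQ$ in turn follows from truth in $\sI_D$, because $[0,1]\cap\sQ$ is a subalgebra of $\sI_D$ and universal sentences pass to substructures.
\end{enumerate}
This gives validity in $A$, and Birkhoff's theorem then yields $\tau_1\widesim{\dmv}\tau_2$.

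I expect the main obstacle to be step 2, specifically making precise that the $\Gamma$-functor transports the $\delta_n$ to division in $G$, including uniqueness of $n$-th parts in a linearly ordered setting. If quantifier elimination felt too heavy, I would instead fall back to a reduction to Chang's theorem (\cref{them:chang}): clear denominators by rewriting each $\delta_n$-subterm as a fresh variable $y$, and add the defining equations of $y$, which are themselves MV identities.
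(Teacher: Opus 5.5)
The paper does not prove this statement. It imports it from \cite{gerla2001rational} as a black box, just as it imports \cref{them:chang} from Chang. Your proposal is a correct, self-contained proof along the standard algebraic route: Birkhoff's completeness theorem, subdirect representation by chains, Mundici's $\Gamma$, and completeness of the theory of nontrivial divisible ordered abelian groups with a positive constant. It is in the spirit of the model-theoretic proof of \cref{them:chang}, except that no passage to a divisible hull is needed, since the $\delta_n$ already force $G$ to be divisible. The citation buys brevity. Your argument makes explicit that only these standard facts are used, and that the theorem amounts to a transfer principle among divisible ordered groups with unit.

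Some points to tighten:
\begin{itemize}
\item The step you flag as the main obstacle is routine. In an MV-chain $\Gamma(G,u)$ and for $y\geq 0$ one has $\oplus^k y=\min\{u,ky\}$. Hence $y\odot\oplus^{n-1}y=0$ holds iff $ny\le u$ in $G$, and the two DMV axioms say exactly $n\,\delta_n x=x$ in $G$. Torsion-freeness then gives uniqueness.
\item That same computation is what your Step~1 actually needs. The uniqueness must hold in the quotient chains $A/P$, not merely in $A$, for the characterization to pass to quotients.
\item The phrase ``scaling lets us take $u>0$'' should be replaced by: a trivial algebra satisfies every equation, and otherwise $u>0$.
\item In matching Birkhoff provability with $\widesim{\dmv}$, you also need closure under substitution into derived (not only axiom) equations. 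That is \cref{lem:change_substitutee2}, valid in the extended language as noted in \cref{sec:extension}.
\item You can skip the detour through $[0,1]\cap\sQ$. $(\sR,1)$ is itself a model of the complete theory, and $\Gamma(\sR,1)$ with division by $n$ is $\sI_D$.
\item Your fallback would not go through with \cref{them:chang} as stated. The defining equations of the fresh variables are premises, not identities valid in $\sI$. You would need completeness relative to finitely many premises, which is an additional ingredient.
\end{itemize}
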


With \cref{them:dmv-complete} supplying \change{step}~(ii), we obtain the identification result for rational weights and biases.

\begin{theorem}\label{them:main_theorem3}
  For $n\in\sN$, let $\mathfrak{N}$ be the class of non-degenerate ReLU networks with rational weights and biases realizing functions $f:[0,1]^n\rightarrow[0,1]$.
  The set of axioms $\dmv$ completely identifies $\mathfrak{N}$ over $[0,1]^n$, that is, $\gN_1\sim_{[0,1]^n}\gN_2$ implies $\gN_1\widesim{\dmv}\gN_2$ for all $\gN_1,\gN_2\in\mathfrak{N}$.
\end{theorem}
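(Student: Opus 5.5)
The proof will mirror that of \cref{them:main_theorem1}, with the modified extraction for rational weights supplying step~(i) and \cref{them:dmv-complete} replacing Chang's theorem in step~(ii). Fix $\gN_1,\gN_2\in\mathfrak{N}$ with $\gN_1\sim_{[0,1]^n}\gN_2$, and set $\tau_i=[\ext(\gN_i)]$, $i=1,2$, where $\ext$ now denotes the extraction whose Step~II has been modified via~\eqref{eq:sh}. These $\tau_i$ are Rational \Luka{} formulae.

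The first task is to confirm that $\tau_i^{\sI_D}=\ang{\gN_i}$ on $[0,1]^n$. Extraction, Step~I is unchanged: the input intervals~\eqref{eq:compute-interval} and the identity~\eqref{eq:sigma-replacement} hold for rational biases, since $\lceil\mathcal{L}\rceil$ still makes sense. It therefore yields a $\sigma$-network $\gM_i$ with $\ang{\gM_i}=\ang{\gN_i}$. In the modified Step~II, each node formula $\bigoplus_{r=0}^{s-1}\delta_s\tau_r$ has truth function equal to the local map $\sigma(f_u)$, by~\eqref{eq:sh} and the fact that the $\oplus$-truncation never takes effect. I would check this point carefully, since the partial sums are bounded by $\sigma(f_u)\le 1$. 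The paper already states that \cref{lem:graph_truth_function} and~\eqref{eq:subst-semantics} carry over to the extended language, the new connective $\delta_s$ being unary and handled like $\lnot$ in the induction. Applying \cref{lem:graph_truth_function} with $\sI_D$ in place of $\sI$, together with \cref{prop:stepIII_equals_graph}, then gives $\tau_i^{\sI_D}=\ang{\gM_i}=\ang{\gN_i}$.

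Hence $\tau_1^{\sI_D}=\tau_2^{\sI_D}$ on $[0,1]^n$, and \cref{them:dmv-complete} yields $\tau_1\widesim{\dmv}\tau_2$. By \cref{def:graph_manipulation} with $\gE=\dmv$, this is $\ext(\gN_1)\widesim{\dmv}\ext(\gN_2)$, and by \cref{def:network_manipulation} it is $\gN_1\widesim{\dmv}\gN_2$. Step~(iii) is not needed for the implication itself, although the rational version of \cref{prop:pillar3} ensures the relation is faithful to the networks.

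The main obstacle is the first task: making sure the modified extraction is well defined and truth-preserving in every case. One case is a node whose $s=1$, which reduces to the integer case. Another is a constant node with rational value $c\in(0,1)$, where the procedure must still return a valid formula; for example, $\sigma(c)$ with $c=p/q$ becomes $\bigoplus\delta_q$ of integer extracts, which works. I would also need to verify that extraction is applied only to $\sigma$-nodes whose parents output values in $[0,1]$, as required by~\eqref{eq:compute-interval}. Once these points are settled, the rest is a direct transcription of the proof of \cref{them:main_theorem1}.
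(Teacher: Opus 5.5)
Your proposal is correct and follows the paper's proof essentially verbatim. The modified Extraction, Step~II via \eqref{eq:sh} gives $\tau_i^{\sI_D}=\ang{\gN_i}$ on $[0,1]^n$, \cref{them:dmv-complete} takes the place of Chang's theorem, and the two pullbacks, \cref{def:graph_manipulation} and then \cref{def:network_manipulation}, turn $\tau_1\widesim{\dmv}\tau_2$ into $\gN_1\widesim{\dmv}\gN_2$. The additional checks you flag, namely that the $\oplus$-truncation never takes effect because the partial sums are bounded by $\sigma(f_u)\le 1$, that Extraction, Step~I is unchanged for rational input intervals, and that constant rational local maps are handled, are exactly the points the paper settles in the text preceding the theorem.
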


\begin{proof}
  \change{Step}~(i) (extraction): set $\tau_i = [\ext(\gN_i)]$, $i=1,2$, with Extraction, Step~II modified as above, so that $\tau_i^{\sI_D} = \ang{\gN_i}$ on $[0,1]^n$. Since $\gN_1\sim_{[0,1]^n}\gN_2$, we have $\tau_1^{\sI_D} = \tau_2^{\sI_D}$. \change{Step}~(ii) (derivation): \cref{them:dmv-complete} yields $\tau_1\widesim{\dmv}\tau_2$, which is $\ext(\gN_1)\widesim{\dmv}\ext(\gN_2)$ by \cref{def:graph_manipulation}, and hence $\gN_1\widesim{\dmv}\gN_2$ by \cref{def:network_manipulation}.
\end{proof}

\subsection{Real weights}\label{subsec:real_case}

Real weights and biases are the idealization that goes with exact arithmetic, and the setting in which neural networks are usually studied. ReLU networks with real weights and biases realize continuous piecewise linear functions with real coefficients, the real McNaughton functions. These are the truth functions of Riesz \Luka{} logic $\mathcal{RL}$, introduced by Di~Nola and Leu\c{s}tean~\cite{di2014lukasiewicz}, which extends \Luka{} logic by the unary connectives $\Delta_r$, $r\in[0,1]$, with $\Delta_r x = rx$, $x\in[0,1]$.

The formulae of this logic and their algebraic counterpart, the standard RMV algebra, are defined next.

\begin{defn}\label{defn:RMV terms}
  An $\mathcal{RL}$ \emph{formula} is built from propositional variables, the constants $0$ and $1$, and the operations $\oplus$, $\odot$, $\lnot$, and $\Delta_r$ ($r\in[0,1]$).
\end{defn}

\begin{defn}\label{defn:RMV algebra}\cite[Def.~3.1]{di2014lukasiewicz}
  A \emph{Riesz MV (RMV) algebra} is an MV algebra with underlying set $A$, extended with $\{\Delta_r\}_{r\in[0,1]}$ satisfying
  $\Delta_r(x\odot\lnot y) = (\Delta_r x)\odot\lnot(\Delta_r y)$, $\Delta_{r\odot\lnot q}x = (\Delta_r x)\odot\lnot(\Delta_q x)$, $\Delta_r(\Delta_q x) = \Delta_{rq}x$, $\Delta_1 x = x$, for all scalars $r,q\in[0,1]$ and all $x,y\in A$. Here $r\odot\lnot q = \max\{0,r-q\}$.
  We write $\rmv$ for the axioms of $\mv$ listed in \cref{defn:MV algebra} together with these four. The operations $\Delta_r$ enlarge the scaling symmetries of \appref{app:glossary_symmetries} from the countable family Ax.~DMV-$n$, $n\in\sN$, to the uncountable family Ax.~Riesz-$r$, $r\in(0,1]$.
  The standard RMV algebra is $\sI_R = ([0,1],\oplus,\odot,\lnot,0,1,\{\Delta_r\}_{r\in[0,1]})$ with $\Delta_r x = rx$.
\end{defn}

Here \change{step}~(iii) of \cref{sec:intro} does not need modifications, while \change{steps}~(i) and~(ii) do.

\change{Step}~(i) changes only at Extraction, Step~II. The identities of \cref{lem:extractmv} continue to hold for real coefficients, the proof in \cref{app:proof_extractmv} nowhere using that the coefficients are integers, but each application of these identities lowers a weight by one and hence leaves a fractional part in $[0,1)$, which, when non-zero, is removed by the following specialization of {\cite[Lemma~7.5(a)]{di2014lukasiewicz}}.

\begin{lem}[{\cite[Lemma~7.5(a)]{di2014lukasiewicz}}]\label{lem:extractmvR}
  Let $f(x) = m_1x_1+\cdots+m_nx_n+b$ with $m_1,\ldots,m_n,b\in\sR$, let $i$ be an index with $m_i\in(0,1]$, and set $f_\circ = f - m_ix_i$. Then, on $[0,1]^n$,
  \[
    \sigma(f) = (\sigma(f_\circ)\oplus\Delta_{m_i}x_i)\odot\sigma(f_\circ+1). \tag{\ref{eq:lemma4.4}$'$}
  \]
\end{lem}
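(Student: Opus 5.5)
The identity is pointwise, so I will fix $x\in[0,1]^n$, abbreviate $a=f_\circ(x)$ and $t=m_ix_i$, and show $\sigma(a+t)=(\sigma(a)\oplus t)\odot\sigma(a+1)$. Here $\Delta_{m_i}x_i$ is evaluated in $\sI_R$, so it has truth value $m_ix_i$. Because $m_i\in(0,1]$ and $x_i\in[0,1]$, we have $t\in[0,1]$. This is the only place the hypothesis on $m_i$ enters, and it replaces the role that $x_1\in[0,1]$ plays in the integer case of \cref{lem:extractmv}.

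The argument is a case split on the position of $a$ relative to the breakpoints $-1$ and $0$ of $\sigma(a)$ and $\sigma(a+1)$. In each case I expand $\oplus$ and $\odot$ through their definitions in \cref{defn:standard-mv}.

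\emph{Case $a\ge 0$.} Then $\sigma(a+1)=1$, and since $\odot 1$ is the identity the right-hand side equals $\min\{1,\sigma(a)+t\}$. If $a\ge1$, both sides equal $1$. If $0\le a<1$, the right-hand side is $\min\{1,a+t\}$, and this equals $\sigma(a+t)$ because $a+t\ge0$.

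\emph{Case $-1\le a<0$.} Then $\sigma(a)=0$ and $\sigma(a+1)=a+1$. The right-hand side becomes $t\odot(a+1)=\max\{0,t+a\}$. Since $t+a\le t\le 1$, this equals $\sigma(a+t)$.

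\emph{Case $a<-1$.} Then $\sigma(a+1)=0$, so the right-hand side is $0$. Also $a+t<0$, so the left-hand side is $0$ as well.

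These three cases exhaust all values of $a$, which proves the identity on $[0,1]^n$. Nothing here is a serious obstacle; the only care needed is to notice that $t\le1$ is used in the case $-1\le a<0$, to keep $\sigma(a+t)$ from clipping at $1$, and that $t\ge0$ is used in the case $a<-1$. Alternatively, I could simply cite \cite[Lemma~7.5(a)]{di2014lukasiewicz}, but the direct check above is short and self-contained.
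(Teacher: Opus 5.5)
Your proof is correct and is essentially the paper's route. The paper gives no separate proof and cites \cite[Lemma~7.5(a)]{di2014lukasiewicz}; your argument is the case analysis of \cref{app:proof_extractmv} (which the paper notes uses only $f=f_\circ+x_1$ and $x_1\in[0,1]$), with $t=m_ix_i\in[0,1]$ in place of $x_1$. One slip in your closing remark: in the case $a<-1$ it is $t\le 1$, not $t\ge 0$, that gives $a+t<0$, whereas $t\ge 0$ is used in the case $a\ge 0$ (to get $a+t\ge 1$ when $a\ge 1$, and $a+t\ge 0$ when $0\le a<1$).
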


Formula extraction proceeds as in the integer case, eliminating at each step the first variable present, by \cref{lem:extractmv} and \cref{lem:extractmvR} in this order.
Extraction, Steps~I and~III are unchanged.
The minterms of \cref{def:minterms} have to be replaced here as well. We let $C^R$ be the set of truncations $\sigma(m_1x_1+\cdots+m_nx_n+b)$ with $m_1,\ldots,m_n,b\in\sR$, $\gC_\norm^R$ the set of corresponding formulae produced by the modified Extraction, Step~II, and $\kappa^R:\gC_\norm^R\rightarrow C^R$ the mapping taking such a formula to its truth function, as in \cref{def:kappa}. Lemmata~\ref{lem:graph_truth_function} and~\ref{lem:kappa_maps_back} hold with $\sI_R$ in place of $\sI$ and $\kappa^R$ in place of $\kappa$.

Construction, Step~I reads the real weights and bias off $\kappa^R([v])$, with $m_{i'}$ and $b_{v_i^{(j)}}$ in \eqref{eq:kappa-affine} now real, and Construction, Step~II uses \eqref{eq:1} and \eqref{eq:11}. The uniqueness argument of \cref{subsec:construct_step1} carries over as in \cref{subsec:rational_case}, constant local maps of value $c\in(0,1)$ again being covered by \eqref{eq:kappa-affine}.

We turn to \change{step}~(ii), which follows from the completeness theorem for RMV algebras~\cite{di2014lukasiewicz}.

\begin{theorem}[{\cite[Thm.~4.4]{di2014lukasiewicz}}]\label{them:rmv-complete}
  Let $\tau_1$, $\tau_2$ be $\mathcal{RL}$ formulae. If $\tau_1^{\sI_R} = \tau_2^{\sI_R}$, then $\tau_1\widesim{\rmv}\tau_2$.
\end{theorem}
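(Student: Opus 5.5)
The plan is the standard algebraic route to completeness: soundness, then Birkhoff's completeness theorem for equational logic, then the reduction of validity in all RMV algebras to validity in the single algebra $\sI_R$.

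Soundness is immediate, since each axiom of $\rmv$ holds in $\sI_R$ and replacing a subformula by an equal one preserves truth functions. For the converse, RMV algebras are by \cref{defn:RMV algebra} an equationally defined class, a variety in the language $\oplus,\odot,\lnot,0,1,\{\Delta_r\}$. Birkhoff's completeness theorem then says that an equation $\tau_1=\tau_2$ is derivable by the rewriting of \cref{defn:manipulation} from $\rmv$ iff it holds in every RMV algebra. Replacement of subformulae and instantiation correspond exactly to Birkhoff's congruence and substitution rules; symmetry and transitivity are built into $\widesim{\rmv}$. So it suffices to show that $\sI_R$ generates the variety, i.e.\ that an equation valid in $\sI_R$ is valid in every RMV algebra.

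For this I would proceed in three reductions.

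\emph{First}, every RMV algebra is a subdirect product of totally ordered RMV algebras. The MV part is Chang's subdirect representation theorem, via prime ideals. The only addition is that MV ideals are automatically closed under $\Delta_r$, because $\Delta_r x\le x$. So it suffices to refute an equation that fails in some totally ordered RMV algebra $A$.

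\emph{Second}, I would invoke the Mundici-type equivalence $\Gamma$ of Di~Nola and Leu\c{s}tean. Under it, $A\cong\Gamma(V,u)=[0,u]$ for a totally ordered Riesz space $V$ with strong unit $u$, where $\oplus$ and $\odot$ become truncated addition and subtraction and $\Delta_r$ becomes scalar multiplication by $r$.

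\emph{Third}, take a failing instance $a_1,\ldots,a_n\in A$. Evaluating both formulae along their (finitely many) subterms produces a finite set of elements of $V$. Their positions are governed by finitely many sign conditions: each truncation branch is a sign condition on a linear combination, and the failure itself is $\tau_1(a)\ne\tau_2(a)$. Together these form a finite system of strict and non-strict linear inequalities in the $a_i$ and $u$ with real coefficients. This system is satisfiable in the totally ordered real vector space $V$. Restricting to the finite-dimensional subspace $W$ spanned by $a_1,\ldots,a_n,u$, the order on $W$ is a total vector-space order. Such an order is lexicographic with respect to some basis, so for finitely many prescribed strict signs one can find a real linear functional $\phi\colon W\to\sR$ with $\phi(u)>0$ preserving them all. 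One way to get it is to take the leading lexicographic coordinate plus small multiples of the later ones; alternatively, apply Motzkin's transposition theorem. Rescaling so that $\phi(u)=1$ gives a point of $[0,1]^n$ at which all branch conditions hold, hence at which $\tau_1^{\sI_R}\ne\tau_2^{\sI_R}$. This is the contradiction.

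The main obstacle is this third step: transferring a failure from an arbitrary totally ordered unital Riesz space to $\sR$. One has to verify that the evaluation of $\tau_1,\tau_2$ really is determined by finitely many linear sign conditions, so that the region of inputs on which each formula is a fixed linear polynomial is a polyhedral cell. One also has to make sure the strict inequalities survive passage through $\phi$, which is where non-archimedean (lexicographic) behaviour in $V$ must be handled carefully. I would attack this first by proving the lexicographic-embedding lemma for finite-dimensional totally ordered real vector spaces, and then perturbing the top coordinate functional.
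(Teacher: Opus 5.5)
The paper gives no proof of this statement; it is quoted from Di~Nola and Leu\c{s}tean, so there is no in-paper argument to compare against step by step. Your reconstruction is correct. It follows the standard algebraic proof of Chang's theorem, transported to the Riesz setting:
\begin{itemize}
\item Birkhoff's theorem identifies $\widesim{\rmv}$ with equational consequence of $\rmv$.
\item Subdirect representation reduces the problem to chains.
\item The $\Gamma$-equivalence of Di~Nola--Leu\c{s}tean realizes a chain as $[0,u]$ in a totally ordered Riesz space $V$.
\item A finite system of homogeneous linear sign conditions pulls a counterexample back to $[0,1]^n$.
\end{itemize}
Your route makes explicit a bridge the paper leaves implicit. $\widesim{\rmv}$ is defined by one-step subformula replacement (\cref{defn:manipulation}), and Birkhoff's theorem (closure of one-step replacement equals equational consequence) is exactly what lets a semantic completeness result for the variety be read as a statement about this relation. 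The cost is that the deepest ingredient is taken as a black box: extending the operations $\Delta_r$ to a real scalar multiplication on the enveloping totally ordered group. That equivalence is the substantial algebraic content of the cited source.

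There are two places to tighten.

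First, in the subdirect step, closure of MV ideals under $\Delta_r$ does not by itself make the congruence $\theta_I$ compatible with $\Delta_r$. You also need the axiom $\Delta_r(x\odot\lnot y)=(\Delta_r x)\odot\lnot(\Delta_r y)$, which gives $(\Delta_r x)\odot\lnot(\Delta_r y)=\Delta_r(x\odot\lnot y)\le x\odot\lnot y\in I$. The inequality $\Delta_r x\le x$ should itself be derived, e.g.\ from $\Delta_{1\odot\lnot q}x=x\odot\lnot(\Delta_q x)$.

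Second, the obstacle you single out is lighter than you expect. First refine each non-strict branch condition into the case it actually satisfies in the chain $V$ ($=0$ or $>0$). Then all you need is a linear $\phi\colon W\to\sR$ with $\phi(w_j)>0$ for finitely many $w_j>0$ in $W$. These $w_j$ are $u$, the nonzero elements among the $a_i$ and $u-a_i$, the nonzero branch expressions taken with their sign in $V$, and the difference of the two formula values, again with its sign. If no such $\phi$ existed, Gordan's theorem of the alternative would give $\lambda\ge 0$, $\lambda\neq 0$, with $\sum_j\lambda_j w_j=0$. This is impossible, since a nonnegative nontrivial combination of strictly positive elements of an ordered vector space is strictly positive. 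So neither the lexicographic structure theorem nor a perturbation argument is needed, and the non-archimedean behaviour of $V$ plays no role. The zero conditions are preserved by linearity, and homogeneity of all conditions in $(a,u)$ lets you rescale to $\phi(u)=1$ as you describe.
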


With \cref{them:rmv-complete} supplying \change{step}~(ii), we obtain the identification result for real weights and biases.

\begin{theorem}\label{them:main_theorem4}
  For $n\in\sN$, let $\mathfrak{N}$ be the class of non-degenerate ReLU networks with real weights and biases realizing functions $f:[0,1]^n\rightarrow[0,1]$.
  The set of axioms $\rmv$ completely identifies $\mathfrak{N}$ over $[0,1]^n$, that is, $\gN_1\sim_{[0,1]^n}\gN_2$ implies $\gN_1\widesim{\rmv}\gN_2$ for all $\gN_1,\gN_2\in\mathfrak{N}$.
\end{theorem}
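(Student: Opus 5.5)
The proof follows the template of \cref{them:main_theorem1} and of its rational counterpart \cref{them:main_theorem3}, with the Riesz completeness theorem (\cref{them:rmv-complete}) in the role of Chang's theorem. We fix $\gN_1,\gN_2\in\mathfrak{N}$ with $\gN_1\sim_{[0,1]^n}\gN_2$ and take $\ext$ to be the real-weight extraction of \cref{subsec:real_case}. Extraction, Steps~I and~III are as in the integer case. In Step~II, each $\sigma$-node is processed by \cref{lem:extractmv} and then \cref{lem:extractmvR}, which yields $\mathcal{RL}$ formulae.

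\emph{Step~(i) (extraction).} We set $\tau_i=[\ext(\gN_i)]$ for $i=1,2$ and verify that $\tau_i^{\sI_R}=\ang{\gN_i}$ on $[0,1]^n$. Extraction, Step~I uses only the identity \eqref{eq:sigma-replacement} and the interval bounds \eqref{eq:compute-interval}, both of which hold for real weights, so it preserves the realized function. Step~II must terminate and return a formula whose truth function is the node's local map $\sigma(f)$. I expect this to be the main obstacle.

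Consider the first variable $x_k$ present in $f$. If $m_k\le -1$, one flip by \eqref{eq:flip_sign} makes the coefficient positive. If $m_k>1$, \eqref{eq:lemma4.4} lowers it by one, and it does so for real coefficients as well. After $\lfloor m_k\rfloor$ or so such steps the coefficient lies in $(0,1]$. \cref{lem:extractmvR} then removes $x_k$ completely, producing the factor $\Delta_{m_k}x_k$. A coefficient in $(-1,0)$ is handled by first flipping with \eqref{eq:flip_sign}, which turns it into a coefficient in $(0,1)$. Every step therefore either strictly lowers $\sum_i\lceil|m_i|\rceil$ or decreases the number of variables present. The base cases, where $\ell\ge1$ or $\mathcal{L}\le0$, return the constants $1$ and $0$. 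Hence the recursion terminates. Each identity holds pointwise on $[0,1]^n$, so the node formula has truth function $\sigma(f)$.

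The substitution semantics \eqref{eq:subst-semantics} extends to the unary $\Delta_r$ by the same inductive case as for $\lnot$, as noted at the start of \cref{sec:extension}. \cref{lem:graph_truth_function}, read with $\sI_R$ in place of $\sI$, then gives $\tau_i^{\sI_R}=\ang{\gN_i}$.

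\emph{Step~(ii) (derivation).} From $\gN_1\sim_{[0,1]^n}\gN_2$ we get $\tau_1^{\sI_R}=\tau_2^{\sI_R}$. \cref{them:rmv-complete} then gives $\tau_1\widesim{\rmv}\tau_2$. This is $\ext(\gN_1)\widesim{\rmv}\ext(\gN_2)$ by \cref{def:graph_manipulation}, and hence $\gN_1\widesim{\rmv}\gN_2$ by \cref{def:network_manipulation}.

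Construction plays no role in the implication itself. It is relevant only for the faithfulness remark that $\constr$ is a left inverse of $\ext$, and the uniqueness argument given in \cref{subsec:real_case} already covers that. The proof would close with the observation that the converse direction holds as well, because the $\rmv$ axioms are valid in $\sI_R$ and so every derivation step preserves the truth function.
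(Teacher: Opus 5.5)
Your argument is the paper's proof. It uses extraction with the real-weight Step~II, then \cref{lem:graph_truth_function} read over $\sI_R$ to get $\tau_i^{\sI_R}=\ang{\gN_i}$, then \cref{them:rmv-complete} and the two pullbacks of \cref{def:graph_manipulation} and \cref{def:network_manipulation}. The only difference is that the paper simply invokes ``Extraction, Step~II modified as above,'' whereas you try to verify that the modified \texttt{EXTR} terminates. That verification has a hole.

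With real coefficients the recursion can reach a term $\sigma(b)$ with no variable present and $b\in(0,1)$. Neither base case ($\ell\geq 1$ or $\mathcal{L}\leq 0$) applies, and there is no variable left to eliminate. For instance, \cref{lem:extractmvR} gives $\sigma(\tfrac12x_1+\tfrac14)=(\sigma(\tfrac14)\oplus\Delta_{1/2}x_1)\odot\sigma(\tfrac54)$, and the call on $\sigma(\tfrac14)$ is stuck. The same happens at the top level for a shallow network in $\mathfrak{N}$ with vanishing input weights and output bias in $(0,1)$.

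This cannot occur in the integer case, since a constant integer $b$ satisfies $b\geq 1$ or $b\leq 0$. In the rational case the rescaling \eqref{eq:sh} reduces everything to integer coefficients, so it does not occur there either. Your sentence ``Hence the recursion terminates'' is therefore false as written.

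You need a third base case that returns the $\mathcal{RL}$ formula $\Delta_b1$, whose truth function in $\sI_R$ is the constant $b$. With that addition your measure argument goes through. Note also that a flip by \eqref{eq:flip_sign} does not lower $\sum_i\lceil|m_i|\rceil$; it is harmless only because a lowering step always follows it. The paper passes over the constant case as well, so your patched version is the more complete one.
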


\begin{proof}
  \change{Step}~(i) (extraction): set $\tau_i = [\ext(\gN_i)]$, $i=1,2$, with Extraction, Step~II modified as above, so that $\tau_i^{\sI_R} = \ang{\gN_i}$ on $[0,1]^n$. Since $\gN_1\sim_{[0,1]^n}\gN_2$, we have $\tau_1^{\sI_R} = \tau_2^{\sI_R}$. \change{Step}~(ii) (derivation): \cref{them:rmv-complete} yields $\tau_1\widesim{\rmv}\tau_2$, which is $\ext(\gN_1)\widesim{\rmv}\ext(\gN_2)$ by \cref{def:graph_manipulation}, and hence $\gN_1\widesim{\rmv}\gN_2$ by \cref{def:network_manipulation}.
\end{proof}

All results so far pertain to networks on $[0,1]^n$ realizing $[0,1]$-valued functions, which is not what one encounters in practice. We next show that \cref{them:main_theorem4} continues to hold with $[0,1]^n$ replaced by an arbitrary box, in the sense that two networks agreeing on the box are interderivable through $\rmv$ after an affine normalization of input and output. Specifically, let $\gN_1,\gN_2$ be non-degenerate ReLU networks with real weights and biases, and let $B = \prod_{i=1}^n[a_i,c_i]$, with $a_i,c_i\in\sR$, $a_i<c_i$, be such that $\gN_1\sim_B\gN_2$, that is, $\ang{\gN_1} = \ang{\gN_2}$ on $B$. Let $\alpha:[0,1]^n\rightarrow B$ be the affine bijection $\alpha(x)_i = a_i+(c_i-a_i)x_i$. The function $\ang{\gN_1}\circ\alpha = \ang{\gN_2}\circ\alpha$ is continuous, $\ang{\gN_i}$ being continuous (\cref{def:relu_network}) and $\alpha$ affine, and hence attains a minimum $t_{\min}$ and a maximum $t_{\max}$ on the compact set $[0,1]^n$. Let $\beta(t) = (t-t_{\min})/(t_{\max}-t_{\min})$ if $t_{\max}>t_{\min}$, and $\beta(t) = t-t_{\min}$ if $t_{\max}=t_{\min}$, in the latter case the function being constant and $\beta$ sending its value to $0$; in both cases $\beta$ is affine and maps the range into $[0,1]$. Precomposing the first layer of $\gN_i$ with $\alpha$ and postcomposing its output node with $\beta$ modifies weights and biases only, $\alpha$ entering the affine combination of the first layer and $\beta$ the affine map the network ends in (\cref{def:relu_network}), and yields networks $\gN_i^\sharp$ realizing $\beta\circ\ang{\gN_i}\circ\alpha$ on $[0,1]^n$, taking values in $[0,1]$. Since $\ang{\gN_1} = \ang{\gN_2}$ on $B$, we have $\gN_1^\sharp\sim_{[0,1]^n}\gN_2^\sharp$, and $\gN_1^\sharp,\gN_2^\sharp$ are non-degenerate because $\alpha$ and $\beta$ are invertible. \cref{them:main_theorem4} therefore yields $\gN_1^\sharp\widesim{\rmv}\gN_2^\sharp$, as desired.

\section{Concluding remarks}\label{sec:closing}

An interesting question left open in the developments in this paper is whether the finite sequence of MV axiom applications whose existence \cref{them:main_theorem1} guarantees can be made explicit, in the sense of an algorithm that takes the two networks and returns such a sequence. In the classical setting of switching circuits and Boolean algebra, the counterparts of ReLU networks and MV algebra as described in \cref{subsec:logic_and_algebra}, this is possible. Two circuits computing the same Boolean function are transformed into each other by an explicit sequence of Boolean axiom rewrites as follows. Each circuit corresponds to a Boolean formula, and any such formula can be brought to full disjunctive normal form by a set of rewriting rules, every one of which is a Boolean identity. This set is terminating and confluent~\cite[Sec.~5.1]{dershowitzRewriteSystems1990}, so that the rules may be applied in any order and always halt at the same result. Each of the identities the rules apply is either a Boolean axiom or derivable from the Boolean axioms in finitely many steps, so that every rule application unfolds into finitely many Boolean axiom applications and the passage from a formula to its normal form, its \emph{normalization}, is a derivation in the sense of \cref{defn:manipulation}. The normal form reached is determined by the truth table of the formula, a table of $2^n$ values for a function in $n$ variables, and is unique up to the order of its terms. As the two circuits compute the same function, their formulae have the same truth table, and the rewriting therefore takes both to the same normal form, up to that order. Explicitly, let $\tau_1$ and $\tau_2$ be the formulae corresponding to the two circuits, and record the successive formulae produced by their normalizations,
\begin{equation*}
  \tau_1 = \varphi_1,\ldots,\varphi_p = \delta, \qquad \tau_2 = \theta_1,\ldots,\theta_q = \delta',
\end{equation*}
with $\delta$ and $\delta'$ agreeing up to the order of their terms. Since associativity permits re-bracketing and commutativity the interchange of neighboring terms, every reordering of the terms of a full disjunctive normal form is achieved by finitely many Boolean axiom rewrites, so that the passage from $\delta$ to $\delta'$ is again a derivation in the sense of \cref{defn:manipulation}, which is what allows the two normalizations to be glued. Every rewrite, from $\varphi_i$ to $\varphi_{i+1}$ or from $\theta_i$ to $\theta_{i+1}$, replaces a subformula by one an identity equates it to and is therefore reversible, the De~Morgan law, for instance, turning $\lnot(a\wedge b)$ into $\lnot a\vee\lnot b$ in one direction and back in the other. Reversing the sequence for $\tau_2$ and gluing the three parts hence yields a derivation from $\tau_1$ to $\tau_2$ in the sense of \cref{defn:manipulation}. The field of logic synthesis is built on the idea of transforming Boolean circuits by algebraic rewriting~\cite{shannonSymbolicAnalysisRelay1938}.

For two ReLU networks realizing the same function, extraction delivers two \Luka{} formulae with the same truth function. What is missing is a terminating and confluent set of rewriting rules bringing a \Luka{} formula to a canonical form, be it flat or compositional. Flattening does not help, removing the freedom in the normal form architecture (\cref{prop:local_realizability}) but leaving, at the single node that remains, two \Luka{} formulae with the same truth function and no rules driving them to a common form. While canonical forms for \Luka{} formulae are available, every formula having the same truth function as an infimum of finite suprema of formulae realizing single truncated affine functions~\cite{dinolaNormalFormsLukasiewicz2004}, and the passage from the truth function to such a form is constructive~\cite{mundici1994constructive}, these forms are read off the McNaughton representation of the truth function rather than reached by rewriting, and hence deliver no sequence of axiom applications. One can always enumerate the derivations starting from the first formula until the second appears, and completeness (Theorems~\ref{them:chang}, \ref{them:k-mv-complete}, \ref{them:dmv-complete}, \ref{them:rmv-complete}) guarantees that it eventually does. The same holds in the Boolean case, by Theorem~\ref{them:k-mv-complete} at $k=1$. However, in the Boolean case the enumeration need not be carried out, termination and confluence taking the formula to its canonical form directly. This can be used to find an explicit derivation sequence for $k=1$ in \cref{them:main_theorem2}. All other cases remain open.

\section*{Declaration on the use of AI-assisted technologies}
This paper started from a complete manuscript of ours, containing all definitions, statements, and proofs, written without the use of AI. We subsequently revised it line by line in dialogue with Claude (Anthropic), and in the course of this revision expanded concepts, proved new results, and developed the mathematical language of the paper further. The tool proposed formulations, drafted prose, supplied routine proof steps, audited notation, hypotheses, terminology, and cross-references for consistency, and verified algebraic identities symbolically and numerically. We read every proposal adversarially and accepted, modified, or rejected it on that reading, much of the final wording being ours verbatim. Nothing entered the manuscript unverified, and we take full responsibility for its content.

\bibliographystyle{ieeetran}
\bibliography{main}
\newpage
\appendix 

\section{Symmetries induced by the axioms}\label{app:glossary_symmetries}
The MV operations are given by
\begin{align*}
  &x\odot y = \max\{0,x+y-1\} = \rho(x+y-1)\\
  & x\oplus y = \min\{1,x+y\} = 1-\rho(-x-y+1)\\
  & \lnot x = 1-x.
\end{align*}
The DMV extension adds unary operations $\delta_n : x\mapsto x/n$ ($n\in\sN$); the Riesz extension adds $\Delta_r : x\mapsto rx$ ($r\in[0,1]$).
Using these expressions, the MV axioms in \cref{defn:MV algebra} can be rewritten as compositions of affine maps and $\rho$. In the MV setting (integer coefficients) the axioms fall into three tiers; the DMV and Riesz extensions introduce a fourth. The grid symmetries of the finite-domain setting (\cref{subsec:finite_case}) form no tier of their own and are listed at the end of this appendix.
\paragraph*{Tier 1: Degenerate symmetries.}\vspace{-\parskip} These identities hold by direct arithmetic, the argument of every $\rho$ either cancelling to a constant or lying permanently in the linear or the clipped-negative regime, so that no clipping transition occurs. They correspond to trivial structural network simplifications, replacing a subnetwork of constant output by a single node realizing that constant value or eliminating an identity operation. Ax.~6 and~6$'$ are the sole exceptions, their $\rho$-arguments crossing zero on $[0,1]^2$, each axiom's two sides expanding to the same $\rho$-expression, so that its two syntactically different formulae realize the same ReLU network.
\begin{itemize}
  \item \textup{Ax.~3.} $x \oplus \lnot x = 1$
  \[
  1-\rho(-x-(1-x)+1)  = 1,\quad x\in [0,1]
  \]
  \item \textup{Ax.~3$'$.} $x \odot \lnot x= 0$
  \[ \rho(x+1-x-1) = 0,\quad x\in [0,1]
  \]
  \item  \textup{Ax.~4.} $x \oplus 1 = 1$
  \[
  1-\rho(-x-1+1) = 1,\quad x\in [0,1]
  \]
  \item \textup{Ax.~4$'$.} $x \odot 0 = 0$
  \[\rho(x+0-1) = 0,\quad x\in [0,1]
  \]
  \item \textup{Ax.~5.} $x \oplus 0 = x$
  \[1-\rho(-x-0+1) = x,\quad x\in [0,1]
  \]
  \item \textup{Ax.~5$'$.} $x \odot 1 = x$
  \[\rho(x+1-1) = x,\quad x\in [0,1]
  \]
  \item \textup{Ax.~6.} $\lnot(x \oplus y) = \lnot x \odot \lnot y$
  \[
  \rho(-x-y+1) = \rho(1-x+1-y-1),\quad x,y\in [0,1]
  \]
  \item \textup{Ax.~6$'$.} $\lnot(x \odot y) = \lnot x \oplus \lnot y$
  \[
  1-\rho(x+y-1) = 1-\rho(-(1-x)-(1-y)+1),\quad x,y\in [0,1]
  \]
  \item  \textup{Ax.~7.} $x = \lnot(\lnot x)$
  \[x = 1-(1-x),\quad x\in [0,1]
  \]
  \item \textup{Ax.~8.} $\lnot 0 = 1$
  \[1-0=1
  \]
\end{itemize}
\vspace{-2em}
\paragraph*{Tier 2: Permutation symmetries.}\vspace{-\parskip} Each of the two axioms reorders neurons within a hidden layer and applies the same permutation to the corresponding incoming and outgoing weights.
\begin{itemize}
  \item Ax. 1. $x \oplus y = y \oplus x$
   \[1-\rho(-x-y+1) = 1-\rho(-y-x+1),\quad   x,y\in [0,1].
  \]
  \item \textup{Ax.~1$'$.} $x \odot y = y \odot x$
  \[\rho(x+y-1) = \rho(y+x-1),\quad   x,y\in [0,1].
  \]
\end{itemize}
\vspace{-2em}
\paragraph*{Tier 3: Deep symmetries.}\vspace{-\parskip} The primitive axioms in this tier span at least three network layers, with at least one inner $\rho$ clipping on $[0,1]^n$. In Ax.~2 this is $\rho(-y-z+1)$, which vanishes at $y=z=1$ and is positive at $y=z=0$, in Ax.~2$'$ it is $\rho(y+z-1)$, which vanishes at $y=z=0$ and is positive at $y=z=1$, and in Ax.~9 and~9$'$ it is $\rho(x-y)$ and $\rho(y-x)$, vanishing on $\{x\leq y\}$ and $\{y\leq x\}$, respectively. Further genuine deep symmetries, spanning more than three layers, can be composed from these and Tier~1 identities.
\begin{itemize}
  \item \textup{Ax.~2.} $x \oplus (y \oplus z) = (x \oplus y) \oplus z $
  \[
  1-\rho(-x+\rho(-y-z+1)) = 1-\rho(\rho(-x-y+1)-z), \quad   x,y,z\in [0,1].
  \]
  \item \textup{Ax.~2$'$.} $x \odot (y \odot z) = (x \odot y) \odot z$
  \[
  \rho(x+\rho(y+z-1)-1) = \rho(\rho(x+y-1)+z-1),\quad   x,y,z\in [0,1].
  \]
  \item \textup{Ax.~9.} $(x\odot \lnot y)\oplus y = (y\odot \lnot x )\oplus x$
  \[
  1-\rho( -\rho(x+1-y-1)
    -y+1) = 1-\rho(-\rho(y+1-x-1)-x+1),\quad x,y\in [0,1]
  \]
  \item \textup{Ax.~9$'$.} $(x\oplus \lnot y)\odot y=(y\oplus \lnot x)\odot x $
  \[
  \rho(1-\rho(-x-1+y+1)+y-1) = \rho( 1-\rho(-y-1+x+1) + x-1),\quad x,y\in [0,1]
  \]
\end{itemize}

\vspace{-2em}
\paragraph*{Tier 4 (DMV/Riesz): Scaling symmetries.}\vspace{-\parskip} The operations $\delta_n(x) = x/n$ ($n\in\sN$) and $\Delta_r(x) = rx$ ($r\in[0,1]$) are affine and hence trivially single-layer networks (see \cref{def:relu_network}). The two axioms of Definition~\ref{defn:DMV algebra} translate into
\begin{itemize}
  \item \textup{Definition~\ref{defn:DMV algebra}, first axiom} ($n\in\sN$).\quad $\oplus^{n}\delta_n x = x$
  \[
    1-\rho(1-x) = x,\quad x\in[0,1],
  \]
  \item \textup{Definition~\ref{defn:DMV algebra}, second axiom} ($n\in\sN$).\quad $\delta_n x\odot(\oplus^{n-1}\delta_n x) = 0$
  \[
    \rho(x-1) = 0,\quad x\in[0,1],
  \]
\end{itemize}
both degenerate, the second having the same $\rho$-form as Ax.~4$'$, so that they pertain to Tier~1 and lead to no new category of symmetry. New symmetries arise from the interaction of $\delta_n$ with $\oplus$, $\odot$, and $\lnot$. To see this, we apply $\delta_n$ to the truncated difference $x\odot\lnot y$, whose $\rho$-form is $\rho(x-y)$. Concretely, positive homogeneity of $\rho$ yields the following symmetry.
\begin{itemize}
  \item \textup{Ax.\ DMV-$n$} ($n\in\sN$).\quad $\delta_n(x\odot\lnot y) = (\delta_n x)\odot\lnot(\delta_n y)$
  \[
    \tfrac{1}{n}\,\rho(x-y) = \rho\!\left(\tfrac{x-y}{n}\right),\quad x,y\in[0,1].
  \]
\end{itemize}
The same computation with $r$ in place of $1/n$ gives the Riesz counterpart, which for $\Delta_r$ is the first axiom of Definition~\ref{defn:RMV algebra}.
\begin{itemize}
  \item \textup{Ax.\ Riesz-$r$} ($r\in(0,1]$; for $r=0$ both sides vanish).\quad $\Delta_r(x\odot\lnot y) = (\Delta_r x)\odot\lnot(\Delta_r y)$
  \[
    r\,\rho(x-y) = \rho\bigl(r(x-y)\bigr),\quad x,y\in[0,1].
  \]
\end{itemize}
The remaining three axioms of Definition~\ref{defn:RMV algebra} add nothing further. The second is the same identity with the roles of scalar and argument exchanged, its $\rho$-form being $\rho(r-q)\,x = \rho\bigl((r-q)x\bigr)$, while the third and fourth contain no $\rho$, stating that rescalings compose, $r(qx) = (rq)x$, and that $\Delta_1$ is the identity.

\paragraph*{Finite domain: grid symmetries.}\vspace{-\parskip} The axioms that \cref{defn:n-mv} adds differ from those above in holding on $I_k$ only. With $\oplus^m x = 1-\rho(1-mx)$ and $\odot^m x = \rho(mx-m+1)$, $x\in[0,1]$, the axiom $\oplus^{k+1}x = \oplus^{k}x$, which for $k=1$ reads $x\oplus x = x$, translates into
\begin{itemize}
  \item \textup{Ax.\ $\mathcal{MV}_k$.}\quad $\oplus^{k+1}x = \oplus^{k}x$
  \[
    1-\rho(-(k+1)x+1) = 1-\rho(-kx+1),\quad x\in I_k,
  \]
\end{itemize}
\noindent an identity that fails on $[0,1]$. To see this, note that
\[
  \oplus^{k}x = \min\{1,kx\},\qquad \oplus^{k+1}x = \min\{1,(k+1)x\}.
\]
For $x = m/k\in I_k$, $m\in\{0,1,\ldots,k\}$, we have $kx = m$ and $(k+1)x = (k+1)m/k\geq m$, so that
\[
  \oplus^{k}x = \oplus^{k+1}x =
  \begin{cases}
    0, & m = 0,\\
    1, & m\geq 1,
  \end{cases}
\]
whereas at $x = 1/(2k)$,
\[
  \oplus^{k}x = \tfrac{1}{2},\qquad \oplus^{k+1}x = \tfrac{k+1}{2k}.
\]
The second set of axioms of \cref{defn:n-mv} translates into
\begin{itemize}
  \item \textup{Ax.\ $\mathcal{MV}_{k,j}$} ($j\in\{2,\ldots,k-1\}$ not dividing $k$).\quad $\odot^{k+1}\bigl(\oplus^{j}(\odot^{j-1}x)\bigr) = \oplus^{k+1}(\odot^{j}x)$
  \begin{align*}
    &\rho\Bigl((k+1)\bigl(1-\rho\bigl(1-j\,\rho((j-1)x-j+2)\bigr)\bigr)-k\Bigr)\\
    &\qquad = 1-\rho\bigl(1-(k+1)\,\rho(jx-j+1)\bigr),\quad x\in I_k,
  \end{align*}
\end{itemize}
\noindent the left-hand side nesting $\rho$ to depth three and the right-hand side to depth two. Structurally Ax.~$\mathcal{MV}_{k,j}$ meets the Tier~3 criterion, nesting $\rho$ to depth three with an inner $\rho$ that clips. Both identities hold on $I_k$ only, however, and therefore belong to no tier.

\section{\texorpdfstring{Proof of \cref{lem:extractmv}}{Proof of Lemma 1}}\label{app:proof_extractmv}
\begin{proof}
  For all $t\in \sR$, $\sigma(t) = 1-\sigma(1-t)$, which establishes~\eqref{eq:flip_sign}.

  To show \eqref{eq:lemma4.4}, recall that $f_\circ = f - x_1$, so $f = f_\circ + x_1$. The proof below uses only $f = f_\circ+x_1$ and $x_1\in[0,1]$, so that \eqref{eq:lemma4.4} holds for arbitrary real $m_1,\ldots,m_n,b$; the hypothesis $m_1\geq 1$ serves only to guarantee termination of \textup{EXTR}, which applies~\eqref{eq:lemma4.4} with $m_1$ lowered by one at each step (\cref{subsec:single}). As \eqref{eq:lemma4.4} is an identity between functions on $[0,1]^n$, it suffices to verify it at an arbitrary $x\in[0,1]^n$; fix such an $x$ and distinguish four cases by the value $f_\circ(x)$, following~\cite{mundici1994constructive}.

\noindent \textit{Case 1:} $f_\circ(x) \geq 1$. Then $f\geq 1$, so the left-hand side of~\eqref{eq:lemma4.4} is $\sigma(f)=1$, and the right-hand side is
\begin{equation*}
      (\sigma(f_{\circ}) \oplus x_1) \odot \sigma(f_{\circ}+1)  =  (1 \oplus x_1)\odot 1 = 1.
\end{equation*}

\noindent \textit{Case 2:} $f_\circ(x) \leq -1$. Then $f\leq 0$, so $\sigma(f)=0$, and the right-hand side is
\begin{equation*}
      (\sigma(f_{\circ}) \oplus x_1) \odot \sigma(f_{\circ}+1)
     =  (0 \oplus x_1)\odot 0=0.
\end{equation*}

\noindent\textit{Case 3:} $-1 < f_\circ(x)\leq 0$. Then $f \in (-1,1]$, and the right-hand side of~\eqref{eq:lemma4.4} is
\begin{align*}
       (&\sigma(f_{\circ}) \oplus x_1) \odot \sigma(f_{\circ}+1) \\
     & =    (0 \oplus x_1)\odot (f_{\circ}+1) \\
     & =   x_1\odot (f_{\circ}+1)\\
    &  =  \max\{0,x_1+f_{\circ}+1-1\} \\
    &  =  \max\{0,f\}\\
    &  =  \sigma(f).
\end{align*}

\noindent \textit{Case 4:} $0 < f_\circ(x) < 1$. Then $f\in (0,2)$, and the right-hand side of~\eqref{eq:lemma4.4} is
\begin{align*}
     (& \sigma(f_{\circ}) \oplus x_1) \odot \sigma(f_{\circ}+1)\\
   &  =       (f_{\circ} \oplus x_1)\odot 1 \\
    & =f_{\circ} \oplus x_1\\
   &  =  \min\{1,f_{\circ}+x_1\}\\
   &  = \min \{1,f\}\\
   &  =  \sigma(f).
\end{align*}
In each of the four cases the two sides agree at $x$, establishing~\eqref{eq:lemma4.4}.
\end{proof}

\section{\texorpdfstring{Deferred proofs in \cref{sec:graphical_representation}}{Deferred proofs in Section 3}}

\subsection{\texorpdfstring{Proof of \cref{lem:substitution_composition}}{Proof of Lemma 2}}\label{app:proof_substitution_composition}
\begin{proof}
  A \Luka{} formula $\tau$ is generated from propositional variables and the constants $0,1$ by the operations $\lnot,\oplus,\odot$. We prove the identity $\big(\tau(\zeta)\big)(\zeta') = \tau(\zeta\blackcirc\zeta')$ by induction on the structure of $\tau$---we verify it for the constants and for single variables (base cases) and show that it is inherited under each of the three formation rules (induction step); the principle of structural induction then yields it for every $\tau$ with variables in $\{x_{i_1},\ldots,x_{i_k}\}$.

  \emph{Base cases.} If $\tau\in\{0,1\}$, then by \cref{def:substitution}, $\tau(\zeta) = \tau$ and, likewise, $\tau(\zeta\blackcirc\zeta') = \tau$; hence $\big(\tau(\zeta)\big)(\zeta') = \tau = \tau(\zeta\blackcirc\zeta')$. If $\tau$ is a single variable, then $\tau = x_{i_j}$ for some $j\in\{1,\ldots,k\}$, so $\tau(\zeta) = \chi_j$ and $\big(\tau(\zeta)\big)(\zeta') = \chi_j(\zeta') = \tau(\zeta\blackcirc\zeta')$ by \cref{def:substituion_composition}.

  \emph{Induction step.} Otherwise $\tau$ is of one of the forms $\lnot\tau'$, $\gamma\oplus\eta$, or $\gamma\odot\eta$ for \Luka{} formulae $\tau',\gamma,\eta$, whose variables automatically lie in $\{x_{i_1},\ldots,x_{i_k}\}$. The induction hypothesis is that the identity $\big(\theta(\zeta)\big)(\zeta') = \theta(\zeta\blackcirc\zeta')$ holds for each of $\theta = \tau',\gamma,\eta$. Since a substitution replaces variable occurrences and leaves the connectives unchanged, $(\lnot\tau')(\zeta) = \lnot(\tau'(\zeta))$ and $(\gamma\oplus\eta)(\zeta) = \gamma(\zeta)\oplus\eta(\zeta)$, and likewise for $\odot$.
  \begin{itemize}
    \item If $\tau = \lnot\tau'$, then $\big(\tau(\zeta)\big)(\zeta') = \big(\lnot(\tau'(\zeta))\big)(\zeta') = \lnot\big(\big(\tau'(\zeta)\big)(\zeta')\big) = \lnot\big(\tau'(\zeta\blackcirc\zeta')\big) = \tau(\zeta\blackcirc\zeta')$, where the third equality holds by the induction hypothesis for $\tau'$.
    \item If $\tau = \gamma\oplus\eta$, then $\big(\tau(\zeta)\big)(\zeta') = \big(\gamma(\zeta)\oplus\eta(\zeta)\big)(\zeta') = \big(\gamma(\zeta)\big)(\zeta')\oplus\big(\eta(\zeta)\big)(\zeta') = \gamma(\zeta\blackcirc\zeta')\oplus\eta(\zeta\blackcirc\zeta') = \tau(\zeta\blackcirc\zeta')$, where the third equality holds by the induction hypothesis for $\gamma$ and $\eta$.
    \item The case $\tau = \gamma\odot\eta$ is treated exactly as $\tau = \gamma\oplus\eta$, with $\odot$ in place of $\oplus$.
  \end{itemize}
  This completes the induction.
\end{proof}

\subsection{\texorpdfstring{Proof of \cref{lem:graph_truth_function}}{Proof of Lemma 3}}\label{app:proof_graph_truth_function}
\begin{proof}
  The lemma asserts that $\ang{\gK} = [\gG]^\sI$. By \cref{def:output_map} and~\eqref{eq:graph_single_subst} this reads $\twoang{\vout} = \bigl([\vout](\eta_L)\bigr)^\sI$, which is what we prove. Let $\eta_0 = \{x_1\mapsto x_1,\ldots,x_{d_0}\mapsto x_{d_0}\}$ be the \emph{identity substitution}, so that $\tau(\eta_0) = \tau$ for every formula $\tau$ in $x_1,\ldots,x_{d_0}$, and let $\eta_j$, $j=1,\ldots,L$, be as in \cref{prop:stepIII_equals_graph}. We show by induction on $j\in\{0,1,\ldots,L\}$ that $\twoang{v_i^{(j)}} = \bigl([v_i^{(j)}](\eta_j)\bigr)^\sI$ for every level-$j$ node, the case $j=L$ being the lemma.
  For $j=0$,
  \[
    \bigl([v_i^{(0)}](\eta_0)\bigr)^\sI = \bigl[v_i^{(0)}\bigr]^\sI = x_i^\sI = \ang{v_i^{(0)}} = \twoang{v_i^{(0)}},
  \]
  the first equality because $\eta_0$ is the identity substitution, the second and third by \cref{defn:composition_graph} and \cref{subsec:single}, and the fourth by \cref{def:output_map}.
  For the step from $j-1$ to $j$, \cref{def:output_map} gives
  \begin{align*}
    \twoang{v_i^{(j)}}(x) &= \sigma\Bigl(\sum_{i'=1}^{d_{j-1}} w_{v_i^{(j)},v_{i'}^{(j-1)}}\,\twoang{v_{i'}^{(j-1)}}(x)+b_{v_i^{(j)}}\Bigr)\\
    &= \ang{v_i^{(j)}}\bigl(\twoang{v_1^{(j-1)}}(x),\ldots,\twoang{v_{d_{j-1}}^{(j-1)}}(x)\bigr),
  \end{align*}
  where the second equality follows from~\eqref{eq:local_map} for $1\leq j\leq L-1$ and from~\eqref{eq:sigma_out} for $j=L$. With $\ang{v} = [v]^\sI$, the hypothesis of the lemma, this reads
  \begin{equation}\label{eq:global_step}
    \twoang{v_i^{(j)}} = [v_i^{(j)}]^\sI\bigl(\twoang{v_1^{(j-1)}},\ldots,\twoang{v_{d_{j-1}}^{(j-1)}}\bigr),
  \end{equation}
  the right-hand side of which equals $\bigl([v_i^{(j)}](\eta_j)\bigr)^\sI$. Indeed, with $\chi_{i'} = [v_{i'}^{(j-1)}](\eta_{j-1})$, $i' = 1,\ldots,d_{j-1}$,
  \begin{align*}
    [v_i^{(j)}]^\sI\bigl(\twoang{v_1^{(j-1)}},\ldots,\twoang{v_{d_{j-1}}^{(j-1)}}\bigr)
      &= [v_i^{(j)}]^\sI\bigl(\chi_1^\sI,\ldots,\chi_{d_{j-1}}^\sI\bigr)\\
      &= \bigl([v_i^{(j)}](\zeta^{(j-1,j)}\blackcirc\eta_{j-1})\bigr)^\sI
       = \bigl([v_i^{(j)}](\eta_j)\bigr)^\sI,
  \end{align*}
  the first equality by the induction assumption $\twoang{v_{i'}^{(j-1)}} = \chi_{i'}^\sI$, the second by~\eqref{eq:subst-semantics} with $\tau = [v_i^{(j)}]$, the substitution taking $x_{i'}$ to $\chi_{i'}$ being $\zeta^{(j-1,j)}\blackcirc\eta_{j-1}$ by \cref{def:substituion_composition} with $\zeta^{(j-1,j)} = \bigl\{x_{i'}\mapsto[v_{i'}^{(j-1)}] : i' = 1,\ldots,d_{j-1}\bigr\}$ (\cref{defn:composition_graph}), and the third by the recursion of \cref{prop:stepIII_equals_graph}, the case $j=1$ using $\zeta^{(0,1)}\blackcirc\eta_0 = \eta_1$.
  At $j=L$ this reads
  \[
    \ang{\gK} = \twoang{\vout} = \bigl([\vout](\eta_L)\bigr)^\sI = [\gG]^\sI,
  \]
  with the first equality by \cref{def:output_map} and the third by~\eqref{eq:graph_single_subst}.
\end{proof}

\section{\texorpdfstring{Auxiliary results for \cref{sec:manipulate}}{Auxiliary results for Section 4}}\label{app:proof_sec_manipulate}

Composition of substitutions is not associative in general. For $\zeta = \{x_1\mapsto x_2\}$, $\zeta' = \{x_3\mapsto x_4\}$, and $\zeta'' = \{x_2\mapsto x_5\}$, \cref{def:substituion_composition} gives $(\zeta\blackcirc\zeta')\blackcirc\zeta'' = \{x_1\mapsto x_5\}$ and $\zeta\blackcirc(\zeta'\blackcirc\zeta'') = \{x_1\mapsto x_2\}$. The following lemma establishes associativity when every variable occurring in the substitutors of $\zeta$ lies in the domain of $\zeta'$.

\begin{lem}\label{lem:substitution_associativity}
  Let $\zeta = \{x_{i_1}\mapsto\chi_1,\ldots,x_{i_k}\mapsto\chi_k\}$, $\zeta'$, and $\zeta''$ be substitutions such that the variables of $\chi_1,\ldots,\chi_k$ lie in the domain of $\zeta'$. Then
  \[
    (\zeta\blackcirc\zeta')\blackcirc\zeta'' = \zeta\blackcirc(\zeta'\blackcirc\zeta'').
  \]
\end{lem}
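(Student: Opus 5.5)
The plan is to compare the two sides entry by entry, after reducing both to statements about applying substitutions to the substitutors $\chi_1,\ldots,\chi_k$. By \cref{def:substituion_composition}, both sides have the same domain $\{x_{i_1},\ldots,x_{i_k}\}$, since $\blackcirc$ never changes the domain of its left argument. It therefore suffices to show that, for each $j\in\{1,\ldots,k\}$, the substitutor assigned to $x_{i_j}$ agrees on the two sides.

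First I unfold the left-hand side. The composition $\zeta\blackcirc\zeta'$ assigns $\chi_j(\zeta')$ to $x_{i_j}$. Composing further with $\zeta''$ then assigns $\big(\chi_j(\zeta')\big)(\zeta'')$ to $x_{i_j}$.

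Next I unfold the right-hand side. The composition $\zeta\blackcirc(\zeta'\blackcirc\zeta'')$ assigns $\chi_j(\zeta'\blackcirc\zeta'')$ to $x_{i_j}$.

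The entrywise equality $\big(\chi_j(\zeta')\big)(\zeta'')=\chi_j(\zeta'\blackcirc\zeta'')$ is exactly \cref{lem:substitution_composition}, applied with $\tau=\chi_j$, with $\zeta'$ in the role of $\zeta$, and with $\zeta''$ in the role of $\zeta'$. The hypothesis of that lemma is that the variables of $\tau$ lie in the domain of the first substitution, and this is precisely the assumption made here on the $\chi_j$. Constant substitutors need no separate treatment, since they are covered by the base case of \cref{lem:substitution_composition}.

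The only step needing care is the role of the hypothesis, which the counterexample preceding the statement explains. There a variable of $\chi_1=x_2$ lies outside the domain of $\zeta'$, so $\zeta'$ leaves it untouched and $\zeta''$ later acts on it; the composite $\zeta'\blackcirc\zeta''$ has domain equal to that of $\zeta'$ and so never reaches $x_2$. I will therefore state explicitly that the hypothesis on the variables of the $\chi_j$ is what makes \cref{lem:substitution_composition} applicable. With that justified, the argument is two lines of unfolding definitions.
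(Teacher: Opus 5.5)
Your proposal is correct and follows the paper's own proof: both compare the two sides entrywise, reduce the substitutor equality $\big(\chi_j(\zeta')\big)(\zeta'') = \chi_j(\zeta'\blackcirc\zeta'')$ to \cref{lem:substitution_composition} with $\tau=\chi_j$, and use \cref{def:substituion_composition} to see that both sides have domain $\{x_{i_1},\ldots,x_{i_k}\}$. Your explicit note that the hypothesis on the variables of the $\chi_j$ is exactly what makes \cref{lem:substitution_composition} applicable is a welcome addition.
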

\vspace{-2ex}
\begin{proof}
  The substitutor associated with $x_{i_j}$ is $\big(\chi_j(\zeta')\big)(\zeta'')$ on the left-hand side and $\chi_j(\zeta'\blackcirc\zeta'')$ on the right-hand side; the two coincide by \cref{lem:substitution_composition} applied to $\tau = \chi_j$. Both sides have domain $\{x_{i_1},\ldots,x_{i_k}\}$ (\cref{def:substituion_composition}), so the two substitutions are equal.
\end{proof}

Let $\gG$ be a substitution graph of depth $L$ with layer substitutions $\zeta^{(j-1,j)}$, $j = 1,\ldots,L$ (\cref{defn:composition_graph}). Equation~\eqref{eq:ext-1} defines $[\gG]$ by iterated substitution. We show that this iteration can be carried out in a single substitution, specifically that
\begin{equation}\label{eq:chain}
  [\gG] = [\vout](\zeta^{(L-1,L)})(\zeta^{(L-2,L-1)})\cdots(\zeta^{(0,1)}) = [\vout](\zeta^{(L-1,L)}\blackcirc\zeta^{(L-2,L-1)}\blackcirc\cdots\blackcirc\zeta^{(0,1)}).
\end{equation}
By \cref{defn:composition_graph} a level-$j$ node formula has its variables among $x_1,\ldots,x_{d_{j-1}}$, the domain of $\zeta^{(j-1,j)}$, and the substitutors of $\zeta^{(j,j+1)}$ are exactly the level-$j$ node formulae. The substitutors of $\zeta^{(j,j+1)}\blackcirc\zeta^{(j-1,j)}$ therefore have their variables among $x_1,\ldots,x_{d_{j-2}}$, the domain of $\zeta^{(j-2,j-1)}$, so that the substitutor variables lie in the domain of the next layer substitution, and this recurs down the chain. Lemmata~\ref{lem:substitution_composition} and~\ref{lem:substitution_associativity} thus apply at every stage. \cref{lem:substitution_composition} turns each pair of successive applications $\bigl(\tau(\zeta)\bigr)(\zeta')$ into the single application $\tau(\zeta\blackcirc\zeta')$, so that repeating it from the left replaces the $L$ applications in~\eqref{eq:ext-1} by one application of a composition of all $L$ layer substitutions. \cref{lem:substitution_associativity} identifies the left-nested composition so obtained, $\bigl(\cdots(\zeta^{(L-1,L)}\blackcirc\zeta^{(L-2,L-1)})\blackcirc\cdots\bigr)\blackcirc\zeta^{(0,1)}$, with every other bracketing, so that the unbracketed chain above is unambiguous notation for their common value.

\begin{lem}\label{lem:substitution_collapse_same_formula}
  Let $\gG$ be a substitution graph of depth $L\geq 2$ and let $k\in\{1,\ldots,L-1\}$. If $\gG'$ is obtained from $\gG$ by collapsing layer $k$ (\cref{def:collapse_expand}), then $[\gG'] = [\gG]$.
\end{lem}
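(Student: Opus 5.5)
The plan is to compare the two iterated substitutions defining $[\gG]$ and $[\gG']$ via \eqref{eq:ext-1}, and to show they differ only in that two consecutive applications in $[\gG]$ are fused into one in $[\gG']$. I will first record the layer substitutions of $\gG'$. Levels $0,\ldots,k-1$ of $\gG'$ coincide with those of $\gG$, as do their node formulae. Level $j\geq k$ of $\gG'$ is level $j+1$ of $\gG$, with formula $[v](\zeta^{(k,k+1)})$ at level $k$ of $\gG'$ and unchanged formulae above. Writing $\zeta'^{(j-1,j)}$ for the layer substitutions of $\gG'$, this yields three facts. First, $\zeta'^{(j-1,j)} = \zeta^{(j-1,j)}$ for $j\leq k-1$. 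Second, $\zeta'^{(j-1,j)} = \zeta^{(j,j+1)}$ for $j\geq k+1$; the substitutors are the node formulae at level $j$ of $\gG'$, that is, at level $j+1$ of $\gG$, which are untouched when $j+1\geq k+2$. Third, the crucial one is
\[
\zeta'^{(k-1,k)} = \{x_{i}\mapsto [v_i^{(k+1)}](\zeta^{(k,k+1)})\} = \zeta^{(k+1,k+2)}\blackcirc\zeta^{(k,k+1)},
\]
which follows directly from \cref{def:substituion_composition}. When $k=L-1$ there is no level-$(k+2)$ substitution; in that case the output formula itself is replaced, $[\vout]' = [\vout](\zeta^{(L-1,L)})$, and I will treat it separately.

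Next I will write out $[\gG]$ and group the two adjacent applications $(\zeta^{(k+1,k+2)})(\zeta^{(k,k+1)})$. Let $\theta$ denote the formula obtained after applying $\zeta^{(L-1,L)},\ldots,\zeta^{(k+2,k+3)}$ to $[\vout]$. This $\theta$ has its variables among $x_1,\ldots,x_{d_{k+1}}$, the domain of $\zeta^{(k+1,k+2)}$; this follows as in the proof of \cref{prop:stepIII_equals_graph}, since each substitution maps a formula in level-$j$ variables to one in level-$(j-1)$ variables. \cref{lem:substitution_composition} then gives
\[
\theta(\zeta^{(k+1,k+2)})(\zeta^{(k,k+1)}) = \theta(\zeta^{(k+1,k+2)}\blackcirc\zeta^{(k,k+1)}) = \theta(\zeta'^{(k-1,k)}).
\]
Substituting this together with the two identifications of the remaining layer substitutions yields exactly the iterated expression \eqref{eq:ext-1} for $[\gG']$. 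Since the substitutions applied before this point coincide with those of $\gG'$ by the second fact, $\theta$ is also the corresponding intermediate formula for $\gG'$. Since the substitutions applied afterwards coincide by the first fact, the two final formulae agree. In the case $k=L-1$ the same computation reads $[\vout](\zeta^{(L-1,L)})(\zeta^{(L-2,L-1)})\cdots = [\vout]'(\zeta^{(L-2,L-1)})\cdots$, which is immediate.

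The main obstacle I expect is bookkeeping rather than ideas: tracking the index shift in the layer substitutions and verifying the variable-domain hypothesis of \cref{lem:substitution_composition} at the grouping point. For the latter I will cite the variable-range observation preceding \eqref{eq:chain}, which states that the substitutor variables of each layer substitution lie in the domain of the next one.
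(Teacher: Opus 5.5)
Your proof is correct and rests on the same observation as the paper's: collapsing layer $k$ leaves every layer substitution unchanged, except that the consecutive pair $\zeta^{(k+1,k+2)},\zeta^{(k,k+1)}$ is replaced by the single substitution $\zeta^{(k+1,k+2)}\blackcirc\zeta^{(k,k+1)}=\{x_i\mapsto[v_i^{(k+1)}](\zeta^{(k,k+1)})\}$. The two routes differ in how this yields $[\gG']=[\gG]$.

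The paper rewrites both represented formulae as one substitution applied to $[\vout]$, using the chain \eqref{eq:chain}. It notes that the chain of $\gG'$ is that of $\gG$ with a bracket around the fused pair, and concludes by \cref{lem:substitution_associativity}. It handles the case $k=L-1$ the same way, with \cref{lem:substitution_composition} and associativity.

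You instead stay with the iterated form \eqref{eq:ext-1} and isolate the intermediate formula $\theta$. You check that its variables lie in the domain of $\zeta^{(k+1,k+2)}$ and apply \cref{lem:substitution_composition} once, at the grouping point. Associativity of $\blackcirc$, which holds only under a domain condition, is never needed, and the case $k=L-1$ is literally the definition. Your route is more economical. The paper's casts the collapse as a re-bracketing of a single chain, which is the form it reuses elsewhere (e.g.\ the substitution $Z$ in the proof of \cref{prop:graph2formula}).

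One bookkeeping slip needs fixing. By \cref{defn:composition_graph}, the substitutors of $\zeta'^{(j-1,j)}$ are the level-$(j-1)$ formulae of $\gG'$, not the level-$j$ ones as your second fact says. This has three consequences:
\begin{itemize}
\item the fused substitution is $\zeta'^{(k,k+1)}$, while $\zeta'^{(k-1,k)}=\zeta^{(k-1,k)}$;
\item your first fact holds for all $j\leq k$, not only $j\leq k-1$;
\item your second fact holds only for $j\geq k+2$; at $j=k+1$ it would give $\zeta'^{(k,k+1)}=\zeta^{(k+1,k+2)}$, which is false.
\end{itemize}
Read literally, your three facts use $\zeta^{(k+1,k+2)}$ twice and never use $\zeta^{(k-1,k)}$. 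Your second paragraph does group the factors correctly, so it suffices to shift the superscripts and write $\theta(\zeta'^{(k,k+1)})$ at the end of your display.
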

\begin{proof}
Collapsing layer $k$ replaces the formula of every level-$(k+1)$ node $v$ by $[v](\zeta^{(k,k+1)})$ and joins levels $k-1$ and $k+1$.
If $k+1 < L$, the layer substitution applied in $\gG'$ to the level-$(k+2)$ formulae is no longer $\zeta^{(k+1,k+2)}$, the corresponding substitution of $\gG$, but $\{x_i\mapsto [v_i^{(k+1)}](\zeta^{(k,k+1)})\} = \zeta^{(k+1,k+2)}\blackcirc\zeta^{(k,k+1)}$, while all other layer substitutions are unchanged. The chain of $\gG'$ has therefore one factor fewer than that of $\gG$, the pair $\zeta^{(k+1,k+2)},\zeta^{(k,k+1)}$ having been replaced by its composite, and
\[
[\gG'] = [\vout]\big(\zeta^{(L-1,L)}\blackcirc\cdots\blackcirc\zeta^{(k+2,k+3)}\blackcirc(\zeta^{(k+1,k+2)}\blackcirc\zeta^{(k,k+1)})\blackcirc\zeta^{(k-1,k)}\blackcirc\cdots\blackcirc\zeta^{(0,1)}\big) = [\gG].
\]
Here, the last equality holds because the composition substituted into $[\vout]$ carries the same $L$ factors, in the same order, as the chain of $\gG$, differing from it only by the bracket around $\zeta^{(k+1,k+2)}\blackcirc\zeta^{(k,k+1)}$, and all bracketings of the chain denote the same substitution by \cref{lem:substitution_associativity}.
If $k+1 = L$, the only level-$(k+1)$ node is $\vout$ and there is no level $k+2$, so the collapse absorbs $\zeta^{(L-1,L)}$ into the output formula, replacing $[\vout]$ by $[\vout](\zeta^{(L-1,L)})$, and leaves the remaining layer substitutions unchanged. Hence
\[
[\gG'] = \big([\vout](\zeta^{(L-1,L)})\big)\big(\zeta^{(L-2,L-1)}\blackcirc\cdots\blackcirc\zeta^{(0,1)}\big)
       = [\vout]\big(\zeta^{(L-1,L)}\blackcirc(\zeta^{(L-2,L-1)}\blackcirc\cdots\blackcirc\zeta^{(0,1)})\big) = [\gG],
\]
where the second equality holds by \cref{lem:substitution_composition} and the third because $\zeta^{(L-1,L)}\blackcirc(\zeta^{(L-2,L-1)}\blackcirc\cdots\blackcirc\zeta^{(0,1)})$ carries the same $L$ factors in the same order as the chain of $\gG$, now bracketed after the first, and all bracketings of the chain denote the same substitution by \cref{lem:substitution_associativity}. \qedhere
\end{proof}

\begin{lem}\label{lem:substitution_expansion_same_formula}
  Let $\gG$ be a substitution graph of depth $L$, let $k\in\{1,\ldots,L\}$, and let $\gG'$ be obtained from $\gG$ by expanding at level $k$ (\cref{def:collapse_expand}), the $m\geq 1$ inserted nodes carrying the formulae $\chi_1,\ldots,\chi_m$ and every level-$k$ node $v$ of $\gG$, which is of level $k+1$ in $\gG'$, carrying in $\gG'$ a formula $[v]'$ with $[v]'(\{x_1\mapsto\chi_1,\ldots,x_m\mapsto\chi_m\}) = [v]$. Then $[\gG'] = [\gG]$.
\end{lem}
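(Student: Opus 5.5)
The plan mirrors the proof of \cref{lem:substitution_collapse_same_formula}: express both $[\gG']$ and $[\gG]$ as $[\vout]$ under a single composed substitution (via \eqref{eq:chain}), and show that the two chains agree up to bracketing. Write $\xi = \{x_1\mapsto\chi_1,\ldots,x_m\mapsto\chi_m\}$. In $\gG'$ the inserted level $k$ supplies the layer substitution $\zeta'^{(k-1,k)} = \xi$, since its substitutors are the formulae of the inserted nodes, which lie in the level-$(k-1)$ variables. The old level-$k$ nodes, now at level $k+1$, carry $[v]'$ in the variables $x_1,\ldots,x_m$.

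First treat the case $k<L$. The layer substitution of $\gG'$ feeding the old level $k+1$ (now level $k+2$) is $\zeta'' = \{x_i\mapsto[v_i^{(k)}]'\}$. Every other layer substitution of $\gG'$ coincides with the corresponding one of $\gG$, up to the index shift. By \cref{def:substituion_composition} and the defining property $[v]'(\xi) = [v]$, we obtain
\[
\zeta''\blackcirc\xi = \{x_i\mapsto[v_i^{(k)}]'(\xi)\} = \{x_i\mapsto[v_i^{(k)}]\} = \zeta^{(k,k+1)}.
\]
The chain of $\gG'$ is therefore the chain of $\gG$ with the factor $\zeta^{(k,k+1)}$ replaced by the pair $\zeta''$, $\xi$. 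By \eqref{eq:chain} together with \cref{lem:substitution_associativity}, which lets us re-bracket so that this pair forms one composite, the two compositions substituted into $[\vout]$ coincide, and hence $[\gG'] = [\gG]$.

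For $k=L$, the node expanded is $\vout$ itself. It carries $[\vout]'$ with $[\vout]'(\xi) = [\vout]$, and $\xi$ sits directly below it. We then write
\[
[\gG'] = \bigl([\vout]'(\xi)\bigr)\bigl(\zeta^{(L-1,L)}\blackcirc\cdots\blackcirc\zeta^{(0,1)}\bigr),
\]
using \cref{lem:substitution_composition} to split off the first factor. This equals $[\vout](\zeta^{(L-1,L)}\blackcirc\cdots)=[\gG]$.

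The main obstacle is the bookkeeping of the hypotheses of \cref{lem:substitution_associativity} and \cref{lem:substitution_composition}. Each substitution's substitutor variables must lie in the domain of the next substitution. This holds in the expanded chain because $[v]'$ uses only $x_1,\ldots,x_m$, the domain of $\xi$, and the $\chi_i$ use only the level-$(k-1)$ variables, the domain of $\zeta^{(k-2,k-1)}$ (or $\eta_0$ when $k=1$). Once this is verified, the remaining steps are routine, exactly as in the collapse lemma.
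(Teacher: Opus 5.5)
Your proof is correct, but it takes a more hands-on route than the paper. The paper never writes out the substitution chain. It observes that $\gG'$ has depth $L+1$ with the inserted nodes at level $k$, so layer $k$ of $\gG'$ can be collapsed. That collapse applies $\zeta'^{(k,k+1)} = \{x_1\mapsto\chi_1,\ldots,x_m\mapsto\chi_m\}$ to each $[v]'$, which yields $[v]$ by hypothesis, and therefore returns exactly $\gG$. \cref{lem:substitution_collapse_same_formula} then gives $[\gG'] = [\gG]$.

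You instead redo, for this configuration, the computation underlying the collapse lemma: the identity $\zeta''\blackcirc\xi = \zeta^{(k,k+1)}$, re-bracketing by \cref{lem:substitution_associativity}, and a separate treatment of $k = L$. In the paper that last case is absorbed by the case of the collapse lemma in which the collapsed layer sits directly below the output node. The paper's reduction is shorter and makes explicit that every expansion is undone by collapsing the inserted layer. Yours is self-contained and independent of the collapse lemma, at the cost of repeating its bookkeeping.

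One notational slip: under the convention of \cref{defn:composition_graph}, $\zeta^{(j-1,j)}$ has the level-$(j-1)$ formulae as substitutors. So $\xi$ is $\zeta'^{(k,k+1)}$ of $\gG'$, not $\zeta'^{(k-1,k)}$. Likewise, the variables of the $\chi_i$ form the domain of $\zeta^{(k-1,k)}$, not $\zeta^{(k-2,k-1)}$, so no $\eta_0$ is needed when $k=1$. Since you place $\xi$ correctly between $\zeta''$ and $\zeta^{(k-1,k)}$ in the chain, the argument is unaffected.
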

\begin{proof}
The graph $\gG'$ has depth $L+1$ and the inserted nodes constitute its level $k$, so we may collapse layer $k$ of $\gG'$ and apply \cref{lem:substitution_collapse_same_formula}. The collapse replaces the formula $[v]'$ of every level-$(k+1)$ node of $\gG'$ by $[v]'(\zeta^{(k,k+1)})$, with $\zeta^{(k,k+1)}$ the layer substitution of $\gG'$, and joins levels $k-1$ and $k+1$ (\cref{def:collapse_expand}). The substitutors of $\zeta^{(k,k+1)}$ are the formulae of the level-$k$ nodes of $\gG'$, so that $\zeta^{(k,k+1)} = \{x_1\mapsto\chi_1,\ldots,x_m\mapsto\chi_m\}$ and $[v]'(\zeta^{(k,k+1)}) = [v]$. The collapse thus returns $\gG$, and hence $[\gG'] = [\gG]$.
\end{proof}

\begin{prop}[Soundness of node rewrites]\label{prop:graph2formula}
  Let $\gG$ be a substitution graph, let $\gE$ be a set of axioms, let $e\in\gE$, and let $\gG'$ be obtained from $\gG$ by a node rewrite by $e$ (\cref{def:node_rewrite}). Then $[\gG]\widesim{\gE}[\gG']$.
\end{prop}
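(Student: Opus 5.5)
The plan is to write $[\gG]$ through the single-substitution form of \cref{prop:stepIII_equals_graph}, isolate the contribution of the rewritten node, and show that replacing it changes $[\gG]$ by finitely many one-step derivations, each an instance of $e$.

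Let $v = v_i^{(j)}$ be the rewritten node, so that $[v]\widesim{e}\tau'$. By definition, $\tau'$ arises from $[v]$ by replacing one occurrence of a subformula $\gamma$ by $\gamma'$, where $\gamma = \gamma'$ instantiates $e$ via some substitution $\xi$, say $\gamma = \epsilon(\xi)$ and $\gamma' = \epsilon'(\xi)$.

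\textbf{Step 1: reduce the graph.} First collapse every layer other than level $j$, using \cref{lem:substitution_collapse_same_formula} in both $\gG$ and $\gG'$. The collapses act identically on the two graphs, because they differ only in the formula at $v$:
\begin{itemize}
\item collapsing layers below $j$ applies the same substitution $\eta_{j-1}$ to both $[v]$ and $\tau'$;
\item collapsing layers above $j$ produces a single output formula $\theta$ in the level-$j$ variables, which is the same for both graphs.
\end{itemize}
Hence
\[
[\gG] = \theta(\zeta)
\quad\text{and}\quad
[\gG'] = \theta(\zeta'),
\]
where $\zeta$ and $\zeta'$ agree except at $x_i$, with $\zeta(x_i) = [v](\eta_{j-1})$ and $\zeta'(x_i) = \tau'(\eta_{j-1})$. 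For this I rely on \cref{lem:substitution_composition} and the chain identity \eqref{eq:chain}. The degenerate cases $j = L$ and $j = 1$ are handled the same way, with $\theta = x_i$ or $\eta_0$ the identity substitution, respectively.

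\textbf{Step 2: a single occurrence.} Show that $[v](\eta_{j-1}) \widesim{e} \tau'(\eta_{j-1})$ in one step. Substitution commutes with subformula replacement: $\tau'(\eta_{j-1})$ arises from $[v](\eta_{j-1})$ by replacing the corresponding occurrence of $\gamma(\eta_{j-1})$ by $\gamma'(\eta_{j-1})$. Moreover $\gamma(\eta_{j-1}) = \epsilon(\xi\blackcirc\eta_{j-1})$ and $\gamma'(\eta_{j-1}) = \epsilon'(\xi\blackcirc\eta_{j-1})$ by \cref{lem:substitution_composition}. This requires the variables of $\epsilon$ to lie in the domain of $\xi$, which is true since instantiations assign every axiom variable. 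So this is again an instantiation of $e$. The claim that substitution maps a subformula occurrence to a subformula occurrence is proved by structural induction, as in the proof of \cref{lem:substitution_composition}.

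\textbf{Step 3: many occurrences.} Let $x_i$ occur $N$ times in $\theta$. Enumerate these occurrences and define intermediate formulae $\theta_0,\ldots,\theta_N$: in $\theta_s$, the first $s$ occurrences are replaced by $\tau'(\eta_{j-1})$ and the remaining ones by $[v](\eta_{j-1})$. Then $\theta_0 = [\gG]$ and $\theta_N = [\gG']$.

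Each passage $\theta_{s-1} \to \theta_s$ replaces, inside one occurrence of $[v](\eta_{j-1})$, the single subformula occurrence from Step 2. Thus $\theta_{s-1}\widesim{e}\theta_s$, and chaining these gives $[\gG]\widesim{\gE}[\gG']$. If $N = 0$, then $[\gG] = [\gG']$ and reflexivity gives the result ($T = 1$).

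\textbf{Main obstacle.} The delicate point is making Steps 2 and 3 rigorous at the string level. One must fix what an ``occurrence'' of $x_i$ is and show that simultaneous substitution places disjoint copies of $[v](\eta_{j-1})$ that can be modified one at a time. I would handle this by a structural induction on $\theta$ that proves the slightly stronger statement: for substitutions $\zeta,\zeta'$ differing only at one variable, whose values are related by one $e$-step, $\theta(\zeta)\widesim{\gE}\theta(\zeta')$. The inductive cases $\lnot$, $\oplus$, and $\odot$ follow by performing the steps in each argument successively, since a replacement inside an argument is a replacement inside the whole formula.
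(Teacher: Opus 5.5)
Your proof is correct and is essentially the paper's: your Step~2 is \cref{lem:change_substitutee2} and your Step~3 (equivalently, the structural induction you sketch) is \cref{lem:change_substitutor2}, with your $\theta$ in the role of the paper's $\beta$. The only difference is that the paper applies the two lemmata in the opposite order, first replacing the copies of $[v]$ in $\beta(\zeta^{(k,k+1)})$ one at a time and then pushing the resulting $e$-steps through $\zeta^{(k-1,k)},\ldots,\zeta^{(0,1)}$ one layer at a time. Two small details: your $\eta_{j-1}$ is what the paper calls $\eta_j$ (in \cref{prop:stepIII_equals_graph} a level-$j$ node formula is flattened by $\eta_j$); and an instantiating substitution need not assign every variable of the axiom, so you should first extend it by the identity on the missing variables, as the paper does, before invoking \cref{lem:substitution_composition}.
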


The proof of \cref{prop:graph2formula} is based on the following two lemmata.

\begin{lem}\label{lem:change_substitutor2}
  Let $e$ be an axiom and let $\zeta = \{x_{i_1}\mapsto \chi_1,\ldots, x_{i_k}\mapsto \chi_k\}$ be a substitution. Let $\tau$ be a formula with variables in $\{x_{i_1},\ldots,x_{i_k}\}$, let $p\in\{1,\ldots,k\}$, and let $\chi_p'$ be a formula with $\chi_p \widesim{e} \chi_p'$. Set $\zeta' = \{x_{i_1}\mapsto \chi_1,\ldots, x_{i_p}\mapsto \chi'_p,\ldots, x_{i_k}\mapsto \chi_k\}$.
  Then $\tau(\zeta)$ is carried to $\tau(\zeta')$ by finitely many derivation steps, each by an application of $e$.
\end{lem}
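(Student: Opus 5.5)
The proof goes by structural induction on $\tau$, in the same way as \cref{lem:substitution_composition}. We prove the slightly sharper claim that $\tau(\zeta)$ is carried to $\tau(\zeta')$ by exactly $N_p(\tau)$ derivation steps, each by $e$. Here $N_p(\tau)$ is the number of occurrences of $x_{i_p}$ in $\tau$; the case $N_p(\tau)=0$ means the two formulae coincide, and zero steps are needed ($T=1$ in \cref{defn:manipulation}).

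\emph{Base cases.} If $\tau\in\{0,1\}$, both substitutions leave $\tau$ unchanged, so $\tau(\zeta)=\tau(\zeta')$. If $\tau=x_{i_j}$ with $j\neq p$, both sides equal $\chi_j$. If $\tau=x_{i_p}$, then $\tau(\zeta)=\chi_p$ and $\tau(\zeta')=\chi_p'$, and $\chi_p\widesim{e}\chi_p'$ is one step by hypothesis.

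\emph{Induction step.} For $\tau=\lnot\tau'$, the induction hypothesis gives a chain $\tau'(\zeta)=\theta_1,\ldots,\theta_T=\tau'(\zeta')$ in which each step replaces one occurrence of a subformula $\gamma$ by $\gamma'$, with $\gamma=\gamma'$ an instantiation of $e$. Prefixing every formula in the chain with $\lnot$ turns it into $\lnot\theta_1,\ldots,\lnot\theta_T$. Each step stays legitimate: an occurrence of $\gamma$ inside $\theta_t$ remains an occurrence of the same subformula inside $\lnot\theta_t$, and replacing it gives $\lnot\theta_{t+1}$. For $\tau=\gamma\oplus\eta$ (and likewise for $\odot$), first run the chain for $\gamma$ inside the context $\square\oplus\eta(\zeta)$, reaching $\gamma(\zeta')\oplus\eta(\zeta)$. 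Then run the chain for $\eta$ inside the context $\gamma(\zeta')\oplus\square$, reaching $\gamma(\zeta')\oplus\eta(\zeta')=\tau(\zeta')$. Concatenating the two chains is legitimate by transitivity, as noted after \cref{defn:manipulation}.

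The only point needing care is the claim that one derivation step survives being placed in a context $C[\square]$. This is the compatibility of ``replace one occurrence of a subformula'' with embedding. A subformula occurrence in $\theta$ is a substring of $\theta$ that is a formula, so it yields a substring of $C[\theta]$ that is the same formula, and substituting at that position gives $C[\theta']$. To avoid any substring ambiguity, we read formulae as fully bracketed, or equivalently work with parse-tree positions, where this is immediate. With this observation in place, the induction goes through mechanically, and the step count is additive over the occurrences of $x_{i_p}$.
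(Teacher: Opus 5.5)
Your proof is correct and is essentially the paper's argument. The paper lets $\ell$ be the number of occurrences of $x_{i_p}$ in $\tau$ and replaces the corresponding copies of $\chi_p$ in $\tau(\zeta)$ by $\chi_p'$ one at a time, each replacement being a single application of $e$; this is the same chain of $N_p(\tau)$ steps that your structural induction assembles (in left-to-right order), with your induction spelling out the context-compatibility of a derivation step that the paper uses tacitly.
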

\begin{proof}
  Let $\ell$ be the number of occurrences of $x_{i_p}$ in $\tau$, finite as $\tau$ is a finite string (\cref{defn:MV terms}). The formulae $\tau(\zeta)$ and $\tau(\zeta')$ agree except at the $\ell$ positions of $x_{i_p}$ in $\tau$, occupied by $\chi_p$ in $\tau(\zeta)$ and by $\chi_p'$ in $\tau(\zeta')$. Replacing $\chi_p$ by $\chi_p'$ at these positions one at a time yields $\tau(\zeta) = \gamma^0,\gamma^1,\ldots,\gamma^{\ell} = \tau(\zeta')$. Each of these replacements, $\gamma^{q-1}\widesim{e}\gamma^{q}$, $q = 1,\ldots,\ell$, is a single application of $e$ (\cref{defn:manipulation}), hence $\ell$ such applications carry $\tau(\zeta)$ to $\tau(\zeta')$.
\end{proof}

\begin{lem}\label{lem:change_substitutee2}
  Let $\gE$ be a set of axioms and let $e\in\gE$. For formulae $\tau, \tau'$ and a substitution $\zeta$, if $\tau\widesim{e} \tau'$, then $\tau(\zeta) \widesim{e} \tau'(\zeta)$. Consequently, if $\tau\widesim{\gE}\tau'$, then $\tau(\zeta)\widesim{\gE}\tau'(\zeta)$.
\end{lem}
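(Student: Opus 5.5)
The single-step statement, $\tau\widesim{e}\tau'\Rightarrow\tau(\zeta)\widesim{e}\tau'(\zeta)$, is proved by unwinding \cref{defn:manipulation} and \cref{defn:axiom}. The consequence for $\gE$ then follows by applying it step by step along a derivation sequence.

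The hypothesis $\tau\widesim{e}\tau'$ means that $\tau'$ arises from $\tau$ by replacing one occurrence of a subformula $\gamma$ by $\gamma'$. Moreover, there is a substitution $\xi$ such that $\gamma=\epsilon(\xi)$ and $\gamma'=\epsilon'(\xi)$, or the same with $\epsilon$ and $\epsilon'$ exchanged, where $e$ is $\epsilon=\epsilon'$. I would write $\tau$ as $C[\gamma]$, with $C$ a context, that is, a formula containing a single hole marking the replaced occurrence; then $\tau'=C[\gamma']$.

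The first step is to check that substitution distributes over contexts, $C[\gamma](\zeta)=C(\zeta)[\gamma(\zeta)]$. This follows by structural induction on $C$, exactly as in the proof of \cref{lem:substitution_composition}, because substitution replaces only variable occurrences and leaves connectives and the hole position intact. It follows that $\tau(\zeta)=C(\zeta)[\gamma(\zeta)]$ and $\tau'(\zeta)=C(\zeta)[\gamma'(\zeta)]$. These two formulae differ by the replacement of one occurrence of the subformula $\gamma(\zeta)$ by $\gamma'(\zeta)$.

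The second step, and the only delicate point, is to show that $\gamma(\zeta)=\gamma'(\zeta)$ is again an instantiation of $e$. By \cref{lem:substitution_composition}, $\gamma(\zeta)=\epsilon(\xi)(\zeta)=\epsilon(\xi\blackcirc\zeta)$, and likewise $\gamma'(\zeta)=\epsilon'(\xi\blackcirc\zeta)$. That lemma requires the variables of $\epsilon$ and $\epsilon'$ to lie in the domain of $\xi$. I would therefore first enlarge $\xi$ so that it maps every variable of $\epsilon,\epsilon'$. Any axiom variable not already in the domain is mapped to itself, which changes neither $\epsilon(\xi)$ nor $\epsilon'(\xi)$.

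A subtlety remains: $\xi\blackcirc\zeta$ applies $\zeta$ only to the substitutors of $\xi$. This is exactly what is needed, since every variable of $\epsilon$ is now in the domain of $\xi$. Hence $\xi\blackcirc\zeta$ witnesses the instantiation, and $\tau(\zeta)\widesim{e}\tau'(\zeta)$.

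For the consequence, take a derivation $\tau=\tau_1\widesim{e_1}\cdots\widesim{e_{T-1}}\tau_T=\tau'$ with $e_t\in\gE$. Applying the single-step result to each link gives $\tau_t(\zeta)\widesim{e_t}\tau_{t+1}(\zeta)$. These links form a derivation from $\tau(\zeta)$ to $\tau'(\zeta)$, so $\tau(\zeta)\widesim{\gE}\tau'(\zeta)$. The case $T=1$ is trivial.

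The main obstacle is the domain bookkeeping in the second step, namely ensuring that the composition $\xi\blackcirc\zeta$ really is the instantiating substitution. The padding of $\xi$ described there resolves it.
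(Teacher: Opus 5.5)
Your proposal is correct and follows essentially the same route as the paper. You pad the instantiating substitution by the identity on the axiom variables outside its domain, use \cref{lem:substitution_composition} to rewrite $(\epsilon(\xi))(\zeta)$ as $\epsilon(\xi\blackcirc\zeta)$ so that the substituted step is again an instantiation of $e$, and then apply the single-step result link by link along the derivation, with your context notation $C[\gamma]$ only making explicit the paper's remark that $\tau(\zeta)$ and $\tau'(\zeta)$ agree except at the rewritten position.
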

\begin{proof}
  Write the axiom $e$ as the logic equation $\epsilon = \epsilon'$ between formulae $\epsilon$, $\epsilon'$ (\cref{defn:axiom}). Since $\tau \widesim{e} \tau'$, there is an occurrence of a subformula $\gamma$ of $\tau$ whose replacement by a formula $\gamma'$ yields $\tau'$, with $\gamma = \gamma'$ an instantiation of $e$ (\cref{defn:manipulation}), that is, $\gamma = \epsilon(\zeta_*)$ and $\gamma' = \epsilon'(\zeta_*)$ for some substitution $\zeta_*$ (\cref{defn:axiom}). Extending $\zeta_*$ by the identity mapping on the variables of $\epsilon$ and $\epsilon'$ outside its domain leaves $\epsilon(\zeta_*)$ and $\epsilon'(\zeta_*)$ unchanged, so that the variables of $\epsilon$ and $\epsilon'$ may be assumed to lie in the domain of $\zeta_*$. Applying $\zeta$ to $\tau$ turns this occurrence into an occurrence of $\big(\epsilon(\zeta_*)\big)(\zeta) = \epsilon(\zeta_*\blackcirc\zeta)$ in $\tau(\zeta)$ (\cref{lem:substitution_composition}). Since $\tau'$ carries $\epsilon'(\zeta_*)$ at this position and agrees with $\tau$ elsewhere, $\tau'(\zeta)$ carries $\big(\epsilon'(\zeta_*)\big)(\zeta) = \epsilon'(\zeta_*\blackcirc\zeta)$ (\cref{lem:substitution_composition} again) at the same position and agrees with $\tau(\zeta)$ elsewhere. As $\epsilon(\zeta_*\blackcirc\zeta) = \epsilon'(\zeta_*\blackcirc\zeta)$ instantiates $e$, the replacement of $\epsilon(\zeta_*\blackcirc\zeta)$ by $\epsilon'(\zeta_*\blackcirc\zeta)$ is a single derivation step on $\tau(\zeta)$, so that $\tau(\zeta)\widesim{e}\tau'(\zeta)$. \cref{defn:axiom} admits both orientations of an instantiation, and the case $\gamma = \epsilon'(\zeta_*)$, $\gamma' = \epsilon(\zeta_*)$ is treated identically. The second claim follows by applying the first to each step of any derivation $\tau = \tau_1\widesim{e_1}\cdots\widesim{e_{T-1}}\tau_T = \tau'$ with $e_t\in\gE$ (\cref{defn:manipulation}).
\end{proof}

\begin{proof}[Proof of \cref{prop:graph2formula}]
  Let $L$ be the depth of $\gG$ and let the node rewrite replace the formula $[v]$ of the node $v$ by $\tau'$ with $[v]\widesim{e}\tau'$.
  If $v = \vout$, then $[\gG] = [\vout](Z)$ by~\eqref{eq:chain} and $[\gG'] = \tau'(Z)$, with the substitution $Z = \zeta^{(L-1,L)}\blackcirc\cdots\blackcirc\zeta^{(0,1)}$, and $[\gG]\widesim{e}[\gG']$ by \cref{lem:change_substitutee2}.
  Otherwise, let $v$ be the $i$-th node at level $k$, $1\leq k\leq L-1$, and set $\beta = [\vout](\zeta^{(L-1,L)}\blackcirc\cdots\blackcirc\zeta^{(k+1,k+2)})$ (for $k = L-1$, $\beta = [\vout]$), a formula in the level-$k$ variables, so that, by Lemmata~\ref{lem:substitution_composition} and~\ref{lem:substitution_associativity},
  \begin{equation}\label{eq:beta-chain}
    [\gG] = \beta(\zeta^{(k,k+1)})(\zeta^{(k-1,k)})\cdots(\zeta^{(0,1)}).
  \end{equation}
  Let $\widetilde{\zeta}$ be obtained from
  \[
    \zeta^{(k,k+1)} = \{x_1\mapsto[v_1^{(k)}],\ldots,x_{d_k}\mapsto[v_{d_k}^{(k)}]\}
  \]
  by replacing the substitutor of $x_i$, which is $[v_i^{(k)}] = [v]$, by $\tau'$, so that
  \begin{equation}\label{eq:beta-chain-prime}
    [\gG'] = \beta(\widetilde{\zeta})(\zeta^{(k-1,k)})\cdots(\zeta^{(0,1)}).
  \end{equation}
  By \cref{lem:change_substitutor2}, $\beta(\zeta^{(k,k+1)})$ is carried to $\beta(\widetilde{\zeta})$ by finitely many derivation steps, each by an application of $e$, and hence $\beta(\zeta^{(k,k+1)})\widesim{\gE}\beta(\widetilde{\zeta})$ as $e\in\gE$. Applying \cref{lem:change_substitutee2} with $\tau = \beta(\zeta^{(k,k+1)})$, $\tau' = \beta(\widetilde{\zeta})$, and $\zeta = \zeta^{(k-1,k)}$, then with $\zeta = \zeta^{(k-2,k-1)}$, and so on down to $\zeta^{(0,1)}$, gives $\beta(\zeta^{(k,k+1)})(\zeta^{(k-1,k)})\cdots(\zeta^{(0,1)})\widesim{\gE}\beta(\widetilde{\zeta})(\zeta^{(k-1,k)})\cdots(\zeta^{(0,1)})$. The left-hand side is $[\gG]$ by~\eqref{eq:beta-chain} and the right-hand side is $[\gG']$ by~\eqref{eq:beta-chain-prime}, so that $[\gG]\widesim{\gE}[\gG']$.
\end{proof}

\end{document}